\definecolor{red}{HTML}{E51400}  
\definecolor{blue}{HTML}{0050EF} 
\definecolor{green}{HTML}{008A00} 
\definecolor{purple}{HTML}{AA00FF} 
\definecolor{dark-red}{rgb}{0.4, 0.15, 0.15}
\definecolor{dark-blue}{rgb}{0.15, 0.15, 0.4}
\definecolor{medium-red}{rgb}{0.5, 0, 0}
\definecolor{medium-blue}{rgb}{0, 0, 0.5}
\definecolor{light-red}{rgb}{0.7, 0, 0}
\definecolor{light-blue}{rgb}{0, 0, 0.7}
\newtheorem{theorem}{\bf Theorem}
\newtheorem{lemma}{\bf Lemma}
\newtheorem{condition}{\bf Condition}
\newtheorem{definition}{\bf Definition}
\theoremstyle{definition}
\newtheorem{remark}{\bf Remark}
\definecolor{red}{HTML}{E51400} 
\definecolor{blue}{HTML}{0050EF} 
\definecolor{green}{HTML}{008A00} 
\definecolor{purple}{HTML}{AA00FF} 
\definecolor{orange}{HTML}{FF7F00}
\definecolor{gray}{HTML}{848482}
\definecolor{Gray}{gray}{0.85}
\definecolor{LightGray}{gray}{0.96}
\newcommand{\constlabel}[1]{\pageref*{sec:unknown_trans},\arabic{#1}}
\DeclareMathOperator*{\argmax}{argmax}
\newcommand{\ubar}[1]{\underaccent{\bar}{#1}}
\newcommand{\norm}[1]{\left\lVert#1\right\rVert}
\newcommand{\cS}{\mathcal{S}}
\newcommand{\abs}[1]{\left| #1 \right|}
\newcommand{\uabs}[1]{\left[ #1 \right]_+}
\newcommand{\MTPMr}{{MTPM+}}
\newcommand{\R}{\mathbb{R}}
\newcommand{\E}{\mathbb{E}}
\newcommand{\Var}{{\rm Var}}
\newcommand{\I}{\mathbb{I}}
\newcommand{\bc}{\boldsymbol{c}}
\newcommand{\be}{\boldsymbol{e}}
\newcommand{\bp}{\boldsymbol{p}}
\newcommand{\bu}{\boldsymbol{u}}
\newcommand{\bV}{\boldsymbol{V}}
\newcommand{\bv}{\boldsymbol{v}}
\newcommand{\bw}{\boldsymbol{w}}
\newcommand{\bX}{\boldsymbol{X}}
\newcommand{\bx}{\boldsymbol{x}}
\newcommand{\by}{\boldsymbol{y}}
\newcommand{\bmu}{\boldsymbol{\mu}}
\newcommand{\bDelta}{\boldsymbol{\Delta}}
\newcommand{\cA}{\mathcal{A}}
\newcommand{\cC}{\mathcal{C}}
\newcommand{\cD}{\mathcal{D}}
\newcommand{\cE}{\mathcal{E}}
\newcommand{\cF}{\mathcal{F}}
\newcommand{\cR}{\mathcal{R}}
\newcommand{\cO}{{\mathcal{O}}}
\newcommand{\cP}{{\mathcal{P}}}
\newcommand{\cZ}{{\mathcal{Z}}}
\newcommand{\logL}{\log\left(\frac{SAHT}{\delta'}\right)}
\newcommand{\tDelta}{\Delta}
\newcommand{\reg}{\text{Reg}}
\newcommand{\df}{\text{def}}
\newcommand*{\rom}[1]{\expandafter\@slowromancap\romannumeral #1@}
\newcommand{\ts}[1]{}
\newcommand{\compilefullversion}{true}
	\newcommand{\OnlyInFull}[1]{}
	\newcommand{\OnlyInShort}[1]{#1}
	\newcommand{\OnlyInFull}[1]{#1}%
	\newcommand{\OnlyInShort}[1]{}%
\newcommand{\compilehidecomments}{false}
	\newcommand{\wei}[1]{}
	\newcommand{\xutong}[1]{}
	\newcommand{\jinhang}[1]{}
	\newcommand{\siwei}[1]{}
        \newcommand{\carlee}[1]{}
\newcommand{\wei}[1]{{\color{blue}{[Wei: #1]}}}
\newcommand{\xutong}[1]{{\color{green} [Xutong: #1]}}
\newcommand{\jinhang}[1]{{\color{orange} [\text{Jinhang:} #1]}}
\newcommand{\siwei}[1]{{\color{red} [\text{Siwei:} #1]}}
\newenvironment{talign*}
 {\csname align*\endcsname}
 {\endalign}
\icmltitlerunning{Combinatorial Multivariant Multi-Armed Bandits}
\begin{document}

\twocolumn[
\icmltitle{Combinatorial Multivariant Multi-Armed Bandits \\with Applications to Episodic Reinforcement Learning and Beyond}





\begin{icmlauthorlist}
\icmlauthor{Xutong Liu}{cuhk,umass}
\icmlauthor{Siwei Wang}{msra}
\icmlauthor{Jinhang Zuo}{umass,caltech}
\icmlauthor{Han Zhong}{pku}
\icmlauthor{Xuchuang Wang}{umass}
\icmlauthor{Zhiyong Wang}{cuhk}
\icmlauthor{Shuai Li}{sjtu}
\icmlauthor{Mohammad Hajiesmaili}{umass}
\icmlauthor{John C.S. Lui}{cuhk}
\icmlauthor{Wei Chen}{msra}
\end{icmlauthorlist}

\icmlaffiliation{cuhk}{The Chinese University of Hong Kong, Hong Kong SAR, China}
\icmlaffiliation{umass}{University of Massachusetts Amherst,  Massachusetts, United States}
\icmlaffiliation{caltech}{California Institute of Technology, California, United States}
\icmlaffiliation{pku}{Peking University, Beijing, China}
\icmlaffiliation{msra}{Microsoft Research, Beijing, China}
\icmlaffiliation{sjtu}{Shanghai Jiao Tong University, Shanghai, China}
\icmlcorrespondingauthor{Shuai Li}{shuaili8@sjtu.edu.cn}
\icmlcorrespondingauthor{Siwei Wang}{siweiwang@microsoft.com}
\icmlcorrespondingauthor{Wei Chen}{weic@microsoft.com}
\icmlkeywords{Machine Learning, ICML}

\vskip 0.3in
]



\printAffiliationsAndNotice{}  

\begin{abstract}

We introduce a novel framework of combinatorial multi-armed bandits (CMAB) with multivariant and probabilistically triggering arms (CMAB-MT), where the outcome of each arm is a $d$-dimensional multivariant random variable and the feedback follows a general arm triggering process. Compared with existing CMAB works, CMAB-MT not only enhances the modeling power but also allows improved results by leveraging distinct statistical properties for multivariant random variables. For CMAB-MT, we propose a general 1-norm multivariant and triggering probability-modulated smoothness condition, and an optimistic CUCB-MT algorithm built upon this condition. Our framework can include many important problems as applications, such as episodic reinforcement learning (RL) and probabilistic maximum coverage for goods distribution, all of which meet the above smoothness condition and achieve matching or improved regret bounds compared to existing works. Through our new framework, we build the first connection between the episodic RL and CMAB literature, by offering a new angle to solve the episodic RL through the lens of CMAB, which may encourage more interactions between these two important directions.

\end{abstract}

\section{Introduction}
The stochastic multi-armed bandit (MAB)~\cite{robbins1952some,auer2002finite} is a classical model for sequential decision-making that has been widely studied (cf.~\citet{slivkins2019introduction,lattimore2020bandit}). As a noteworthy extension of MAB, combinatorial multi-armed bandits (CMAB) have drawn considerable attention due to their rich applications in domains such as online advertising, network optimization, and healthcare systems \cite{gai2012combinatorial, kveton2015combinatorial, chen2013combinatorial, chen2016combinatorial,wang2017improving,merlis2019batch,liu2021multi,zuo2021combinatorial,zuo2022online}.
In CMAB, the learning agent chooses a combinatorial action (often referred to as super arm) in each round. This combinatorial action would trigger a set of arms to be pulled simultaneously, and the outcomes of these arms are observed as feedback (typically known as semi-bandit feedback). The agent then receives a reward, which can be a general function of the pulled arms' outcomes, with the summation function being the most common example.
The agent's goal is to minimize the expected \textit{regret}, which quantifies the difference in expected cumulative rewards between always selecting the best action (i.e., the action with the highest expected reward) and following the agent's own policy. CMAB poses the challenge of balancing exploration and exploitation while dealing with an exponential number of combinatorial actions.

To model a wider range of application scenarios where the combinatorial action may probabilistically trigger arms, 
\citet{chen2016combinatorial} first introduce a generalization of CMAB, known as CMAB with probabilistically triggered arms (CMAB-T). This extension successfully encompasses a broader range of applications, including cascading bandits~\cite{combes2015combinatorial} and online influence maximization (OIM)~\cite{wen2017online}. Subsequently, \citet{wang2017improving,liu2022batch} improve the regret bounds of \cite{chen2016combinatorial} by introducing novel triggering probability modulated (TPM) smoothness conditions and/or variance adaptive algorithms. Further elaboration on related works can be found in \cref{apdx_sec:related_work}.

Despite the expanded modeling capabilities and improved regret bounds, all prior CMAB-T frameworks assume that each arm's outcome is a univariate random variable, upon which they base their smoothness conditions, algorithms, and analyses. In real-world applications, arm outcomes can be $d$-dimensional multivariate random variables with distinct statistical properties. One example is the indivisible goods distribution \cite{alkan1991fair,chevaleyre2017distributed}, where each good can be distributed to one of $d$ target users, forming a multivariant random variable.
Another critical example is episodic reinforcement learning (RL) \cite{jaksch2010near,azar2017minimax,zanette2019tighter,neu2020unifying},  
where in each episode the agent starts from an initial state $s_1$ and transits through traverses a series of $H$ states $(s_h)_{h\in[H]}$ by taking action $a_h$ upon each encountered state $s_h$. Each transition in this scenario is a multivariant random variable, with outcomes spanning the state space $S$.
Existing CMAB-T approaches cannot effectively model these situations with appropriate smoothness conditions. Specifically, they resort to treating each multivariate arm as $d$ separate arms, thereby neglecting the unique statistical characteristics of multivariate random variables and yielding suboptimal regret performance.

\textbf{Our Contributions.} 
We introduce a new CMAB-MT framework, which inherits the arm triggering mechanism of CMAB-T while accommodating $d$-dimensional multivariate random variables as arm outcomes. The key challenge lies in determining the contributions of each dimension of the arms to the overall regret and effectively leveraging the multivariate statistical characteristics. To address this challenge, we first introduce a novel 1-norm MTPM smoothness condition that assigns varying weights to different arms and dimensions, which flexibly covers existing 1-norm TPM smoothness conditions of CMAB-T and accommodates new applications such as episodic RL. Then, we construct an \textit{action-dependent} confidence region that can incorporate problem-specific multivariant statistical properties. Leveraging this confidence region, we devise the CUCB-MT algorithm with a general joint oracle and establish the first regret bound for any CMAB-MT problem. Our new analysis combines regret decomposition techniques from the RL domain and sharp CMAB techniques to deal with arm triggering and regret amortization, which can yield matching or improved results for applications within and beyond this study.  

To show the applicability of our framework, we first show that episodic RL fits into the CMAB-MT framework by mapping each transition kernel as an arm and employing the occupancy measure as the triggering probability. Leveraging this insight, we give two CUCB-MT algorithms that can achieve $\tilde{O}(\sqrt{H^4S^2AT})$ and $\tilde{O}(\sqrt{H^3SAT})$ regret based on distinct smoothness conditions, with the latter matches the lower bound \cite{jaksch2010near} up to logarithmic factors. Remarkably, our regret bound improves at least a factor of $O(\log ^{1/2}T)$ for the leading regret term compared with existing works \cite{zanette2019tighter,zhang2021isreinforcement,zhang2023settling} owing to our sharp CMAB analysis. As a by-product, our framework gives a gap-dependent regret that scales with $O(\log T)$.
Notably, episodic RL is widely known to be a strict generalization of the MAB and thus much harder to solve than MAB due to the state transition and long-term reward structure.
Our work makes the first attempt to view episodic RL as an instance of CMAB, and offers a new angle for addressing episodic RL challenges through the lens of CMAB. Our results highlight that episodic RL is not significantly harder than CMAB-MT problems, and
build a valuable connection between the RL and CMAB that may encourage more interactions between these pivotal research directions.

Furthermore, we explore another application beyond episodic RL that fits into our framework: the probabilistic maximum coverage for goods distribution (PMC-GD). For PMC-GD, we overcome the challenge of identifying a tight confidence region based on its unique statistical property and finding the efficient implementation of the joint oracle. To this end, our framework gives a regret bound that improves the best-known variance-adaptive algorithm~\cite{merlis2019batch,liu2022batch} by a factor of $\tilde{O}(\sqrt{|V|/k})$, where $|V|$ and $k$ are the numbers of target nodes and selected source nodes, and $V\gg k$ in most application scenarios \cite{chen2016combinatorial,liu2023variance}. 

\section{Combinatorial MAB with Multivariant and  Probabilistically Triggering Arms}
In this section, we present the combinatorial multi-armed bandit with multivariant and probabilistically triggering arms (or CMAB-MT for short), which generalizes the previous CMAB-T framework to handle $d$-dimensional multivariant arm outcomes. CMAB-MT covers not only existing instances of CMAB-T with univariant arms, but more importantly, the episodic RL as a new example.

\textbf{Notations.} 
We use \textbf{boldface} symbols for vectors $\bv\in \R^d$.
For matrix $\bv\in \R^{m\times d}$, we treat $\bv$ as a long column vector that sequentially stacks $m$ sub-vectors of dimension $d$ and $\bv_i\in\R^d$ is the $i$-th sub-vector for $i\in[m]$. For function $V:[d]\rightarrow \R$, we use $\boldsymbol{V}$ to denote the vector $(V(x))_{x\in [d]}\in \R^d$. For any set $S$, we define probability simplex $\bDelta_S=\{\bp\in [0,1]^{|S|}: \sum_{i\in S}p(i)=1\}$. We use $\be_i \in \R^d$ to denote the vector whose $i$-th entry is $1$ and $0$ elsewhere. For vector $\bv \in \R^d$, we use $|\bv|$ to denote the vector $(|v_i|)_{i\in[d]}$.

\subsection{Framework Setup}
\textbf{Problem Instance.} A CMAB-MT problem instance can be described by a tuple $([m], d, \Pi, \cD, D_{\text{trig}},R)$,
	where $[m]=\{1,2,...,m\}$ is the set of multivariant base arms; 
 $d$ is the dimension of multivariant base arm's random outcome\footnote{For simplicity, we assume dimensions are the same, yet it is easy to generalize $d$ to $d_i$ for arm $i \in [m]$.} (with bounded support $[0,1]^d$), i.e., outcome $\bX_i=(X_{i,1}, ..., X_{i,d}) \in [0,1]^d$;
	$\Pi$ is the set of eligible combinatorial actions and $\pi \in \Pi$ is a combinatorial action;\footnote{When $\Pi$ is a collection of subsets of $[m]$, we
	call action $\pi\in \Pi$ a super arm. Otherwise, we treat $\Pi$ as a general action space, same as in \cite{wang2017improving}.}
	$\cD$ is the set of possible distributions over the outcomes of base arms with support $[0,1]^{m\times d}$;
$ D_{\text{trig}}$ is the probabilistic triggering function and $R$ is the reward function, which shall be specified shortly after.

\textbf{Learning Process.} In CMAB-MT, the learning agent interacts with the unknown environment in a sequential manner as follows.
First, the environment chooses a distribution $D \in \cD$ unknown to the agent.
Then, at round $t=1,2,...,T$, the agent selects a combinatorial action $\pi_t \in \Pi$ and the environment draws from the unknown distribution $D$ random outcome vectors $\bX_t=(\bX_{t,1},...,\bX_{t,m})\in [0,1]^{m\times d}$ for all $m$ multivariate base arms.
Note that the outcome $\bX_t$ is assumed to be independent from outcomes generated in previous rounds, but outcomes $\bX_{t,i}$ and $\bX_{t,j}$ in the same round could be correlated.
Let $D_{\text{trig}}(\pi,\bX)$ be a distribution over all possible subsets of $[m]$.
When the action $\pi_t$ is played on the outcome $\bX_t$, base arms in a random set $\tau_t \sim D_{\text{trig}}(\pi_t, \bX_t)$ are triggered, 
	meaning that the multivariant outcomes of arms in $\tau_t$, i.e., $(\bX_{t,i})_{i\in \tau_t}$ are revealed as the feedback to the agent, and are involved in determining the reward of action $\pi_t$. 
Function $D_{\text{trig}}$ is referred to as the \textit{probabilistic triggering function}.
At the end of the round $t$, the agent will receive a non-negative reward $R(\pi_t, \bX_t, \tau_t)$, determined by $\pi_t,
\bX_t$ and $\tau_t$.

\textbf{Learning Objective.} 
The goal of CMAB-MT is to accumulate as much reward as possible over $T$ rounds, by learning distribution $D$ or its parameters.
Let vector $\bmu=(\bmu_1,...,\bmu_m)\in [0,1]^{m \times d}$, where $\bmu_i=(\mu_{i,1}, ..., \mu_{i,d})\in [0,1]^d$ denote the mean vector of base arm $i$'s multivariant outcome, i.e., $\bmu_i=\E_{\bX_{t}\sim D}[\bX_{t,i}]$.
Similar to the CMAB-T~framework~\cite{wang2017improving,liu2022batch,liu2023contextual}, we assume that the expected reward $\E[R(\pi,\bX,\tau)]$ is a function of 
the unknown mean vector $\bmu$, where the expectation is taken over the randomness of $\bX\sim D$ and $\tau \sim D_{\text{trig}}(\pi,\bX)$, and therefore we use $r(\pi;\bmu)\overset{\df}{=}\E[R(\pi,\bX,\tau)]$ to denote the expected reward. 
To allow the algorithm to estimate the mean $\bmu_i$ directly from samples, we assume the outcome does not depend on whether the arm $i$ is triggered, i.e., $\E_{\bX \sim D, \tau \sim D_{\text{trig}}(S,\bX)}[\bX_i | i\in \tau]=\E_{\bX\sim D}[\bX_i]$.
The performance of a learning algorithm ALG
is measured by its {\em regret}, defined as the difference of the expected cumulative reward between always playing the best action $\pi^* \overset{\df}{=} \argmax_{\pi \in \Pi}r(\pi;\bmu)$ and that of playing actions chosen by the algorithm.

For many reward functions $r(\pi;\bmu)$, it is NP-hard to compute the exact $\pi^*$ even when $\bmu$ is known~\cite{chen2013combinatorial,wang2017improving,liu2022batch}, so we assume that one has access to an $(\alpha, \beta)$-approximation oracle $\tilde{\cO}$. $\tilde{\cO}$ takes a confidence region function $\cC$ that maps any action $\pi\in \Pi$ to possible parameters $\cC(\pi)\subseteq [0,1]^{m\times d}$ as input, and outputs an action-parameter pair $(\tilde{\pi},\tilde{\bmu})=\tilde{\cO}(\cC)$ such that $\tilde{\pi}\in \Pi$, $\tilde{\bmu}\in \cC(\tilde{\pi})$ and $(\tilde{\pi},\tilde{\bmu})$ is an $\alpha$-approximation with probability at least $\beta$, i.e., $\Pr\left[r(\tilde{\pi},\tilde{\bmu})\ge \alpha \cdot \max_{\pi\in \Pi, \bmu\in \cC(\pi)}r(\pi;\bmu)\right] \ge \beta$.
Formally, the $T$-round $(\alpha, \beta)$-approximate regret is defined as
\begin{equation}
    \text{Reg}(T;\alpha, \beta, \bmu)= T \cdot \alpha\beta \cdot r(\pi^*;\bmu)-\E\left[\sum_{t=1}^Tr(\pi_t;\bmu)\right],
\end{equation}
where the expectation is taken over the randomness of the outcomes $\bX_1, ..., \bX_T$, the triggered sets $\tau_1, ..., \tau_T$, as well as the randomness of the algorithm ALG itself.

\begin{remark}[CMAB-MT v.s. CMAB-T]
    CMAB-MT is more general and reduces to CMAB-T when $d=1$. Conversely, for any CMAB-MT instance, one can treat each multi-variant arm $i\in[m]$ as $d$ separate arms $(i,j)_{j\in [d]}$ with unknown mean $\mu_{i,j}$ and use the CMAB-T model to learn these $md$ arms, but in this way, one cannot enjoy some nice statistical property (e.g., concentration properties) by treating them as a whole. Take  PMC-GD in \cref{sec:CDP} as an example, using CMAB-MT can improve the regret up to a factor of $O(\sqrt{d})$, owing to tighter concentration inequality around the mean vector $\bmu_i$ instead of using $d$ separate concentration inequality around $\mu_{i,j}$. 
\end{remark}

\begin{remark}[Computational complexity of joint oracle $\tilde{\cO}$]
    Different from the classical CMAB $(\alpha,\beta)$-approximation oracle $\cO:[0,1]^{m\times d}\rightarrow \Pi$ that takes a single parameter (e.g., mean vector $\bmu$) as input and optimizes over the action space $\Pi$, the joint oracle $\tilde{\cO}$ can optimize over the joint action-parameter space $(\pi, \cC(\pi))_{\pi\in\Pi}$. 
    In the worst case, one has to compute the best reward $r^*(\pi)=\max_{\bmu\in \cC(\pi)}r(\pi;\bmu)$ for $\pi\in \Pi$, and then enumerate over $\pi\in \Pi$ to get the optimal $(\pi^*,\bmu^*)$ in time $O(|\Pi|)$. Nevertheless, the joint oracle has been used in many CMAB \cite{combes2015combinatorial,degenne2016combinatorial} and linear contextual bandits (Sec. 19.3.1 in  \citet{lattimore2020bandit}) works. In this paper, we will show that all CMAB-MT applications considered, i.e., the episodic RL (\cref{sec:RL}) and PMC-GD (\cref{sec:CDP}) have efficient implementations for the joint oracle. 
\end{remark}

\subsection{Key Quantities and Conditions} 
In the CMAB-MT model, there are several quantities and conditions that are crucial to the subsequent study.
First, we define \textit{{triggering probability}} $q_i^{D, D_{\text{trig}}, \pi}$ as the probability that base arm $i$ is triggered when the combinatorial action is $\pi$, the outcome distribution is $D$, and the probabilistic triggering function is $D_{\text{trig}}$.
Since $D_{\text{trig}}$ is always fixed in a given application context, and $D$ often determines the triggering probability via its mean $\bmu$ in most cases, we use $q_i^{\bmu,\pi}$ to denote  $q_i^{D, D_{\text{trig}}, \pi}$ for simplicity.
Triggering probabilities  $q_i^{\bmu,\pi}$'s are crucial for the triggering probability modulated bounded smoothness conditions to be defined below. Second, we define the batch-size $K\overset{\df}{=}\max_{\pi \in \Pi}\sum_{i\in[m]}q_i^{\bmu,\pi}$ as the maximum expected number of arms that can be triggered. Note that this definition is much smaller than $K'\overset{\df}{=}\max_{\pi \in \Pi}\sum_{i\in[m]}\I\{q_i^{\bmu,\pi}>0\}$ originally defined in \cite{wang2017improving}. For example, in episodic RL, this difference saves a factor of $S$, i.e., $K'=SH$ and $K=H$.

Owing to the nonlinearity and the combinatorial structure of the reward, it is essential to give some conditions for the reward function in order to achieve any meaningful regret bounds~\cite{chen2013combinatorial, chen2016combinatorial, wang2017improving, degenne2016combinatorial,merlis2019batch}. 
In this paper, we consider the smoothness condition as follows. 


\begin{condition}[1-norm multivariant and triggering probability modulated (MTPM) smoothness condition]\label{cond:general_smooth}
We say that a CMAB-MT problem satisfies 1-norm MTPM smoothness condition, if there exist weight vectors $\bw_i^{\tilde{\bmu},\pi}\in [0,\bar{w}]^d$ for $\tilde{\bmu}\in[0,1]^{m\times d},\pi\in\Pi,i\in[m]$ such that, for any two distributions $\tilde{D},D\in \cD$ with mean $\tilde{\bmu}, \bmu \in [0,1]^{m \times d}$, and for any action $\pi\in \Pi$, we have
\begin{align}\label{eq:abs_smooth}
\abs{r(\pi;\tilde{\bmu})-r(\pi;\bmu)}\le \sum_{i\in [m]}q_i^{\bmu,\pi} \abs{\abs{\tilde{\bmu}_i- \bmu_i}^{\top}\bw_i^{\tilde{\bmu},\pi}}.
\end{align}
\end{condition}
Furthermore, if for any two distributions $\tilde{D},D\in \cD$ with mean $\tilde{\bmu}, \bmu \in [0,1]^{m \times d}$, and for any action $\pi\in \Pi$ we have
\begin{align}\label{eq:general_smooth}
    \abs{r(\pi;\tilde{\bmu})-r(\pi;\bmu)}\le \sum_{i\in [m]}q_i^{\bmu,\pi}  \abs{(\tilde{\bmu}_i-\bmu_i)^{\top}\bw_i^{\tilde{\bmu},\pi}},
\end{align}
then we say CMAB-MT problem satisfies 1-norm MTPM+ smoothness condition.


\begin{remark}[Intuitions of Condition~\ref{cond:general_smooth}]The 1-norm MTPM smoothness condition aims to bound the reward difference caused by the parameter changing from $\bmu$ to $\tilde{\bmu}$. Intuitively, we use $\abs{\tilde{\bmu}_i- \bmu_i}^{\top}\bw_i^{\tilde{\bmu},\pi}$ to characterize the parameter difference for each multivariant base arm $i$. Instead of directly using the 1-norm distance $\norm{\bmu_i-\tilde{\bmu}_i}_1$, we use a refined weighted 1-norm where each dimension's difference, each dimension's difference $\abs{\mu_{i,j}-\tilde{\mu}_{i,j}}$ is weighted by $w_{i,j}^{\tilde{\bmu},\pi}$ for $j\in[d]$. 
Then, each arm $i$'s parameter difference is re-weighted by the triggering probability $q_i^{\bmu,\pi}$. Intuitively, for base arm $i$ that is unlikely to be triggered/observed (small $q_i^{\bmu,\pi}$), Condition~\ref{cond:general_smooth} ensures that a large change in $\bmu_i$ 
only causes a small change (multiplied by $q_i^{\bmu,\pi}$) in the reward, and thus one does not need to pay extra regret to observe such arms. 
 Notice that $q_i^{\bmu,\pi}$ and $\bw_i^{\tilde{\bmu},\pi}$ are related to $\bmu$ and $\tilde{\bmu}$, respectively, to keep the balance of $\bmu,\tilde{\bmu}$, and the condition still holds if one exchanges the $\bmu$ and $\tilde{\bmu}$ of $q_i^{\bmu,\pi}$ and $\bw_i^{\tilde{\bmu},\pi}$.
The intuition for 1-norm MTPM+ is similar to 1-norm MTPM, but
1-norm MTPM+ condition is stronger since any CMAB-MT instance satisfies 1-norm MTPM+ condition with $\bw_i^{\tilde{\bmu},\pi}\in [0,\bar{w}]^d$ also satisfies 1-norm MTPM condition with the same $\bw_i^{\tilde{\bmu},\pi}$ , owing to the fact that $\abs{(\tilde{\bmu}_i- \bmu_i)^{\top}\bw_i^{\tilde{\bmu},\pi}}\le \abs{\tilde{\bmu}_i- \bmu_i}^{\top}\bw_i^{\tilde{\bmu},\pi}$. In \cref{sec:RL}, we will show episodic RL satisfies the stronger 1-norm \MTPMr, which is the key to achieving minimax regret bound. On the other hand, the weaker 1-norm MTPM condition is easier to satisfy and can potentially cover more applications. 
\end{remark}

\begin{remark}[Instances and extensions of 1-norm MTPM condition]\label{rmk:CMAB-T instance}
1-norm MTPM/MTPM+ smoothness condition reduces to $B_1$ bounded 1-norm TPM condition in the CMAB-T framework \cite{wang2017improving} when $d=1$ and $w_i^{\tilde{\bmu},\pi}=B_1$, covering all CMAB-T problems as instances. 
More importantly, the 1-norm MPTM/MTPM+ smoothness condition covers the smoothness of value functions for episodic RL in \cref{sec:RL_sec_loose} and \cref{sec:tight_RL}.
1-norm MTPM/MTPM+ can also be transformed to the infinity norm or other norms, e.g., $\abs{r(\pi;\tilde{\bmu})-r(\pi;\bmu)}\le \max_{i\in [m]}q_i^{\bmu,\pi} \cdot \abs{\tilde{\bmu}_i-\bmu_i}^{\top} \bw_i^{\tilde{\bmu},\pi}$. 
When $d=1$ and $w_i^{\tilde{\bmu},\pi}=B_{\infty}$, $\ell_\infty$-norm MTPM reduces to the $B_{\infty}$-bounded max-norm smoothness condition \cite{chen2013combinatorial}. 
\end{remark}



    

\section{Multivariant CUCB Algorithm}
For CMAB-MT, we give a combinatorial upper confidence bound algorithm with a general joint oracle (CUCB-MT) in \Cref{alg:CUCB-MT}. 
Since the CUCB-MT algorithm and its analysis are slightly different for problems satisfying the 1-norm MPTM condition in \cref{eq:abs_smooth} and the stronger 1-norm MPTM+ condition in \cref{eq:general_smooth}, we unify the notation and for any $\bu,\bv\in \R^d$ use $[\bu-\bv]_+\overset{\df}{=}\abs{\bu-\bv}$ for the former problems and $[\bu-\bv]_+\overset{\df}{=}(\bu-\bv)$ for the latter problems.

CUCB-MT utilizes the principle of optimism in the face of uncertainty.
In each round $t$, it first constructs a function $\cC$, where $\cC_t(\pi) \subseteq [0,1]^{m\times d}$ are action-dependent confidence region around the empirical mean $\hat{\bmu}_{t-1}$.
In this work, we assume the $\cC_t(\pi)$ for any $\pi\in\Pi$ is defined as: 
\begin{align}\label{eq:conf_set}
   \cC_t(\pi)\overset{\df}{=}\left\{\tilde{\bmu}:\abs{\uabs{\tilde{\bmu}_i- \hat{\bmu}_{t-1,i}}^{\top}\bw_{i}^{\tilde{\bmu},\pi}}\le \phi_{t,i}, \forall i\in[m]\right\},
\end{align}
where $\bw_i^{\tilde{\bmu},\pi}$ are weights specified by Condition~\ref{cond:general_smooth}, $\phi_{t,i}$ are confidence radius defined as $\phi_{t,i}= F_{t,i}\sqrt{\frac{1}{N_{t-1,i}}}+I_{t,i}\frac{1}{N_{t-1,i}}$, and $F_{t,i}, I_{t,i}$ are problem-specific values that will be specified for subsequent applications.

CUCB-MT then selects an optimistic action-parameter pair $(\pi_t,\tilde{\bmu}_t)$ with the help of the joint oracle $\tilde{\cO}$. 
Note that the joint oracle is determined by the confidence region $\cC_t$ and the reward function $r(\pi;\bmu)$. 
The agent then plays the selected combinatorial action $\pi_t$, which will trigger a set of multivariant base arms $\tau_t$ whose $d$-dimensional outcomes are observed. Finally, CUCB-MT updates the statistics and historical information accordingly to improve future decisions.
Note that though the form of $\cC_t$ and the joint oracle are abstracted out in CUCB-MT, we will give concrete applications as examples in \Cref{sec:RL} and \cref{sec:beyond_RL}, where determining $\cC_t$ and $\tilde{\cO}$ serve as key ingredients of efficient algorithms and tight regret bounds.

\begin{algorithm}[t]
	\caption{CUCB-MT: Combinatorial Upper Confidence Bound Algorithm for CMAB-MT}\label{alg:CUCB-MT}
			\resizebox{.94\columnwidth}{!}{
\begin{minipage}{\columnwidth}
	\begin{algorithmic}[1]
	    \STATE {\textbf{Input:}} Base arms $[m]$, dimension $d$, joint oracle $\tilde{\cO}$.
	   \STATE \textbf{Initialize:} For each arm $i$, $N_{0,i} = 0, \hat{\bmu}_{0,i}=\boldsymbol{0}$. 
	   \FOR{$t=1, ...,T$ }
    \STATE Construct an action-dependent confidence region function $\cC_t$ around $\hat{\bmu}_{t-1}$ according to \Cref{eq:conf_set}.
	   \STATE Apply joint oracle $\tilde{\cO}$ and get $(\pi_t,\tilde{\bmu}_t)=\tilde{\cO}(\cC_t)$.\label{line:joint_oracle}
	   \STATE Play action $\pi_t$, which triggers arms $\tau_t \subseteq [m]$ with outcome $\bX_{t,i}$, for $i \in \tau_t$.\label{line:trigger_obs}
     \STATE For $i \in \tau_t$, update $N_{t,i}= N_{t-1, i}+1$, $\hat{\bmu}_{t,i}= \hat{\bmu}_{t-1,i}+(\bX_{t,i}-\hat{\bmu}_{t-1,i})/N_{t,i}$. For $i \not\in \tau_t$, keep $N_{t,i}= N_{t-1, i}$, $\hat{\bmu}_{t,i}= \hat{\bmu}_{t-1,i}$.
	    \label{line:trigger_upd}
	    	   \ENDFOR
		\end{algorithmic}
		\end{minipage}}
\end{algorithm}


\subsection{Analysis of CUCB-MT and Its Discussion}

Fix the underlying distribution $D \in \cD$ and its mean vector $\bmu\in [0,1]^{m\times d}$ with optimal action $\pi^*$. For each action $\pi \in \Pi$, we define the (approximation) gap as $\Delta_{\pi}=\max\{0, \alpha r(\pi^*;\bmu)-r(\pi;\bmu)\}$. 
For each arm $i\in[m]$, we define $\Delta_i^{\min}=\inf_{\pi\in \Pi: q_i^{\bmu,\pi} > 0,\text{ } \Delta_{\pi} > 0}\Delta_{\pi}$. 

Recall that for any round $t$, $\hat{\boldsymbol{\mu}}_{t-1,i}$ is the empirical mean, $\mathcal{C}_t$ is the confidence region function defined in \cref{eq:conf_set} with problem-specific parameters $F_{t,i}$ and $I_{t,i}$, $\tilde{\mathcal{O}}$ is the joint oracle, and $(\pi_t,\tilde{\boldsymbol{\mu}}_{t})=\tilde{\mathcal{O}}(\mathcal{C}_t)$ are the pair of optimistic policy and parameter in line 5 of \Cref{alg:CUCB-MT}. Define the concentration event 
\begin{align}\label{eq:thm_concen_1}
	\mathcal{E}_{c,1} = \Big\{ &\abs{\uabs{\boldsymbol{\mu}_i-\hat{\boldsymbol{\mu}}_{t-1,i}}^{\top}\boldsymbol{w}^{\boldsymbol{\mu},\pi^*}_i} \le 
	 F_{t,i}\sqrt{\frac{1}{N_{t-1,i}}}\notag\\
  &+I_{t,i}\frac{1}{N_{t-1,i}}, \text{ for all } i\in[m], t\in[T] \Big\}.
\end{align}
Let $G_{t,i}, J_{t,i}$ be another two problem-specific parameters for $i\in[m],t\in[T]$, and define the second concentration event 
\begin{align}\label{eq:thm_concen_2}
    &\mathcal{E}_{c,2} = \Big\{\abs{\uabs{\boldsymbol{\mu}_i-\hat{\boldsymbol{\mu}}_{t-1,i}}^{\top}(\boldsymbol{w}^{\tilde{\boldsymbol{\mu}}_t,\pi_t}_i-\boldsymbol{w}^{\boldsymbol{\mu},\pi^*}_i)}\notag\\&\le G_{t,i}\sqrt{\frac{1}{N_{t-1,i}}}+J_{t,i}\frac{1}{N_{t-1,i}}, \text{ for all } i\in[m], t\in[T] \Big\}.
\end{align} 
Let $\bar{F},\bar{G}, \bar{I},\bar{J}$ be upper bounds for problem specific parameters so that $\sum_{i\in[m]}q_i^{\bmu,\pi_t}F_{t,i}^2\le \bar{F}$, $\sum_{i\in[m]}q_i^{\bmu,\pi_t}G_{t,i}^2\le \bar{G}$, $I_{t,i}\le \bar{I}$, $J_{t,i}\le \bar{J}$, then we have the following theorem. 

\begin{theorem}\label{thm:reg_CMAB-MT}
    For a CMAB-MT problem instance $([m], d, \Pi, \cD, D_{\text{trig}},R)$ that satisfies 1-norm MTPM or MTPM+ smoothness condition (Condition \ref{cond:general_smooth}) with weight vectors $\bw_i^{\tilde{\bmu},\pi}\in [0,\bar{w}]^d$ for $\tilde{\bmu}\in[0,1]^{m\times d},\pi\in\Pi,i\in[m]$,
    if the oracle $\tilde{\cO}$ is an $(\alpha,\beta)$-approximation oracle, and concentration events $\mathcal{E}_{c,1},\mathcal{E}_{c,2}$ hold with probability at least $1-\frac{1}{T}$, then CUCB-MT (Algorithm 1) 
    achieves an $(\alpha,\beta)$-approximate gap-dependent regret bounded by
    \begin{align}
        O\left(\sum_{i \in [m]} \frac{(\bar{F}+\bar{G})}{\Delta_{i}^{\min} } + (\bar{I}+\bar{J})  \log\left (\frac{(\bar{I}+\bar{J})K}{\Delta_{i}^{\min}}\right)\right),
    \end{align}
and the gap-independent regret bounded by
\begin{align}
    O\left(\sqrt{m(\bar{F}+\bar{G})T}+m(\bar{I}+\bar{J})\log (KT)\right).
\end{align}
\end{theorem}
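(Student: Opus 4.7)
I would follow the optimism-in-the-face-of-uncertainty template of prior CMAB-T analyses~\cite{wang2017improving,liu2022batch}, carefully adapted so that the multivariant weights $\bw_i^{\tilde{\bmu},\pi}$ and the two concentration events $\cE_{c,1}$ and $\cE_{c,2}$ are both exploited. First, I condition on the good event $\cE_{c,1}\cap\cE_{c,2}$; its complement has probability at most $1/T$, and since rewards are bounded, this bad event contributes only $O(1)$ to the regret. On the good event, $\cE_{c,1}$ combined with the definition of $\cC_t$ in \cref{eq:conf_set} implies $\bmu\in\cC_t(\pi^*)$, so by the $(\alpha,\beta)$-approximation guarantee of the joint oracle in \cref{line:joint_oracle} one obtains $r(\pi_t;\tilde{\bmu}_t)\ge\alpha\,r(\pi^*;\bmu)$ with probability at least $\beta$. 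By the standard CUCB argument, the per-round regret against the $\alpha\beta\,r(\pi^*;\bmu)$ benchmark is then dominated by $\E[r(\pi_t;\tilde{\bmu}_t)-r(\pi_t;\bmu)]$.

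Next I apply Condition~\ref{cond:general_smooth} to upper bound this reward gap by $\sum_i q_i^{\bmu,\pi_t}\,\abs{\uabs{\tilde{\bmu}_{t,i}-\bmu_i}^{\top}\bw_i^{\tilde{\bmu}_t,\pi_t}}$, and split each summand via the triangle inequality through the empirical mean $\hat{\bmu}_{t-1,i}$. The $(\tilde{\bmu}_t,\hat{\bmu}_{t-1})$ piece is controlled directly by $\phi_{t,i}=F_{t,i}/\sqrt{N_{t-1,i}}+I_{t,i}/N_{t-1,i}$ because $\tilde{\bmu}_t\in\cC_t(\pi_t)$ by construction of the oracle. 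For the $(\hat{\bmu}_{t-1},\bmu)$ piece, I insert and subtract $\bw_i^{\bmu,\pi^*}$ inside the inner product: one half falls under $\cE_{c,1}$ applied to the reference weight $\bw_i^{\bmu,\pi^*}$, while the residual weight discrepancy $\bw_i^{\tilde{\bmu}_t,\pi_t}-\bw_i^{\bmu,\pi^*}$ is exactly what $\cE_{c,2}$ was designed to handle. The unified $\uabs{\cdot}$ notation lets MTPM and MTPM+ be treated together. Collecting terms yields an instantaneous-regret bound of the form $\sum_i q_i^{\bmu,\pi_t}\bigl[(F_{t,i}+G_{t,i})/\sqrt{N_{t-1,i}}+(I_{t,i}+J_{t,i})/N_{t-1,i}\bigr]$ up to absolute constants.

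The final step, which I expect to be the main obstacle, is to aggregate this bound over $t\in[T]$. I would invoke the triggering-probability amortization of~\cite{wang2017improving,liu2022batch}: a martingale argument replaces $\sum_t q_i^{\bmu,\pi_t}\,g(N_{t-1,i})$ by roughly $\sum_{t:\,i\in\tau_t}g(N_{t-1,i})$, so that $1/\sqrt{N_{t-1,i}}$ terms telescope to $O(\sqrt{N_{T,i}})$ and $1/N_{t-1,i}$ terms sum to $O(\log T)$. For the gap-dependent bound, I combine this with the triggering-group decomposition that partitions suboptimal rounds by scales of $\Delta_{\pi_t}\ge\Delta_i^{\min}$, then plug in the uniform $\bar F,\bar G,\bar I,\bar J$ bounds to obtain the claimed $\sum_i(\bar F+\bar G)/\Delta_i^{\min}+(\bar I+\bar J)\log(\cdot)$ form. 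For the gap-independent bound, I apply Cauchy–Schwarz twice: once per round against $\sum_i q_i^{\bmu,\pi_t}F_{t,i}^2\le\bar F$, and once across rounds against $\sum_{t,i}q_i^{\bmu,\pi_t}/N_{t-1,i}=O(m\log T)$, yielding $\sqrt{m(\bar F+\bar G)T}+m(\bar I+\bar J)\log(KT)$. The delicate point is that $\bar F,\bar G$ bound $q_i$-weighted sums of squares rather than per-arm maxima, so the triggering weights must be carried through the inner Cauchy–Schwarz step rather than pulled outside the sum, which dictates the right choice of potential function in the amortization.
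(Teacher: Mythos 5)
Your decomposition is the same as the paper's: condition on $\cE_{c,1}\cap\cE_{c,2}$ and oracle success, use optimism to reduce $\Delta_{\pi_t}$ to $r(\pi_t;\tilde{\bmu}_t)-r(\pi_t;\bmu)$, apply Condition~\ref{cond:general_smooth}, triangle-inequality through $\hat{\bmu}_{t-1,i}$, insert and subtract $\bw_i^{\bmu,\pi^*}$ so that the three pieces are controlled by $\cC_t(\pi_t)$, $\cE_{c,1}$, and $\cE_{c,2}$ respectively, and then Cauchy--Schwarz within the round against $\sum_i q_i^{\bmu,\pi_t}F_{t,i}^2\le\bar F$. Your observation that the triggering weights must be carried through the inner Cauchy--Schwarz is exactly right, and your gap-dependent plan (triggering-probability equivalence $q_i^{\bmu,\pi_t}=\E_t[\I\{i\in\tau_t\}]$ plus a per-arm regret-allocation argument keyed to $\Delta_i^{\min}$) matches the paper's Steps~2--3. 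One minor bookkeeping item you omit: arms with $N_{t-1,i}=0$ must be split off before the confidence radius is invoked (the paper bounds their contribution by $\bar w d$ per arm and amortizes it to $O(m\bar w d)$ via a null-counter lemma).

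There is, however, a genuine gap in your final aggregation for the \emph{gap-independent} bound. You propose summing $\sqrt{\bar F}\sqrt{\sum_i q_i^{\bmu,\pi_t}/N_{t-1,i}}$ over $t$ by a second Cauchy--Schwarz against $\sum_{t,i}q_i^{\bmu,\pi_t}/N_{t-1,i}=O(m\log T)$; that computation yields $\sqrt{m(\bar F+\bar G)T\log T}$, not the claimed $O(\sqrt{m(\bar F+\bar G)T})$. The extra $\sqrt{\log T}$ is precisely what the theorem (and the paper's advertised improvement over \citet{zanette2019tighter,zhang2021isreinforcement}) is designed to avoid. The paper's route is different at this point: it first proves the gap-dependent bound using a \emph{reverse amortization} step, $\Delta_{\pi_t}\le\sum_i q_i^{\bmu,\pi_t}\bigl(2c_1^2/(N_{t-1,i}\Delta_{\pi_t})-\Delta_{\pi_t}/K\bigr)$, whose negative term truncates the per-arm harmonic sum at $N_{t-1,i}\le 2c_1^2K/(\Delta_i^{\min})^2$ and hence produces $\log K$ rather than $\log T$; it then obtains the gap-independent bound by thresholding at $\Delta=\Theta(\sqrt{m(\bar F+\bar G)/T})$ and combining $T\Delta$ with the gap-dependent bound evaluated at $\Delta$. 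To close your proof you would need to replace the across-rounds Cauchy--Schwarz with this truncation-plus-thresholding argument (or an equivalent device that caps the counter at a $K$-dependent rather than $T$-dependent level).
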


\begin{remark}
Looking at the above theorem, problem-specific parameters $F_{t,i},I_{t,i},G_{t,i},J_{t,i}$ are related to concentration inequalities that hold with high probability, so they are polylogarithmic terms regarding $T$. For example, when arms are $d$-dimensional multinoulli random variables, $F_{t,i},G_{t,i}=O(\bar{w}\sqrt{d\log T})$ and $I_{t,i}=J_{t,i}=0$. Therefore, the leading regret is $O(\sqrt{m(\bar{F}+\bar{G})T})$. For event $\cE_{c,1}$ in \cref{thm:reg_CMAB-MT}, we only require $\bmu\in \cC_{t}(\pi^*)$ (instead of $\bmu\in \cC_{t}(\pi)$ for all $\pi$), which can obtain smaller $F_{t,i},I_{t,i}$ with tighter regret. For event $\cE_{c,2}$ in \cref{thm:reg_CMAB-MT}, $G_{t,i},J_{t,i}$ can be very small since $\bw^{\tilde{\bmu}_t,\pi_t}_{i},\bw^{\bmu,\pi^*}_{i}$ can be very close to each other, e.g., if $\bw_i^{\bmu,\pi}=\bc$ are constant vectors, then $G_{t,i}=J_{t,i}=0$. 
Now looking at any CMAB-T problem following 1-norm TPM condition with $B_1$ mentioned in \cref{rmk:CMAB-T instance} and arms being Bernoulli, $F_{t,i}=B_1\sqrt{1.5\log (mT)},G_{t,i}=I_{t,i}=J_{t,i}=0$ and our theorem gives $\tilde{\cO}(B_1\sqrt{mKT})$ regret, matching the tight regret bound given by \cite{wang2017improving}. In later sections, we will provide two representative applications that fit into the CMAB-MT framework and identify parameters $\bar{F},\bar{G},\bar{I},\bar{J}$, which achieves matching or improved regret bounds compared to existing works. Due to the space limit, the detailed analysis of \cref{thm:reg_CMAB-MT} is deferred to  \cref{apdx_sec:proof_of_CMAB-MT}.

\end{remark}

\section{Application to Episodic Reinforcement Learning}\label{sec:RL}
In this section, we first introduce the setup of episodic RL, which is modeled as a finite-horizon Markov decision process (MDP). Then we demonstrate how episodic RL can be effectively integrated into the framework of CMAB-MT and satisfy two different 1-norm MTPM smoothness conditions. For the former, we give a result matching that of the seminal work~\cite{jaksch2010near} as a warm-up case. For the latter, we achieve the minimax-optimal regret bound by leveraging a tighter confidence region function and the variance-aware analysis.

\subsection{Setup of Episodic MDP and RL}
We consider the finite-horizon MDP, i.e., episodic MDP, which can be described by a tuple $(\cS, \cA, H, \cP, \cR)$. $\cS$ is the finite state space with cardinality $S$. $\cA$ is the finite action space with cardinality $A$. $H$ is the number of steps for each episode. $\cP=(\bp(s,a,h))_{(s,a,h) \in \cS \times \cA \times [H]}$ are transition kernels, where $\bp(s,a,h)\in \bDelta_{S}$ and each element $p(s'|s,a,h)$ is the probability of transitioning to state $s'$ after taking action $a$ in state $s$ at step $h$.\footnote{We consider the time inhomogeneous setting where $\bp(s,a,h)$'s at different steps are different.} For the ease of exposition, reward distribution $\cR=(r(s,a,h))_{(s,a,h) \in \cS \times \cA \times [H]}$ are assumed to be Bernoulli random variables with mean $r(s,a,h)\in [0,1]$, indicating the instantaneous reward collected upon taking action $a$ in state $s$ at step $h$.

In episodic RL, the agent interacts with an unknown episodic MDP environment (where $\cP$ and $\cR$ are unknown)  in a sequence of episodes $t \in [T]$. At the beginning of episode $t$, the agent starts from a fixed initial state $s_{1}$ and determines a policy $\pi_t$, where $\pi_{t}(s,h) \in \cA$ maps any state and step $h$ to actions.\footnote{The fixed initial state can be generalized to random initial state $s_{t,1}$ by using a $H+1$ step MDP which virtually starts from a fixed state $s_0$ and transits to $s_{t,1}$ with (unknown) distribution $\bp_0$.} 
Then at the step $h=1,..., H$, the agent selects an action $a_{t,h}=\pi_{t}(s_{t,h},h)$,
receives a random reward $R_{k,h} \in [0,1]$ with mean $r(s_{t,h},a_{t,h},h)$, and transits to the next state $s_{t,h+1}$ with probability $p(s_{t,h+1}\mid s_{t,h}, a_{t,h}, h)$. The trajectory $(s_{t,h}, a_{t,h})_{h \in [H]}$ and the random reward $(R_{k,h})_{h \in [H]}$ are observed as feedback to improve future policies. 

Each policy $\pi$ specifies a value function for every state $s$ and step $h$ (i.e., the expected total reward starting from state $s$ at step $h$ until the end of the episode), defined as $V_{h}^{\pi}(s)=\E[\sum_{i=h}^H r(s_{i}, a_i, i)\mid s_h=s, \pi]$, where the expectation is taken over visited state-action pairs $(s_i, a_i)$ upon starting from state $s$ at step $h$. It is easily shown that the value function satisfies the Bellman equation (with $\bV_{H+1}^{\pi}=\boldsymbol{0}$) for any policy $\pi$:
\begin{align}
    &V_h^{\pi}(s)=r(s,\pi(s,h),h) + \bp(s,\pi(s,h),h)^{\top}\bV_{h+1}^{\pi}.
\end{align}
For episodic MDP, there always exists a policy $\pi$ that attains the best possible values, and we define the optimal value function $V^*_h(s)=\sup_{\pi}V_h^{\pi}(s)$.
The objective of episodic RL is to minimize the regret over $T$ episodes, which is defined as 
\begin{align}
    \text{Reg}(T)\overset{\text{def}}{=}\sum_{t \in [T]}\left(V_{1}^{*}(s_{1})-V_{1}^{\pi_t}(s_{1})\right).
\end{align}

\subsection{Episodic RL from the Lens of CMAB-MT}
Similar to existing works, we assume transition kernels $\cP$ are unknown while the reward distribution $\cR$ is known.\footnote{Handling unknown reward distribution $\cR$ is straight-forward by adding $SAH$ arms with $d=1$ for the $SAH$ unknown rewards.}
For this episodic RL problem, it fits into CMAB-MT framework with tuple $([m],d, \Pi, \cD, D_{\text{trig}},R)$. Each transition kernel $\bp(s,a,h) \in \bDelta_S$ corresponds to a base arm and there are $m=SAH$ of them. The outcome of base arm $\bX_{s,a,h}\in \{0,1\}^S$ is a multinoulli (or categorical) random variable with dimension $d=S$, i.e., a one-hot vector $\bX_{s,a,h}=\be_{s'}$ indicating the state at next step $h+1$ will be $s'$ upon taking action $a$ at step $h$. The set of feasible combinatorial actions $\pi$ corresponds to the set of deterministic policies $\pi$ that maps state-step pairs to actions, i.e., $\pi: \cS \times [H] \rightarrow \cA$. As mentioned before, we assume the reward distribution $\cR$ is known, so the set of $\cD$ corresponds to any feasible MDP with reward distribution $\cR$. 

Before the RL game starts, the environment draws an unknown distribution $D\in \cD$ with transition probabilities $\bp=(\bp(s,a,h))_{s,a,h\in \cS\times\cA\times[H]}$, where $\bp(s,a,h)\in \bDelta_S$. 
At each episode $t\in[T]$, let the outcomes of base arms be $\bX_t=(\bX_{t,s,a,h})_{s,a,h}\sim D$. 
Given the policy $\pi_t$ and the starting state $s_1$, the triggering set $\tau_t=(s_{t,h},a_{t,h},h)_{h\in [H]}$ includes a cascade of $H$ base arms starting from the state-action-step tuple $(s_1,\pi_t(s_1,1),1)$, and the $h$-th arm for $h>1$ of this cascade is tuple $(s_{t,h}, a_{t,h},h)=(s',\pi_t(s',h),h)$, where $s'\in \cS$ is the index such that $s'$-th entry of $\bX_{t, s_{t,h-1},a_{t,h-1},h-1}$ equals to $1$. 
In this case, the triggering probability distribution $D_{\text{trig}}(\pi_t,\bX_t)$ is fully determined by  $\pi_t$ and $\bX_t$, i.e., $\tau_t$ is deterministically decided given $\pi_t$ and $\bX_t$. And it is easy to show that the reward function $R(\pi_t,\bX_t, \tau_t)=\sum_{(s,a,h)\in \tau_t}r(s,a,h)$ and the expected reward function $r(\pi_t;\bp)=\E[R(\pi_t,\bX_t, \tau_t)]=V_1^{\pi_t}(s_1)$. 

\textbf{Key Quantities.} 
The triggering probability is the occupancy measure of $(s,a,h)$, i.e., $q_{s,a,h}^{\bp,\pi}=\E[\I\{s_h=s, a_h=a\mid \pi, \bp\}]$, indicating the probability of visiting state-action pair $(s,a)$ at step $h$ when the underlying transition is $\bp$ and the policy is $\pi$. And the batch-size is $K=\max_{\pi}\sum_{s,a,h}q_{s,a,h}^{\bp,\pi}=H$.

\subsection{The Simple Smoothness Condition with Constant Weights Achieves Sublinear Regret}\label{sec:RL_sec_loose}
Fitting episodic RL into CMAB-MT, we can show that it satisfies the following lemma, whose proof is in \cref{apdx_sec:proof_lem1n2}.
\begin{lemma}\label{lem:RL_smooth_loose}
    Episodic RL with unknown transition is a CMAB-MT instance, which satisfies 1-norm MTPM smoothness condition (Condition \ref{cond:general_smooth}) with weights  $\bw_{s,a,h}^{\tilde{\bp},\pi}=H\cdot\boldsymbol{1}\in \R^{S}$ for all $\tilde{\bp},\pi$, i.e., $
    \abs{V_1^{\tilde{\bp},\pi}(s_1)-V_1^{\bp,\pi}(s_1)}\le H\cdot\sum_{s,a,h}q_{s,a,h}^{\bp, \pi} \cdot \norm{\tilde{\bp}(s,a,h)-\bp(s,a,h)}_1$.
\end{lemma}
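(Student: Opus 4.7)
The plan is to invoke the standard simulation lemma (a.k.a.\ value difference lemma for two MDPs sharing the same policy and rewards) and then apply Hölder's inequality together with the uniform bound $\norm{\bV_{h+1}^{\tilde\bp,\pi}}_\infty\le H$. Since episodic RL has already been cast into CMAB-MT with $m=SAH$ multinoulli arms, $d=S$, triggering probability $q_{s,a,h}^{\bp,\pi}$ equal to the occupancy measure, and $r(\pi;\bp)=V_1^{\bp,\pi}(s_1)$, the inequality we must prove is exactly a bound on $|V_1^{\tilde\bp,\pi}(s_1)-V_1^{\bp,\pi}(s_1)|$ in terms of the expected $\ell_1$ gap between transition kernels along trajectories generated under $\bp$.

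First, I would expand both value functions using the Bellman equation. Since the policy $\pi$ and the reward $r(s,a,h)$ are identical in the two MDPs, the reward terms cancel and the difference at level $h$ reduces to
\begin{align*}
V_h^{\tilde\bp,\pi}(s)-V_h^{\bp,\pi}(s)
&=\bigl(\tilde\bp(s,\pi(s,h),h)-\bp(s,\pi(s,h),h)\bigr)^{\top}\bV_{h+1}^{\tilde\bp,\pi}\\
&\quad+\bp(s,\pi(s,h),h)^{\top}\bigl(\bV_{h+1}^{\tilde\bp,\pi}-\bV_{h+1}^{\bp,\pi}\bigr).
\end{align*}
I would then unroll this recursion from $h=1$ down to $h=H$ (with $\bV_{H+1}^{\bp,\pi}=\bV_{H+1}^{\tilde\bp,\pi}=\boldsymbol 0$). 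Because the residual term propagates through the true transition $\bp$ under policy $\pi$, the unrolled identity becomes
\begin{align*}
V_1^{\tilde\bp,\pi}(s_1)-V_1^{\bp,\pi}(s_1)
=\E_{\bp,\pi}\!\left[\sum_{h=1}^H\bigl(\tilde\bp(s_h,a_h,h)-\bp(s_h,a_h,h)\bigr)^{\top}\bV_{h+1}^{\tilde\bp,\pi}\right],
\end{align*}
where the expectation is over the trajectory $(s_h,a_h)_{h\in[H]}$ drawn from the true MDP under $\pi$.

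Next, I would take absolute values, push the absolute value inside the expectation and the sum, and apply Hölder's inequality term by term:
\begin{align*}
\bigl|(\tilde\bp(s_h,a_h,h)-\bp(s_h,a_h,h))^{\top}\bV_{h+1}^{\tilde\bp,\pi}\bigr|
\le \norm{\bV_{h+1}^{\tilde\bp,\pi}}_\infty\cdot\norm{\tilde\bp(s_h,a_h,h)-\bp(s_h,a_h,h)}_1.
\end{align*}
Since rewards lie in $[0,1]$, a trivial induction gives $\norm{\bV_{h+1}^{\tilde\bp,\pi}}_\infty\le H-h\le H$, so the factor is uniformly bounded by $H$. Finally, I would rewrite the trajectory expectation as a sum over state–action–step triples weighted by the occupancy measure, using $q_{s,a,h}^{\bp,\pi}=\Pr_{\bp,\pi}[s_h=s,a_h=a]$, which yields the claimed bound.

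The only subtle point (and the step I would be most careful about) is the direction of the condition: the lemma asserts the \emph{weaker} 1-norm MTPM form $\sum_{s,a,h}q_{s,a,h}^{\bp,\pi}|\tilde\bp(s,a,h)-\bp(s,a,h)|^{\top}(H\cdot\boldsymbol 1)$, which is exactly $H\cdot q\cdot\norm{\tilde\bp-\bp}_1$, rather than the stronger MTPM+ form $|(\tilde\bp-\bp)^{\top}\bw|$. This matches perfectly with the Hölder step: we cannot avoid the absolute value around the inner product because $\bV_{h+1}^{\tilde\bp,\pi}$ depends on $\tilde\bp$ and can take any sign of correlation with $\tilde\bp-\bp$, so we only recover MTPM, not MTPM+. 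Obtaining MTPM+ (needed for the minimax bound in a later section) will require replacing $\bV^{\tilde\bp,\pi}$ with a signed, $\pi$-independent proxy, but that refinement is beyond the scope of this lemma.
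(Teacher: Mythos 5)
Your proof is correct, and it reaches the same key identity as the paper,
\begin{align*}
V_1^{\tilde\bp,\pi}(s_1)-V_1^{\bp,\pi}(s_1)=\sum_{s,a,h}q_{s,a,h}^{\bp,\pi}\bigl(\tilde\bp(s,a,h)-\bp(s,a,h)\bigr)^{\top}\bV_{h+1}^{\tilde\bp,\pi},
\end{align*}
by a genuinely different route. You unroll the Bellman recursion forward (the textbook simulation lemma): the per-step error $(\tilde\bp-\bp)^{\top}\bV_{h+1}^{\tilde\bp,\pi}$ plus a residual that propagates through the true kernel $\bp$, which after telescoping becomes an expectation over trajectories of the true MDP, i.e.\ an occupancy-weighted sum. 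The paper instead writes $V_1^{\bp,\pi}(s_1)=\sum_{s,a,h}q^{\bp,\pi}_{s,a,h}r(s,a,h)$, proves by induction on $h$ an exact expansion of the occupancy-measure difference $\tilde q(s,a,h)-q(s,a,h)$ in terms of $q$, $\tilde p-p$, and the tail occupancies $\tilde q_{(s'',i+1)}$, and then regroups sums to recover the same identity (\cref{apdx_lem:trigger_smooth}). Your derivation is shorter and more standard; the paper's buys an explicit formula for $\tilde q-q$ itself, which is of independent interest but is not needed elsewhere in the argument. From the identity, your H\"older step with $\norm{\bV_{h+1}^{\tilde\bp,\pi}}_\infty\le H$ gives exactly the claimed MTPM bound, matching the paper's final two inequalities.

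One correction to your closing remark: you have the two conditions reversed. The MTPM+ form in \cref{eq:general_smooth} is $\sum_i q_i\abs{(\tilde\bmu_i-\bmu_i)^{\top}\bw_i^{\tilde\bmu,\pi}}$, i.e.\ the absolute value is taken \emph{around the inner product}, and the weights are explicitly allowed to depend on $\tilde\bmu$ and $\pi$. So the displayed identity above already yields the MTPM+ condition with $\bw_{s,a,h}^{\tilde\bp,\pi}=\bV_{h+1}^{\tilde\bp,\pi}$ — that is precisely \cref{lem:RL_tight_smooth}, and no signed or $\pi$-independent proxy is needed. What H\"older does is pass from MTPM+ to the weaker entrywise form $\abs{\tilde\bp-\bp}^{\top}(H\cdot\boldsymbol 1)$ of \cref{eq:abs_smooth}. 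This does not affect the validity of your proof of the present lemma.
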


\textbf{Confidence Region Function $\cC_t$ and Joint Oracle $\tilde{\cO}$.} By definition, we have counter $N_{t}(s,a,h)=\sum_{t'=1}^{t}\I\{(s,a,h)\in \tau_{t'}\}$ 
and the empirical mean $\hat{p}_{t}(s,a,h)=\frac{\sum_{t'=1}^{t}\I\{(s,a,h)\in \tau_{t'}\}\bX_{t',s,a,h}}{N_{t}(s,a,h)}$. Based on the fact that outcomes $\bX_{t,s,a,h}$ are multinoulli random variables, we use the concentration for multinoulli distributions (\cref{apdx_lem:concen_tran}), i.e., $\norm{\bp(s,a,h)-\hat{\bp}_{t-1}(s,a,h)}_1\le \sqrt{\frac{2S\log (2/\delta)}{N_{t-1}(s,a,h)}}$ with probability at least $1-\delta$. The confidence region function defined as \cref{eq:conf_set} becomes 
\begin{align}\label{eq:RL_loose_conf_set}
    &\cC_t(\pi)=\{\tilde{\bp}:\text{ for all } (s,a,h), \tilde{\bp}(s,a,h)\in \bDelta_{S},\notag\\
    &H\cdot\norm{\tilde{\bp}(s,a,h)-\hat{\bp}_{t-1}(s,a,h)}_1\le\phi_t(s,a,h) \},
\end{align} 
where $\phi_t(s,a,h)=F_{t,s,a,h}\sqrt{\frac{1}{N_{t-1}(s,a,h)}}+\frac{I_{t,s,a,h}}{N_{t-1}(s,a,h)}$, and $F_{t,s,a,h}=H\sqrt{2S\log (SAHT/\delta')}$, $I_{t,s,a,h}=0$.

Since this region is not policy-dependent, we use $\cC_t$ as a shortcut of $\cC_t(\pi)$ for all $\pi\in \Pi$. 
Based on the confidence region $\cC_t$, we identify the joint oracle as $(\pi_t,\tilde{\bp}_t)=\argmax_{\pi\in \Pi, \tilde{\bp}\in \cC_t}V_1^{\tilde{\bp},\pi}(s_1)$. According to \citet{jaksch2010near}, this joint oracle can be implemented efficiently using extended value iteration described in \cref{alg:ext_VI}. Note that in line~\ref{line:alg2_max} in \cref{alg:ext_VI}, we need to solve a linear optimization problem over a convex polytope, which can be solved in $O(S^2A)$ \cite{jaksch2010near}.
\begin{algorithm}[t]
	\caption{Extended Value Iteration Oracle in Episode $t$}\label{alg:ext_VI}
		\resizebox{.93\columnwidth}{!}{
\begin{minipage}{\columnwidth}
	\begin{algorithmic}[1]
	    \STATE \textbf{Input:} Counter $N_{t-1}(s,a,h)$, empirical transition $\hat{\bp}_{t-1}(s,a,h)$ for all $s,a,h$, and  $\delta'=1/(2T)$.  
	   \STATE \textbf{Initialize:} $\phi'_t(s,a,h)=\sqrt{\frac{2S\log (SAHT/\delta')}{N_{t-1}(s,a,h)}}$, $\bar{V}_{t,H+1}(s)=0$ for all $s,a,h$.
    \FOR{Step $h=H, H-1, ..., 1$}
    \STATE For all $(s,a)$, set $\tilde{\bp}_t(s,a,h)=\argmax_{\bp'\in\bDelta_{S}:\norm{\bp'-\hat{\bp}_{t-1}(s,a,h)}_1\le \phi'_t(s,a,h)}\bp'^{\top}\bar{\bV}_{t,h+1}$\label{line:alg2_max}
    \STATE For all $(s,a)$, set $Q_t(s,a,h)=r(s,a,h)+\tilde{\bp}_t(s,a,h)^{\top}\bar{\bV}_{t,h+1}$.
    \STATE For all $s$, set $\pi_t(s,h)=\argmax_aQ_t(s,a,h)$ and $\bar{V}_{t,h}(s)=Q_t(s,\pi_t(s,h),h)$
    \ENDFOR
    \STATE \textbf{Return:} $\pi_t,\tilde{\bp}_t$.
		\end{algorithmic}   
				\end{minipage}}
\end{algorithm}

\textbf{Regret Bound and Discussion.} 
Based on the above confidence region function and the joint oracle, we have
\begin{theorem}\label{thm:reg_RL_loose}
     For episodic RL fitting into the CMAB-MT framework  with weights in 
     \cref{lem:RL_smooth_loose}, CUCB-MT algorithm with the confidence region function $\cC_t$ in \cref{eq:RL_loose_conf_set} and the joint oracle in \cref{alg:ext_VI} satisfies the requirements of \cref{thm:reg_CMAB-MT} with parameters $\bar{F}=\tilde{O}(H^3S),\bar{G}=\bar{I}=\bar{J}=0$, and thus achieves a regret bounded by $\tilde{O}(\sqrt{H^4S^2AT})$.
\end{theorem}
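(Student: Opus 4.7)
The plan is to invoke \cref{thm:reg_CMAB-MT} for this specific episodic-RL instance, after identifying the four problem-specific quantities $\bar F,\bar G,\bar I,\bar J$ and verifying that $\cE_{c,1}$ and $\cE_{c,2}$ hold with probability at least $1-1/T$. The key observation driving essentially everything is that the weight vectors $\bw_{s,a,h}^{\tilde{\bp},\pi}=H\cdot\boldsymbol{1}$ are \emph{constant} in both $\tilde{\bp}$ and $\pi$, so $\bw_i^{\tilde{\bmu}_t,\pi_t}-\bw_i^{\bmu,\pi^*}=\bzero$ identically. This collapses the left-hand side of \cref{eq:thm_concen_2} to zero for every sample path, letting me take $G_{t,i}=J_{t,i}=0$ and therefore $\bar G=\bar J=0$; combined with $I_{t,s,a,h}=0$ from the construction of $\cC_t$, this also yields $\bar I=0$, so the event $\cE_{c,2}$ is satisfied trivially.

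For the first concentration event, substituting $\bw_{s,a,h}=H\boldsymbol{1}$ collapses $\lvert[\bp(s,a,h)-\hat{\bp}_{t-1}(s,a,h)]_+^\top\bw\rvert$ to $H\cdot\norm{\bp(s,a,h)-\hat{\bp}_{t-1}(s,a,h)}_1$, so $\cE_{c,1}$ reduces to the requirement
\begin{align*}
\norm{\bp(s,a,h)-\hat{\bp}_{t-1}(s,a,h)}_1\le\sqrt{\tfrac{2S\log(SAHT/\delta')}{N_{t-1}(s,a,h)}}
\end{align*}
holding simultaneously for all $(s,a,h)$ and all $t$. I would invoke the Weissman-type multinoulli $L_1$ concentration (\cref{apdx_lem:concen_tran}) for each arm at each fixed sample count, then union-bound over the $SAH$ arms and the at most $T$ distinct counter values per arm; with $\delta'=1/(2T)$ the overall failure probability is at most $1/T$, as required.

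Next, since $F_{t,s,a,h}^2=2H^2 S\log(SAHT/\delta')$ is independent of $(s,a,h,t)$ and $\sum_{s,a,h}q_{s,a,h}^{\bp,\pi_t}=H$ for every policy (the occupancy measure at each of the $H$ steps sums to $1$, giving batch-size $K=H$),
\begin{align*}
\sum_{s,a,h}q_{s,a,h}^{\bp,\pi_t}F_{t,s,a,h}^2\le 2H^3 S\log(SAHT/\delta')=\tilde O(H^3 S),
\end{align*}
so $\bar F=\tilde O(H^3 S)$. Plugging $(\bar F,\bar G,\bar I,\bar J)=(\tilde O(H^3 S),0,0,0)$ together with $m=SAH$ into the gap-independent part of \cref{thm:reg_CMAB-MT} delivers
\begin{align*}
O\!\left(\sqrt{m(\bar F+\bar G)T}+m(\bar I+\bar J)\log(KT)\right)=\tilde O\!\left(\sqrt{SAH\cdot H^3 S\cdot T}\right)=\tilde O\!\left(\sqrt{H^4 S^2 A T}\right),
\end{align*}
which is the bound claimed in the theorem.

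The main obstacle will be carrying out the uniform-in-time concentration in $\cE_{c,1}$ carefully: because $N_{t-1}(s,a,h)$ is random, a naive union bound over rounds would cost an extra $\log T$ factor, so either a peeling/stopping-time argument or the observation that $\hat{\bp}_{t-1}(s,a,h)$ takes at most $T$ distinct values (one per update of that arm) must be used to keep a single logarithmic factor inside $F_{t,s,a,h}$. A secondary, mostly bookkeeping, issue is to verify that extended value iteration in \cref{alg:ext_VI} really implements the joint oracle $\argmax_{\pi\in\Pi,\tilde{\bp}\in\cC_t}V_1^{\tilde{\bp},\pi}(s_1)$; this decouples across $(s,a,h)$ thanks to the per-arm product form of $\cC_t$ and the monotonicity of Bellman backups in the next-step value vector, following exactly the optimistic dynamic-programming argument of \citet{jaksch2010near}.
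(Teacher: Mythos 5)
Your proposal is correct and follows essentially the same route as the paper's proof in \cref{apdx_sec:RL_loose_proof}: instantiate \cref{thm:reg_CMAB-MT} with $G_{t,s,a,h}=J_{t,s,a,h}=0$ (constant weights $H\cdot\boldsymbol{1}$), $I_{t,s,a,h}=0$, $F_{t,s,a,h}=\tilde{O}(H\sqrt{S})$ from the Weissman $L_1$ concentration of \cref{apdx_lem:concen_tran}, and use $\sum_{s,a,h}q_{s,a,h}^{\bp,\pi}=H$ to get $\bar F=\tilde{O}(H^3S)$ and hence $\tilde{O}(\sqrt{m\bar F T})=\tilde{O}(\sqrt{H^4S^2AT})$. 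The paper handles the uniform-in-$t$ concentration exactly as you suggest, by union-bounding over the at most $T$ counter values per arm with $\delta'=1/(2T)$, and delegates the correctness of the extended value iteration oracle to \citet{jaksch2010near}, so both of your flagged concerns are resolved the way you anticipated.
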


Looking at the above theorem, we achieve a regret bound matching the seminal work \citet{jaksch2010near}, and up to a factor of $\tilde{O}(\sqrt{HS})$ compared with lower bound given by \citet{jaksch2010near}, see \cref{apdx_sec:RL_loose_proof} for detailed analysis.

\subsection{The Value Function Related Smoothness Condition Achieves Optimal Regret}\label{sec:tight_RL}
\begin{algorithm}[t]
	\caption{Optimistic Value Iteration Oracle in Episode $t$}\label{alg:v-opt}
		\resizebox{.93\columnwidth}{!}{
\begin{minipage}{\columnwidth}
	\begin{algorithmic}[1]
	    \STATE \textbf{Input:} Counter $N_{t-1}(s,a,h)$, empirical transition $\hat{\bp}_{t-1}(s,a,h)$ for all $s,a,h$, and $\delta'=1/(8T)$.  
	   \STATE \textbf{Initialize:} Constant $L=\logL$, value function $\bar{V}_{t,H+1}(s)=\ubar{V}_{t,H+1}(s)=0$, for all $s$.
    \FOR{ $h=H, H-1, ..., 1$}
\STATE For all $(s,a)$, set confidence radius $\phi_{t}(s,a,h)=2\sqrt{\frac{\Var_{s'\sim \hat{\bp}_{t-1}(s,a,h)}\left(\bar{V}_{t,h+1}(s')\right)L}{N_{t-1}(s,a,h)}}+2\sqrt{\frac{\E_{s'\sim \hat{\bp}_{t-1}(s,a,h)}\left[\bar{V}_{t,h+1}(s')-\ubar{V}_{t,h+1}(s')\right]^2L}{N_{t-1}(s,a,h)}}+\frac{5HL}{N_{t-1}(s,a,h)}$.\label{line:onp_conf} 
    \STATE Set $s^*=\argmax_{s}\bar{V}_{t,h+1}(s)$.
    \FOR{$(s,a)\in \cS\times \cA$}
    \IF {$\bar{V}_{t,h+1}(s^*)< \hat{\bp}_{t-1}(s,a,h)^{\top}\bar{\bV}_{t,h+1}+\phi_t(s,a,h)$}
    \STATE Set $\tilde{\bp}_{t}(s,a,h)=\be_{s^*}$
    \ELSE
    \STATE Pick any $\tilde{\bp}_t(s,a,h)\in \bDelta_S$ s.t. $\tilde{\bp}_t(s,a,h)^{\top}\bar{\bV}_{t,h+1}=\hat{\bp}_{t-1}(s,a,h)^{\top}\bar{\bV}_{t,h+1}+\phi_t(s,a,h)$.
    \ENDIF
    \STATE $Q_t(s,a,h)=r(s,a,h)+\tilde{\bp}_t(s,a,h)^{\top}\bar{\bV}_{t,h+1}$. 
    \ENDFOR
    \STATE For all $s$, set $\pi_t(s,h)=\argmax_{a}Q_t(s,a,h)$.
    \STATE For all $s$, set $\bar{V}_{t,h}(s)=Q_t(s,\pi_t(s,h),h)$.
    \STATE For all $s$, set $\ubar{V}_{t,h}(s)=\max\{r(s,\pi_t(s,h),h)+\hat{\bp}_{t-1}(s,\pi_t(s,h),h)^{\top}\ubar{V}_{t,h+1}-\phi_t(s,a,h),\,\,0\}$.
    \ENDFOR
    \STATE \textbf{Return:} $\pi_t,\tilde{\bp}_t$. 
		\end{algorithmic}   
				\end{minipage}}
\end{algorithm}
A natural question to ask is whether we can achieve the minimax optimal regret using the CMAB-MT framework. The answer is affirmative by leveraging the RL structures for stronger 1-norm MTPM+ smoothness condition, tighter confidence region $\cC_t$, and variance-aware analysis. 
We start with a stronger smoothness condition. Compared with \cref{lem:RL_smooth_loose}, we use the future value function $\bV_{h+1}^{\tilde{\bmu},\pi}$ instead of the constant $H\cdot\boldsymbol{1}$ as the weight $\bw_{s,a,h}^{\tilde{\bmu},\pi}$, whose proof is in \cref{apdx_sec:proof_lem1n2}. 
\begin{lemma}\label{lem:RL_tight_smooth}
    Episodic RL with unknown transition is a CMAB-MT instance, which satisfies 1-norm MTPM+ smoothness (Condition \ref{cond:general_smooth}) with weight vector $\bw_{s,a,h}^{\tilde{\bp},\pi}=\bV_{h+1}^{\tilde{\bp},\pi}$ for all $\tilde{\bp},\pi$,  i.e., $
    \abs{V_1^{\tilde{\bp},\pi}(s_1)-V_1^{\bp,\pi}(s_1)}\le \sum_{s,a,h}q_{s,a,h}^{\bp, \pi} \cdot \abs{[\tilde{\bp}(s,a,h)-\bp(s,a,h)]^{\top} \bV_{h+1}^{\tilde{\bp},\pi}}$.
\end{lemma}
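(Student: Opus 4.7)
The plan is to prove the claim by the standard value-difference / simulation lemma argument, specialized to the episodic CMAB-MT framing. Fix a deterministic policy $\pi$ and two transition kernels $\bp,\tilde{\bp}$. Let $\delta_h(s) \defeq V_h^{\tilde{\bp},\pi}(s)-V_h^{\bp,\pi}(s)$ and, writing $a=\pi(s,h)$, define
\begin{align*}
\Delta_h(s,a) \defeq [\tilde{\bp}(s,a,h)-\bp(s,a,h)]^{\top}\bV_{h+1}^{\tilde{\bp},\pi}.
\end{align*}
Since the reward $r(s,a,h)$ is known and cancels, the two Bellman equations yield, after adding and subtracting $\bp(s,a,h)^{\top}\bV_{h+1}^{\tilde{\bp},\pi}$, the one-step recursion
\begin{align*}
\delta_h(s) \;=\; \Delta_h(s,\pi(s,h)) \;+\; \bp(s,\pi(s,h),h)^{\top}\bdelta_{h+1},
\end{align*}
with terminal condition $\bdelta_{H+1}=\boldsymbol{0}$.

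Next I would unroll this recursion by iterating the $\bp$-kernel: evaluated at the fixed start state $s_1$, it expresses $\delta_1(s_1)$ as the expected sum, over a trajectory generated by $\pi$ under the true kernel $\bp$, of the one-step errors $\Delta_h(s_h,a_h)$. Rewriting that expectation in terms of visitation/occupancy probabilities gives
\begin{align*}
V_1^{\tilde{\bp},\pi}(s_1)-V_1^{\bp,\pi}(s_1) \;=\; \sum_{h=1}^{H}\sum_{s,a} q_{s,a,h}^{\bp,\pi}\,[\tilde{\bp}(s,a,h)-\bp(s,a,h)]^{\top}\bV_{h+1}^{\tilde{\bp},\pi},
\end{align*}
where the sum effectively runs over $a=\pi(s,h)$ because $\pi$ is deterministic and hence $q_{s,a,h}^{\bp,\pi}=0$ for $a\neq\pi(s,h)$. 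Taking absolute values and applying the triangle inequality over $(s,a,h)$ yields exactly the MTPM+ inequality claimed, with $\bw_{s,a,h}^{\tilde{\bp},\pi}=\bV_{h+1}^{\tilde{\bp},\pi}\in[0,H]^S$ (so $\bar{w}=H$).

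The only mild subtlety is the identification of the $\bp$-expected visitation sum with the occupancy measure $q_{s,a,h}^{\bp,\pi}$; I would justify this by induction on $h$ using $q_{s',a',h+1}^{\bp,\pi}=\sum_{s,a}q_{s,a,h}^{\bp,\pi}\,p(s'\mid s,a,h)\,\I\{\pi(s',h+1)=a'\}$, which is immediate from the definition $q_{s,a,h}^{\bp,\pi}=\Pr[s_h=s,a_h=a\mid \pi,\bp]$. No other obstacle arises: there is no step that requires a concentration argument, and the weight bound $\|\bV_{h+1}^{\tilde{\bp},\pi}\|_{\infty}\le H$ follows directly from the Bellman recursion together with $r(s,a,h)\in[0,1]$ and $V_{H+1}^{\tilde{\bp},\pi}\equiv 0$.
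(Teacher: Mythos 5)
Your proposal is correct. The heart of the matter is the exact identity
\begin{align*}
V_1^{\tilde{\bp},\pi}(s_1)-V_1^{\bp,\pi}(s_1)=\sum_{h,s,a}q_{s,a,h}^{\bp,\pi}\,[\tilde{\bp}(s,a,h)-\bp(s,a,h)]^{\top}\bV_{h+1}^{\tilde{\bp},\pi},
\end{align*}
after which the triangle inequality and the bound $\bV_{h+1}^{\tilde{\bp},\pi}\in[0,H]^S$ give the MTPM+ condition; you and the paper both land on this identity, but by dual routes. You derive it ``backward'': subtract the two Bellman equations, add and subtract $\bp^{\top}\bV_{h+1}^{\tilde{\bp},\pi}$ to get the one-step recursion $\delta_h(s)=\Delta_h(s,\pi(s,h))+\bp(s,\pi(s,h),h)^{\top}\bdelta_{h+1}$, and unroll under the true kernel — the standard simulation/value-difference lemma. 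The paper instead works ``forward'' on occupancy measures: it first proves by induction on $h$ a perturbation identity for the triggering probabilities, $\tilde{q}(s,a,h)-q(s,a,h)=\sum_{(s',a'),s''}\sum_{i<h}q(s',a',i)(\tilde{p}(s''|s',a',i)-p(s''|s',a',i))\tilde{q}_{(s'',i+1)}(s,a,h)$, then sums against the rewards and recognizes the inner sum as $V^{\tilde{\bp},\pi}_{i+1}(s'')$. Your route is shorter and more familiar to the RL reader; the paper's route is more CMAB-flavored in that it isolates how the triggering probabilities themselves perturb, which fits the framework's narrative, but it buys nothing extra for this particular lemma. Your handling of the one genuine subtlety — identifying the unrolled $\bp$-expectation with $q_{s,a,h}^{\bp,\pi}$ via the forward recursion of the occupancy measure — is exactly the right justification, and no step requires more than the deterministic-policy Bellman equation and $r(s,a,h)\in[0,1]$.
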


\textbf{Confidence Region Function $\cC_t$ and Joint Oracle $\tilde{\cO}$.}
Based on the above lemma, we use the following confidence region that bounds the expected future value $\tilde{\bp}(s,a,h)^{\top}\bV^{\tilde{\bp},\pi}_{h+1}$ around the empirical future value $\hat{\bp}(s,a,h)^{\top}\bV^{\tilde{\bp},\pi}_{h+1}$, i.e.,
\begin{align}\label{eq:RL_conf_set_tight}
    &\cC_t(\pi)=\{\tilde{\bp}: \text{ for all 
  } (s,a,h), \tilde{\bp}(s,a,h)\in \bDelta_{S},\notag\\
    &\abs{[\tilde{\bp}(s,a,h)-\hat{\bp}_t(s,a,h)]^{\top}\bV_{h+1}^{\tilde{\bp},\pi}}\le \phi_{t}(s,a,h)\}.
\end{align}
where $\phi_{t}(s,a,h)$ is the confidence radius to be determined later on. According to \citet{dann2017unifying} (Lemma D.1 in particular), we can show that the exact joint oracle over $\cC_t$, i.e., $(\pi_t,\tilde{\bp}_t)=\argmax_{\pi, \tilde{\bp}\in \cC_t(\pi)}V_1^{\tilde{\bp},\pi}(s_1)$, is optimistic value iteration with bonus $\phi_{t}(s,a,h)$ described in \cref{alg:v-opt}.
For the value of $\phi_t(s,a,h)$, we only need $\bp\in \cC_t(\pi^*)$ as specified in \cref{thm:reg_CMAB-MT}, so one possibility is to set $\phi_t(s,a,h)=\tilde{O}(\sqrt{{\Var_{s'\sim \bp(s,a,h)}\left(V_{h+1}^*(s')\right)}/{N_{t-1}(s,a,h)}})$ according to the concentration of optimal future value (\cref{apdx_lem:concen_future}), saving a factor of $O(\sqrt{S})$ compared to the $\phi_t(s,a,h)$ in \cref{sec:RL_sec_loose}. However, since both $\bp$ and $\bV_{h+1}^*$
are unknown and inspired by \citet{zanette2019tighter}, we use the concentration of \cref{apdx_lem:concen_known_future} and set $\phi_t(s,a,h)$ using optimistic $\bar{\bV}_{t,h}$ and pessimistic $\ubar{\bV}_{t,h}$ as in line~\ref{line:onp_conf} in \cref{alg:v-opt}. Mapping back to the form of \cref{eq:conf_set}, we have $F_{t,s,a,h}=2\sqrt{\Var_{s'\sim \hat{\bp}_{t-1}(s,a,h)}\left(\bar{V}_{t,h+1}(s')\right)L}+2\sqrt{\E_{s'\sim \hat{\bp}_{t-1}(s,a,h)}\left[\bar{V}_{t,h+1}(s')-\ubar{V}_{t,h+1}(s')\right]^2L},I_{t,s,a,h}=5HL$.

\textbf{Regret Bound and Discussion.} 
Based on the above tighter confidence region function and the joint oracle, we have


\begin{theorem}\label{thm:reg_RL_tight}
For episodic RL fitting into the CMAB-MT framework with weight in \cref{lem:RL_tight_smooth}, CUCB-MT algorithm (\cref{alg:CUCB-MT}) with the confidence region function $\cC_t$ in \cref{eq:RL_conf_set_tight} and the joint oracle in \cref{alg:v-opt} achieves a regret bounded by $O(\sqrt{H^3SAT\log (SAHT)}+H^3S^2A\log^{3/2} (SAHT))$ according to the analysis procedure of \cref{thm:reg_CMAB-MT}.
\end{theorem}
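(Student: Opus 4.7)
The plan is to reduce the proof to a direct application of Theorem~\ref{thm:reg_CMAB-MT}, checking both of its concentration hypotheses and then bounding the problem-specific parameters $\bar F,\bar G,\bar I,\bar J$. Lemma~\ref{lem:RL_tight_smooth} already certifies the 1-norm MTPM+ smoothness condition with weight $\bw_{s,a,h}^{\tilde{\bp},\pi}=\bV_{h+1}^{\tilde{\bp},\pi}\in[0,H]^S$ (so $\bar w=H$), and the batch-size is $K=H$. What remains is (i) to identify $F_{t,s,a,h}, G_{t,s,a,h}, I_{t,s,a,h}, J_{t,s,a,h}$ that make $\mathcal{E}_{c,1}, \mathcal{E}_{c,2}$ hold with probability at least $1-1/T$, and (ii) to aggregate these over $(s,a,h)$ into suitable $\bar F, \bar G, \bar I, \bar J$.

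For $\mathcal{E}_{c,1}$, the benchmark weight $\bw_{s,a,h}^{\bp,\pi^*}=\bV_{h+1}^*$ is unknown, so I cannot plug a known vector into a Bernstein inequality. Following \citet{zanette2019tighter}, I would apply the concentration in Lemma~\ref{apdx_lem:concen_known_future} combined with the sandwich $\ubar{\bV}_{t,h+1}\le \bV_{h+1}^*\le \bar{\bV}_{t,h+1}$, which itself follows by backward induction from the greedy/pessimistic definitions of $\bar V,\ubar V$ inside Algorithm~\ref{alg:v-opt}. The variance of the unknown $\bV^*_{h+1}$ under $\bp$ can then be replaced by $\Var_{\hat\bp}(\bar V_{t,h+1})$ plus a penalty proportional to $\E_{\hat\bp}[(\bar V_{t,h+1}-\ubar V_{t,h+1})^2]$, giving precisely the bonus in line~\ref{line:onp_conf}, i.e.\ $F_{t,s,a,h}=2\sqrt{\Var_{\hat\bp}(\bar V_{t,h+1})L}+2\sqrt{\E_{\hat\bp}[(\bar V_{t,h+1}-\ubar V_{t,h+1})^2]L}$ and $I_{t,s,a,h}=5HL$ with $L=\log(SAHT/\delta')$. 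For $\mathcal{E}_{c,2}$, I would use the same sandwich to get $\|\bV_{h+1}^{\tilde\bp_t,\pi_t}-\bV_{h+1}^*\|_\infty\le\|\bar V_{t,h+1}-\ubar V_{t,h+1}\|_\infty$, and then apply an empirical Bernstein on $(\bp-\hat\bp)^\top(\bV^{\tilde\bp_t,\pi_t}_{h+1}-\bV^*_{h+1})$; this lets $G_{t,s,a,h}^2$ be absorbed (up to constants) into the $(\bar V-\ubar V)^2$ piece already paid for inside $F_{t,s,a,h}^2$, while $J_{t,s,a,h}=O(HL)$.

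The main obstacle is closing the loop on the $\bar V-\ubar V$ terms, which appear simultaneously in $F_{t,s,a,h}^2$ and $G_{t,s,a,h}^2$ and which depend on the algorithm's own running estimates. I would proceed in two stages. First, by the law of total variance $\sum_{s,a,h}q^{\bp,\pi_t}_{s,a,h}\Var_{\bp}(V^*_{h+1})\le H^2$, and a standard empirical-Bernstein transfer bounds $\Var_{\hat\bp}(\bar V_{t,h+1})$ by $\Var_\bp(V^*_{h+1})$ up to lower-order terms. Second, the pessimism-optimism telescoping $\E_{\pi_t}[\sum_h(\bar V_{t,h}(s_h)-\ubar V_{t,h}(s_h))]\le 2\sum_{s,a,h}q^{\bp,\pi_t}_{s,a,h}\phi_t(s,a,h)+\text{l.o.t.}$ produces a self-referential inequality which, once resolved, forces the trajectory-summed $(\bar V-\ubar V)^2$ contribution to be of lower order in $T$. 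Combining the two yields $\bar F+\bar G=\tilde O(H^2)$ and $\bar I+\bar J=O(HL)$, so Theorem~\ref{thm:reg_CMAB-MT} delivers a leading term $\sqrt{m(\bar F+\bar G)T}=O(\sqrt{H^3SAT\log(SAHT)})$, while a careful bookkeeping of the $(\bar V-\ubar V)$ recursion (as in the standard minimax-optimal RL analyses of \citet{azar2017minimax,zanette2019tighter}) upgrades the naive $m(\bar I+\bar J)\log(KT)$ contribution into the $O(H^3S^2A\log^{3/2}(SAHT))$ additive term claimed in the theorem.
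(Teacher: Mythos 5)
Your overall architecture is the paper's: reduce to \cref{thm:reg_CMAB-MT} via the MTPM+ decomposition into an optimistic term, an optimal-value term, and a cross (lower-order) term; certify $\mathcal{E}_{c,1}$ through the sandwich $\ubar{V}_{t,h+1}\le V^*_{h+1}\le\bar{V}_{t,h+1}$ and the variance-aware bonus of line~\ref{line:onp_conf}; and close the $\bar{V}-\ubar{V}$ recursion so that everything except the $\tilde{O}(H^2)$ variance contribution lands in lower order. However, one step as written would fail: the ``law of total variance'' you invoke, $\sum_{s,a,h}q^{\bp,\pi_t}_{s,a,h}\Var_{s'\sim\bp(s,a,h)}\left(V^*_{h+1}(s')\right)\le H^2$, is false in general. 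The LTV identity (\cref{apdx_lem:ltv}) requires the occupancy measure and the value function to belong to the \emph{same} policy, i.e.\ it bounds $\sum_{s,a,h}q^{\bp,\pi_t}_{s,a,h}\Var\left(V^{\pi_t}_{h+1}\right)$; with $V^*$ in place of $V^{\pi_t}$ the sum can exceed $H^2$ by an amount controlled only by how suboptimal $\pi_t$ is. The paper bridges this with \cref{apdx_lem:cum_diff_var}, which converts $\Var(V^*)$ into $\Var(V^{\pi_t})$ at the cost of an extra term $\sqrt{H^2L}\sqrt{\sum_{s,a,h}q^{\bp,\pi_t}_{s,a,h}\Delta_{\pi_t}/N_{t-1}(s,a,h)}$. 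Because this term contains $\Delta_{\pi_t}$ itself, the amortization in the final step needs an additional filtered event of the form $\Delta_{\pi_t}\le c_2\sqrt{\sum_{s,a,h} q^{\bp,\pi_t}_{s,a,h}\Delta_{\pi_t}/N_{t-1}(s,a,h)}$, resolved by squaring and then treated like the $1/N$ events. This self-reference is distinct from the $\bar{V}-\ubar{V}$ recursion you describe and is not subsumed by it.

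A second, smaller inconsistency: you claim $\bar{I}+\bar{J}=O(HL)$, which with $m=SAH$ would give an additive term of order $H^2SA$ up to logs, not the claimed $O(H^3S^2A\log^{3/2}(SAHT))$. In the paper the cross term is bounded entrywise by Bernstein, Cauchy--Schwarz over $s'$, and \cref{apdx_lem:cum_diff_onp}, yielding $\bar{J}=O(\sqrt{H^4S^2L^3})$ with $\bar{G}=0$, and it is $m\bar{J}$ that produces the $H^3S^2A\log^{3/2}$ term directly; there is no subsequent ``upgrade'' of a smaller bound.
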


Looking at the above regret bound, we obtain a minimax optimal worst-case regret matching the lower bound $\Omega(\sqrt{H^3SAT})$ up to logarithmic factors. Our regret also saves at least a $O(\log^{2}(SAHT))$ factor for the leading $O\left(\sqrt{H^3SAT\log (SAHT)}\right)$ term compared with $O \left(\sqrt{H^3SAT\log^5 (SAHT)}\right)$ regret by \citet{zanette2019tighter} and a $O(\sqrt{\log(SAHT)})$ factor compared with the state of the art result \cite{zhang2021isreinforcement,zhang2023settling}. This is due to our tight analysis that uses sharp CMAB proof techniques and see \cref{sec:tight_RL} for details. 
As a by-product, we give a gap-dependent bound that scales with $O(\log T)$. In the worst case, our result is at most a factor of $O(1/q^*)$ larger than \citet{simchowitz2019non} that uses involved clipping techniques. However, when considering gap-independent bound, ours still improves theirs by a factor of $O(\sqrt{\log (SAHT)})$, see \cref{apdx_sec:proof_gap_dependent_bound} for details. 

\section{Applications Beyond Episodic RL}\label{sec:beyond_RL}\label{sec:CDP}
In this section, we first consider the probabilistic maximum coverage problem for goods distribution (PMC-GD), which is a new variant of the PMC problem~\cite{chen2013combinatorial,merlis2019batch,liu2022batch}. For PMC-GD, we show that CMAB-MT framework can give an improved regret bound compared with using the CMAB-T framework. 


\textbf{Application Setup.} The PMC-GD problem is modeled by a weighted bipartite graph $G=(U,V,E,p)$, 
where $U$ are the nodes to be selected, $V$ are the nodes to be covered, and $E$ are the edges between $U$ and $V$. Each edge $(u,v)$ in $E$ is associated with a probability $p(u,v)$. 
The probability $p(u,v)$ indicates the likelihood that node $u$ from $U$ can cover a target node $v$ in $V$. In the classical PMC problem, each selected node $u'$ can independently cover $v$, and edges $(u',v)$ are independent Bernoulli random variables with mean $p(u',v)$. In goods distribution applications \cite{alkan1991fair,chevaleyre2017distributed}, the good (e.g., food, medicine, product, coupon) given to nodes in $U$ is indivisible and $u$ can only randomly distribute it to exactly one of the target users in $V$. Thus for PMC-GD, each selected node $u'$ will cover one of its neighbors in $V$, and the edges $((u',v))_{v: (u',v)\in E}$ form a multinoulli distribution with $\sum_{v:(u',v)\in E}p(u',v)\le1$. 
The coverage means the target node $v\in V$ receives such indivisible goods. The objective of the decision maker is to select at most $k$ nodes in $U$ to maximize the number of covered nodes in $V$.

For the online PMC-GD problem, we consider $T$ rounds of repeated PMC-GD where the edge probabilities $p(u,v)$'s are unknown initially. Without loss of generality, we assume $G$ is a complete bipartite graph. For each round $t\in [T]$, the agent selects $k$ nodes in $U$ as combinatorial action $\pi_t$, the feedback are $k$ node pairs $(u,v)$, where $v$ receives indivisible goods from $u\in \pi_t$.

\textbf{Fitting into CMAB-MT Framework.} The PMC-GD problem fits into CMAB-MT framework as follows: the nodes $U$ are the set of multivariant base arms, the unknown outcome distribution $D\in \cD$ is the joint of $m = |U|$ multinoulli distribution with dimension $d=|V|$, the vectors $\bmu_i=\bp(i,\cdot)\in \bDelta_V$ are unknown mean vectors for $i\in U$, the set of combinatorial action $\Pi$ are any set of nodes $\pi\subseteq U$ with size $|\pi|\le k$. For the arm triggering in round $t$, the triggering set is $\tau_t=\pi_t$. Let $\bX_t=\{0,1\}^{|U|\times |V|}$ be the random outcome where $X_{t,u,v}=1$ if and only if user $u$ sends the good to user $v$ at time step $t$. The total reward is $R(\pi_t,\bX_t,\tau_t)=\sum_{v \in V}\I\{\exists\, u \in \pi_t \text{ s.t. } X_{t, u,v}=1\}$, and the expected reward $r(\pi_t;\bp)=\sum_{v \in V}\left(1-\prod_{u\in \pi_t}\left(1-p(u,v)\right)\right)$.

\textbf{Key Quantities and Conditions.}
For the triggering probability, $q_i^{\bp,\pi_t}=1$ if $i\in \pi_t$ and $q_i^{\bp,\pi_t}=0$ otherwise. And the batch-size $K=k$.
\begin{lemma}\label{lem:OCD_smooth}
    PMC-GD is a CMAB-MT instance, which satisfies 1-norm MTPM smoothness (\cref{cond:general_smooth}) with weights $\bw_u^{\tilde{\bp},\pi}=\boldsymbol{1}$, i.e.,
    $\abs{r(\pi;\tilde{\bp})-r(\pi;\bp)}\le \sum_{u\in \pi} \norm{\tilde{\bp}(u,\cdot)-\bp(u,\cdot)}_1$.
\end{lemma}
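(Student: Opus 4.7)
The plan is to prove this by a standard hybrid (swapping) argument exploiting the multilinearity of $\prod_{u\in\pi}(1-p(u,v))$ in the per-arm parameters. Since $q_u^{\bp,\pi}=1$ for $u\in\pi$ and $0$ otherwise, the target inequality is exactly $\abs{r(\pi;\tilde{\bp})-r(\pi;\bp)}\le\sum_{u\in\pi}\norm{\tilde{\bp}(u,\cdot)-\bp(u,\cdot)}_1$, so it suffices to bound the reward change one arm at a time and then telescope.

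First, I would fix an arbitrary ordering $\pi=\{u_1,\dots,u_k\}$ and define the hybrid parameter vectors $\bp^{(0)},\bp^{(1)},\dots,\bp^{(k)}$, where $\bp^{(i)}$ agrees with $\tilde{\bp}$ on $\{u_1,\dots,u_i\}$ and with $\bp$ on $\{u_{i+1},\dots,u_k\}$, so that $\bp^{(0)}=\bp$ and $\bp^{(k)}=\tilde{\bp}$. By the triangle inequality,
\begin{equation*}
\abs{r(\pi;\tilde{\bp})-r(\pi;\bp)}\le\sum_{i=1}^{k}\abs{r(\pi;\bp^{(i)})-r(\pi;\bp^{(i-1)})}.
\end{equation*}

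Next, I would compute a single swap. Using the reward formula $r(\pi;\bp)=\sum_{v\in V}\bigl(1-\prod_{u\in\pi}(1-p(u,v))\bigr)$, the two hybrids $\bp^{(i-1)}$ and $\bp^{(i)}$ differ only on arm $u_i$, so
\begin{equation*}
r(\pi;\bp^{(i)})-r(\pi;\bp^{(i-1)})=\sum_{v\in V}\Bigl[\prod_{j<i}(1-\tilde{p}(u_j,v))\prod_{j>i}(1-p(u_j,v))\Bigr]\bigl(\tilde{p}(u_i,v)-p(u_i,v)\bigr).
\end{equation*}
The bracketed coefficient lies in $[0,1]$, so taking absolute values inside the sum over $v$ gives $\abs{r(\pi;\bp^{(i)})-r(\pi;\bp^{(i-1)})}\le\sum_{v\in V}\abs{\tilde{p}(u_i,v)-p(u_i,v)}=\norm{\tilde{\bp}(u_i,\cdot)-\bp(u_i,\cdot)}_1$. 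Summing these bounds over $i=1,\dots,k$ yields the claim.

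There is no real obstacle here; the only thing to verify carefully is that (i) the bracketed product of $(1-\cdot)$ terms is genuinely in $[0,1]$, which follows from each factor being in $[0,1]$ (since each $p(u,v),\tilde{p}(u,v)\in[0,1]$ as entries of probability vectors in $\bDelta_V$), and (ii) the triggering weights match: $q_u^{\bp,\pi}=\I\{u\in\pi\}$, so the right-hand side of \cref{cond:general_smooth} with $\bw_u^{\tilde{\bp},\pi}=\boldsymbol{1}$ exactly equals $\sum_{u\in\pi}\norm{\tilde{\bp}(u,\cdot)-\bp(u,\cdot)}_1$. Note that this also gives the slightly stronger signed (MTPM+) form for each individual term before taking the outer absolute value, but the triangle-inequality step only gives the MTPM (absolute) version, which is what the lemma claims.
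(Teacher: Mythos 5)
Your proof is correct and is essentially the same argument as the paper's: the one-arm-at-a-time hybrid swap is precisely the telescoping product identity $\prod_i a_i-\prod_i b_i=\sum_i\prod_{j<i}a_j\,(a_i-b_i)\prod_{k>i}b_k$ that the paper invokes for its final inequality, with all factors in $[0,1]$. The only difference is cosmetic — you telescope over arms before summing over $v$, while the paper pushes the absolute value inside the sum over $v$ first — and your observations about $q_u^{\bp,\pi}=\I\{u\in\pi\}$ and the per-swap signed form are consistent with the paper.
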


\textbf{Confidence Region Function $\cC_t$ and Joint Oracle $\tilde{\cO}$.}
Since each base arm $u$'s outcome follows from multinoulli distribution, indicating $\norm{\bp(u,\cdot)-\hat{\bp}_{t-1}(u,\cdot)}_1\le\sqrt{\frac{2|V|\log (2/\delta)}{N_{t-1,u}}}$ with probability at least $1-\delta$. The confidence region defined by \cref{eq:conf_set} becomes 
\begin{align}\label{eq:conf_set_PMC_GD}
    \cC_t=\{\tilde{\bp}:&\text{ for any } u\in U, \bp(u,\cdot)\in \bDelta_{V},\notag\\ &\norm{\bp(u,\cdot)-\hat{\bp}_{t-1}(u,\cdot)}_1\le \phi_{t,u}\},
\end{align}
and does not depend on the action $\pi$, where $\phi_{t,u}=F_{t,i}\sqrt{\frac{1}{N_{t-1,u}}}+\frac{I_{t,i}}{N_{t-1,u}}$ and $F_{t,i}=\sqrt{2|V|\log (|U||V|T/\delta')}, I_{t,i}=0$. 

The joint oracle is $(\pi_t,\tilde{\bp}_t)=\argmax_{|\pi|\le k, \tilde{\bp}_t\in\cC_t}r(\pi;\tilde{\bp})$. 
A new challenge arises since the above joint oracle is a hard optimization problem. In particular, $\arg\max_{|\pi|\le k }r(\pi;\tilde{\bp})$ itself is NP-hard given $\tilde{\bp}$, and now we also have to jointly optimize $\tilde{\bp}$ within the confidence region. 
To obtain an efficient oracle, our strategy is to bypass $r(\pi;\tilde{\bp})$ and optimize an upper bound of $r(\pi;\tilde{\bp})$ using \cref{lem:OCD_smooth} for all $\tilde{\bp}$ as the pseudo reward function for PMC-GD:
\begin{align}\label{eq:pseudo_reward}
\bar{r}_t(\pi;\tilde{\bp})=r(\pi;\hat{\bp}_{t-1})+\sum_{u\in \pi}\norm{\tilde{\bp}(u,\cdot)-\hat{\bp}_{t-1}(u,\cdot)}_1
\end{align} 

We now optimize
$(\pi_t,\tilde{\bp}_t)=\argmax_{|\pi|\le k, \tilde{\bp}\in\cC_t}\bar{r}_t(\pi;\tilde{\bp})$, which is solved in \cref{alg:oracle_PMC_GD}. Based on \cref{eq:conf_set_PMC_GD}, first, we can find optimal $\tilde{\bp}_t$ in line~\ref{line:alg3_p} of \cref{alg:oracle_PMC_GD}. 
Then in line~\ref{line:alg3_pi}, we can optimize $\bar{r}_t(\pi;\tilde{\bp}_t)=r(\pi;\hat{\bp}_{t-1})+\sum_{u\in\pi}q_u$ efficiently using a greedy algorithm with $O(k|U|)$ calls to $\bar{r}_t(\pi;\tilde{\bp}_t)$, yielding a $(1-1/e,1)$-approximation since $\bar{r}_t(\pi;\tilde{\bp}_t)$ is a submodular function regarding $\pi\subseteq U$. Since we use pseudo reward $\bar{r}_t(\pi;\bp)$, mapping back the true reward $r(\pi;\bp)$ will have an additional $\sum_{t\in[T]}\bar{r}_t(\pi_t;\bp)-r(\pi_t;\bp)$ term for the final regret, see \cref{apdx_sec:beyond_RL} for details.

\begin{algorithm}[t]
	\caption{Efficient Joint Oracle for PMC-GD in round $t$}\label{alg:oracle_PMC_GD}
		\resizebox{.93\columnwidth}{!}{
\begin{minipage}{\columnwidth}
	\begin{algorithmic}[1]
	    \STATE \textbf{Input:} Counter $N_{t-1,u}$, empirical edge probability $\hat{\bp}_{t-1}(u,\cdot)$ for all $u\in U$, and $\delta'=1/(2T)$.  
	   \STATE \textbf{Initialize:} $\phi_{t,u}=\sqrt{\frac{2|V|\log (|U||V|T/\delta')}{N_{t-1,u}}}$ for all $u\in U$.
    \STATE For all $u \in U$, compute $\tilde{\bp}_t(u,\cdot)=\argmax_{\bp\in \bDelta_{V}:\norm{\bp-\hat{\bp}_{t-1}(u,\cdot)}_1\le \phi_{t,u}}\norm{\bp-\hat{\bp}_{t-1}(u,\cdot)}_1$.\label{line:alg3_p} 
    \STATE For all $u\in U$, set $q_u=\norm{\tilde{\bp}_t(u,\cdot)-\hat{\bp}_{t-1}(u,\cdot)}_1$.
    \STATE $\pi_t=\argmax_{\pi\in\Pi}r(\pi;\hat{\bp}_{t-1})+\sum_{u\in\pi}q_u$.\label{line:alg3_pi}
    \STATE \textbf{Return:} $\pi_t,\tilde{\bp}_t$.
		\end{algorithmic}   
				\end{minipage}}
\end{algorithm}

\textbf{Regret Bound and Discussion.} Based on the above argument, we have the following theorem.
\begin{theorem}\label{thm:reg_PMC_GD}

     For PMC-GD equipped with pseudo-reward in \cref{eq:pseudo_reward}, CUCB-MT algorithm (\cref{alg:CUCB-MT}) with the confidence region function $\cC_t$ in \cref{eq:conf_set_PMC_GD} and the joint oracle in \cref{alg:oracle_PMC_GD} satisfies the requirements of \cref{thm:reg_CMAB-MT} with parameters $\bar{F}=\tilde{O}(k|V|)$, $\bar{G}=\bar{I}=\bar{J}=0$, and thus achieves a $(1-1/e,1)$-approximate regret bounded by $\tilde{O}(\sqrt{k|U||V|T})$.
\end{theorem}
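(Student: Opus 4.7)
The plan is to reduce the claim to \cref{thm:reg_CMAB-MT} by verifying the two concentration events, identifying the problem-specific parameters $\bar{F},\bar{G},\bar{I},\bar{J}$, and then patching the analysis to accommodate two PMC-GD-specific subtleties: (i) the algorithm optimizes a pseudo-reward $\bar{r}_t$ rather than the true reward $r$, and (ii) the greedy step yields only a $(1-1/e)$-approximation of the pseudo-reward objective. By \cref{lem:OCD_smooth}, the smoothness weights are constant $\bw_u^{\tilde{\bp},\pi}=\boldsymbol{1}$ with $\bar{w}=1$, so the difference $\bw_u^{\tilde{\bp}_t,\pi_t}-\bw_u^{\bp,\pi^*}=\boldsymbol{0}$ vanishes identically and $\mathcal{E}_{c,2}$ holds trivially with $G_{t,u}=J_{t,u}=0$, giving $\bar{G}=\bar{J}=0$.

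For $\mathcal{E}_{c,1}$, since the weights are the all-ones vector the left-hand side of \cref{eq:thm_concen_1} reduces to $\|\bp(u,\cdot)-\hat{\bp}_{t-1}(u,\cdot)\|_1$. I would invoke the multinoulli $\ell_1$ concentration (\cref{apdx_lem:concen_tran}) together with a union bound over $u\in U$ and $t\in[T]$ at confidence level $\delta'=1/(2T)$ to conclude that with probability at least $1-1/T$, the inequality holds with $F_{t,u}=\sqrt{2|V|\log(|U||V|T/\delta')}$ and $I_{t,u}=0$. Combining with $q_u^{\bp,\pi_t}=\mathbb{I}\{u\in\pi_t\}$ and $|\pi_t|\le k$, we get $\sum_{u\in[m]}q_u^{\bp,\pi_t}F_{t,u}^2\le 2k|V|\log(|U||V|T/\delta')=\tilde{O}(k|V|)$, hence $\bar{F}=\tilde{O}(k|V|)$ and $\bar{I}=0$.

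The main obstacle is translating the analysis from the pseudo-reward back to the true reward in the presence of the $(1-1/e)$ greedy guarantee. By the construction of $\bar{r}_t$ in \cref{eq:pseudo_reward} and \cref{lem:OCD_smooth}, on $\mathcal{E}_{c,1}$ we have $\bar{r}_t(\pi;\tilde{\bp})\ge r(\pi;\bp)$ for every $\pi$ and every $\tilde{\bp}\in\cC_t$, so $\bar{r}_t$ is an optimistic upper bound for the true reward at the true parameter. Thus the pair $(\pi_t,\tilde{\bp}_t)$ returned by \cref{alg:oracle_PMC_GD} satisfies $\bar{r}_t(\pi_t;\tilde{\bp}_t)\ge (1-1/e)\max_{|\pi|\le k,\tilde{\bp}\in\cC_t}\bar{r}_t(\pi;\tilde{\bp})\ge (1-1/e)\,r(\pi^*;\bp)$, which is exactly the $(\alpha,\beta)=(1-1/e,1)$-approximation condition that \cref{thm:reg_CMAB-MT} requires, but now with the \emph{pseudo}-reward confidence region playing the role of the optimistic reward. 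The per-round regret then decomposes as $(1-1/e)r(\pi^*;\bmu)-r(\pi_t;\bmu)\le \bar{r}_t(\pi_t;\tilde{\bp}_t)-r(\pi_t;\bmu)\le 2\sum_{u\in\pi_t}\|\hat{\bp}_{t-1}(u,\cdot)-\bp(u,\cdot)\|_1$ via one application of \cref{lem:OCD_smooth} on each side, and this is of precisely the form that the CMAB-MT regret-amortization machinery in \cref{thm:reg_CMAB-MT} controls (up to a constant factor absorbed in the $\tilde{O}$).

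Plugging $\bar{F}=\tilde{O}(k|V|)$ and $\bar{G}=\bar{I}=\bar{J}=0$ into the gap-independent bound of \cref{thm:reg_CMAB-MT} with $m=|U|$ yields $\tilde{O}(\sqrt{m\bar{F}T})=\tilde{O}(\sqrt{|U|\cdot k|V|\cdot T})=\tilde{O}(\sqrt{k|U||V|T})$. The low-probability event $\mathcal{E}_{c,1}^c$ contributes at most $k\cdot T\cdot (1/T)=k=O(1)$ to the expected regret and is absorbed. The comparison to the $\tilde{O}(\sqrt{k|U||V|^2 T})$ bound one would obtain by unrolling each multinoulli into $|V|$ Bernoullis and applying the CMAB-T result of \citet{wang2017improving} then gives the advertised $\tilde{O}(\sqrt{|V|/k})$ improvement, since the tighter multinoulli concentration $\sqrt{|V|/N}$ replaces the naive union-bound rate $|V|\sqrt{1/N}$.
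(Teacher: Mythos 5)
Your proposal is correct and follows essentially the same route as the paper: verify the multinoulli $\ell_1$ concentration to get $F_{t,u}=\tilde O(\sqrt{|V|})$, $\bar F=\tilde O(k|V|)$ and $\bar G=\bar I=\bar J=0$ (constant weights), use optimism of the pseudo-reward at the true parameter together with the $(1-1/e)$ greedy guarantee, decompose $\bar r_t(\pi_t;\tilde\bp_t)-r(\pi_t;\bp)$ into the pseudo-reward gap plus the additional term $\bar r_t(\pi_t;\bp)-r(\pi_t;\bp)$, and feed the resulting $O\bigl(\sum_{u\in\pi_t}\sqrt{|V|L/N_{t-1,u}}\bigr)$ bound into the amortization of \cref{thm:reg_CMAB-MT}. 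The only cosmetic slips are that $\bar r_t(\pi;\tilde\bp)\ge r(\pi;\bp)$ need not hold for \emph{every} $\tilde\bp\in\cC_t$ (only for the maximizer, which is all you use), and the term $\sum_{u\in\pi_t}\norm{\tilde\bp_t(u,\cdot)-\hat\bp_{t-1}(u,\cdot)}_1$ should be bounded by the radius $\sum_{u\in\pi_t}\phi_{t,u}$ rather than by $\sum_{u\in\pi_t}\norm{\bp(u,\cdot)-\hat\bp_{t-1}(u,\cdot)}_1$; both fixes only affect constants.
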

Compared with existing works, our regret improves upon CUCB-T algorithm \cite{wang2017improving} with regret $\tilde{O}(\sqrt{k|U||V|^2T})$ by a factor of $\tilde{O}(\sqrt{V})$, and the recent variance-adaptive algorithms \cite{merlis2019batch,liu2022batch} with regret $\tilde{O}(\sqrt{|U||V|^2T})$ by a factor of $\tilde{O}(\sqrt{|V|/k})$ when $|V|\ge k$, where in most application scenarios $|V|\gg k$ \cite{chen2016combinatorial,liu2023variance}.

\section{Conclusion and Future Directions}\label{sec:conclusion}
In this work, we propose a new combinatorial multi-armed bandit framework with multivariant and probabilistically triggering arms (CMAB-MT). Through our framework, we  build the first connection between episodic RL and CMAB literature, achieving matching or improved results for episodic RL and beyond. For future work, it will be interesting to study the CMAB-MT framework when considering the linear or nonlinear function approximation. One can also explore new application scenarios that can fit into the CMAB-MT framework for improved results.

\section*{Acknowledgements}
The work of Xutong Liu was partially supported by a fellowship award from the Research Grants Council of the Hong Kong Special Administrative Region, China (CUHK PDFS2324-4S04).
The work of Xutong Liu was done during his visit at the University of Massachusetts Amherst.
The work of John C.S. Lui was supported in part by the RGC GRF 14215722. The work of Mohammad Hajiesmaili was supported by CPS-2136199, CNS-2106299, CNS-2102963, CCF-2325956, and CAREER-2045641. The corresponding author Shuai Li is supported by National Science and Technology Major Project (2022ZD0114804) and is partly supported by the Guangdong Provincial Key Laboratory of Mathematical Foundations for Artificial Intelligence (2023B1212010001).

\section*{Impact Statement}
This paper presents a theoretical study on multi-armed bandits and reinforcement learning. There are many potential societal consequences of our work, none of which we feel must be specifically highlighted here.
\bibliography{main}

\begin{thebibliography}{59}
\providecommand{\natexlab}[1]{#1}
\providecommand{\url}[1]{\texttt{#1}}
\expandafter\ifx\csname urlstyle\endcsname\relax
  \providecommand{\doi}[1]{doi: #1}\else
  \providecommand{\doi}{doi: \begingroup \urlstyle{rm}\Url}\fi

\bibitem[Agarwal et~al.(2019)Agarwal, Jiang, Kakade, and Sun]{agarwal2019reinforcement}
Agarwal, A., Jiang, N., Kakade, S.~M., and Sun, W.
\newblock Reinforcement learning: Theory and algorithms.
\newblock \emph{CS Dept., UW Seattle, Seattle, WA, USA, Tech. Rep}, 32, 2019.

\bibitem[Alkan et~al.(1991)Alkan, Demange, and Gale]{alkan1991fair}
Alkan, A., Demange, G., and Gale, D.
\newblock Fair allocation of indivisible goods and criteria of justice.
\newblock \emph{Econometrica: Journal of the Econometric Society}, pp.\  1023--1039, 1991.

\bibitem[Auer et~al.(2002)Auer, Cesa-Bianchi, and Fischer]{auer2002finite}
Auer, P., Cesa-Bianchi, N., and Fischer, P.
\newblock Finite-time analysis of the multiarmed bandit problem.
\newblock \emph{Machine learning}, 47\penalty0 (2-3):\penalty0 235--256, 2002.

\bibitem[Azar et~al.(2017)Azar, Osband, and Munos]{azar2017minimax}
Azar, M.~G., Osband, I., and Munos, R.
\newblock Minimax regret bounds for reinforcement learning.
\newblock In \emph{International Conference on Machine Learning}, pp.\  263--272. PMLR, 2017.

\bibitem[Chen et~al.(2013{\natexlab{a}})Chen, Lakshmanan, and Castillo]{chen2013information}
Chen, W., Lakshmanan, L.~V., and Castillo, C.
\newblock Information and influence propagation in social networks.
\newblock \emph{Synthesis Lectures on Data Management}, 5\penalty0 (4):\penalty0 1--177, 2013{\natexlab{a}}.

\bibitem[Chen et~al.(2013{\natexlab{b}})Chen, Wang, and Yuan]{chen2013combinatorial}
Chen, W., Wang, Y., and Yuan, Y.
\newblock Combinatorial multi-armed bandit: General framework and applications.
\newblock In \emph{International Conference on Machine Learning}, pp.\  151--159. PMLR, 2013{\natexlab{b}}.

\bibitem[Chen et~al.(2016)Chen, Wang, Yuan, and Wang]{chen2016combinatorial}
Chen, W., Wang, Y., Yuan, Y., and Wang, Q.
\newblock Combinatorial multi-armed bandit and its extension to probabilistically triggered arms.
\newblock \emph{The Journal of Machine Learning Research}, 17\penalty0 (1):\penalty0 1746--1778, 2016.

\bibitem[Chevaleyre et~al.(2017)Chevaleyre, Endriss, and Maudet]{chevaleyre2017distributed}
Chevaleyre, Y., Endriss, U., and Maudet, N.
\newblock Distributed fair allocation of indivisible goods.
\newblock \emph{Artificial Intelligence}, 242:\penalty0 1--22, 2017.

\bibitem[Combes et~al.(2015)Combes, Talebi Mazraeh~Shahi, Proutiere, et~al.]{combes2015combinatorial}
Combes, R., Talebi Mazraeh~Shahi, M.~S., Proutiere, A., et~al.
\newblock Combinatorial bandits revisited.
\newblock \emph{Advances in neural information processing systems}, 28, 2015.

\bibitem[Dann et~al.(2017)Dann, Lattimore, and Brunskill]{dann2017unifying}
Dann, C., Lattimore, T., and Brunskill, E.
\newblock Unifying pac and regret: Uniform pac bounds for episodic reinforcement learning.
\newblock \emph{Advances in Neural Information Processing Systems}, 30, 2017.

\bibitem[Dann et~al.(2021)Dann, Marinov, Mohri, and Zimmert]{dann2021beyond}
Dann, C., Marinov, T.~V., Mohri, M., and Zimmert, J.
\newblock Beyond value-function gaps: Improved instance-dependent regret bounds for episodic reinforcement learning.
\newblock \emph{Advances in Neural Information Processing Systems}, 34:\penalty0 1--12, 2021.

\bibitem[Degenne \& Perchet(2016)Degenne and Perchet]{degenne2016combinatorial}
Degenne, R. and Perchet, V.
\newblock Combinatorial semi-bandit with known covariance.
\newblock In \emph{Advances in Neural Information Processing Systems}, pp.\  2972--2980, 2016.

\bibitem[Demirel \& Tekin(2021)Demirel and Tekin]{demirel2021combinatorial}
Demirel, I. and Tekin, C.
\newblock Combinatorial gaussian process bandits with probabilistically triggered arms.
\newblock In \emph{International Conference on Artificial Intelligence and Statistics}, pp.\  3844--3852. PMLR, 2021.

\bibitem[Du et~al.(2021)Du, Kakade, Lee, Lovett, Mahajan, Sun, and Wang]{du2021bilinear}
Du, S., Kakade, S., Lee, J., Lovett, S., Mahajan, G., Sun, W., and Wang, R.
\newblock Bilinear classes: A structural framework for provable generalization in rl.
\newblock In \emph{International Conference on Machine Learning}, pp.\  2826--2836. PMLR, 2021.

\bibitem[Foster et~al.(2021)Foster, Kakade, Qian, and Rakhlin]{foster2021statistical}
Foster, D.~J., Kakade, S.~M., Qian, J., and Rakhlin, A.
\newblock The statistical complexity of interactive decision making.
\newblock \emph{arXiv preprint arXiv:2112.13487}, 2021.

\bibitem[Gai et~al.(2012)Gai, Krishnamachari, and Jain]{gai2012combinatorial}
Gai, Y., Krishnamachari, B., and Jain, R.
\newblock Combinatorial network optimization with unknown variables: Multi-armed bandits with linear rewards and individual observations.
\newblock \emph{IEEE/ACM Transactions on Networking (TON)}, 20\penalty0 (5):\penalty0 1466--1478, 2012.

\bibitem[Huyuk \& Tekin(2019)Huyuk and Tekin]{huyuk2019analysis}
Huyuk, A. and Tekin, C.
\newblock Analysis of thompson sampling for combinatorial multi-armed bandit with probabilistically triggered arms.
\newblock In \emph{The 22nd international conference on artificial intelligence and statistics}, pp.\  1322--1330. PMLR, 2019.

\bibitem[Hwang et~al.(2023)Hwang, Chai, and Oh]{hwang2023combinatorial}
Hwang, T., Chai, K., and Oh, M.-h.
\newblock Combinatorial neural bandits.
\newblock In \emph{International Conference on Machine Learning}, pp.\  14203--14236. PMLR, 2023.

\bibitem[Jaksch et~al.(2010)Jaksch, Ortner, and Auer]{jaksch2010near}
Jaksch, T., Ortner, R., and Auer, P.
\newblock Near-optimal regret bounds for reinforcement learning.
\newblock \emph{Journal of Machine Learning Research}, 11:\penalty0 1563--1600, 2010.

\bibitem[Jiang et~al.(2017)Jiang, Krishnamurthy, Agarwal, Langford, and Schapire]{jiang@2017}
Jiang, N., Krishnamurthy, A., Agarwal, A., Langford, J., and Schapire, R.~E.
\newblock Contextual decision processes with low {B}ellman rank are {PAC}-learnable.
\newblock In \emph{Proceedings of the 34th International Conference on Machine Learning}, volume~70 of \emph{Proceedings of Machine Learning Research}, pp.\  1704--1713. PMLR, 06--11 Aug 2017.

\bibitem[Jin et~al.(2020)Jin, Yang, Wang, and Jordan]{jin2020provably}
Jin, C., Yang, Z., Wang, Z., and Jordan, M.~I.
\newblock Provably efficient reinforcement learning with linear function approximation.
\newblock In \emph{Conference on Learning Theory}, pp.\  2137--2143. PMLR, 2020.

\bibitem[Jin et~al.(2021)Jin, Liu, and Miryoosefi]{jin2021bellman}
Jin, C., Liu, Q., and Miryoosefi, S.
\newblock Bellman eluder dimension: New rich classes of rl problems, and sample-efficient algorithms.
\newblock \emph{Advances in neural information processing systems}, 34:\penalty0 13406--13418, 2021.

\bibitem[Kveton et~al.(2015{\natexlab{a}})Kveton, Szepesvari, Wen, and Ashkan]{kveton2015cascading}
Kveton, B., Szepesvari, C., Wen, Z., and Ashkan, A.
\newblock Cascading bandits: Learning to rank in the cascade model.
\newblock In \emph{International Conference on Machine Learning}, pp.\  767--776. PMLR, 2015{\natexlab{a}}.

\bibitem[Kveton et~al.(2015{\natexlab{b}})Kveton, Wen, Ashkan, and Szepesv{\'a}ri]{kveton2015combinatorial}
Kveton, B., Wen, Z., Ashkan, A., and Szepesv{\'a}ri, C.
\newblock Combinatorial cascading bandits.
\newblock In \emph{Proceedings of the 28th International Conference on Neural Information Processing Systems-Volume 1}, pp.\  1450--1458, 2015{\natexlab{b}}.

\bibitem[Kveton et~al.(2015{\natexlab{c}})Kveton, Wen, Ashkan, and Szepesvari]{kveton2015tight}
Kveton, B., Wen, Z., Ashkan, A., and Szepesvari, C.
\newblock Tight regret bounds for stochastic combinatorial semi-bandits.
\newblock In \emph{AISTATS}, 2015{\natexlab{c}}.

\bibitem[Lattimore \& Szepesv{\'a}ri(2020)Lattimore and Szepesv{\'a}ri]{lattimore2020bandit}
Lattimore, T. and Szepesv{\'a}ri, C.
\newblock \emph{Bandit algorithms}.
\newblock Cambridge University Press, 2020.

\bibitem[Li et~al.(2021)Li, Shi, Chen, Gu, and Chi]{li2021breaking}
Li, G., Shi, L., Chen, Y., Gu, Y., and Chi, Y.
\newblock Breaking the sample complexity barrier to regret-optimal model-free reinforcement learning.
\newblock \emph{Advances in Neural Information Processing Systems}, 34:\penalty0 17762--17776, 2021.

\bibitem[Li et~al.(2016)Li, Wang, Zhang, and Chen]{li2016contextual}
Li, S., Wang, B., Zhang, S., and Chen, W.
\newblock Contextual combinatorial cascading bandits.
\newblock In \emph{International conference on machine learning}, pp.\  1245--1253. PMLR, 2016.

\bibitem[Liu et~al.(2021)Liu, Zuo, Chen, Chen, and Lui]{liu2021multi}
Liu, X., Zuo, J., Chen, X., Chen, W., and Lui, J.~C.
\newblock Multi-layered network exploration via random walks: From offline optimization to online learning.
\newblock In \emph{International Conference on Machine Learning}, pp.\  7057--7066. PMLR, 2021.

\bibitem[Liu et~al.(2022)Liu, Zuo, Wang, Joe-Wong, Lui, and Chen]{liu2022batch}
Liu, X., Zuo, J., Wang, S., Joe-Wong, C., Lui, J., and Chen, W.
\newblock Batch-size independent regret bounds for combinatorial semi-bandits with probabilistically triggered arms or independent arms.
\newblock In \emph{Advances in Neural Information Processing Systems}, 2022.

\bibitem[Liu et~al.(2023{\natexlab{a}})Liu, Zuo, Wang, Lui, Hajiesmaili, Wierman, and Chen]{liu2023contextual}
Liu, X., Zuo, J., Wang, S., Lui, J.~C., Hajiesmaili, M., Wierman, A., and Chen, W.
\newblock Contextual combinatorial bandits with probabilistically triggered arms.
\newblock In \emph{International Conference on Machine Learning}, pp.\  22559--22593. PMLR, 2023{\natexlab{a}}.

\bibitem[Liu et~al.(2023{\natexlab{b}})Liu, Zuo, Xie, Joe-Wong, and Lui]{liu2023variance}
Liu, X., Zuo, J., Xie, H., Joe-Wong, C., and Lui, J.~C.
\newblock Variance-adaptive algorithm for probabilistic maximum coverage bandits with general feedback.
\newblock In \emph{IEEE INFOCOM 2023-IEEE Conference on Computer Communications}, pp.\  1--10. IEEE, 2023{\natexlab{b}}.

\bibitem[Liu et~al.(2024)Liu, Lu, Xiong, Zhong, Hu, Zhang, Zheng, Yang, and Wang]{liu2024maximize}
Liu, Z., Lu, M., Xiong, W., Zhong, H., Hu, H., Zhang, S., Zheng, S., Yang, Z., and Wang, Z.
\newblock Maximize to explore: One objective function fusing estimation, planning, and exploration.
\newblock \emph{Advances in Neural Information Processing Systems}, 36, 2024.

\bibitem[M{\'e}nard et~al.(2021)M{\'e}nard, Domingues, Shang, and Valko]{menard2021ucb}
M{\'e}nard, P., Domingues, O.~D., Shang, X., and Valko, M.
\newblock Ucb momentum q-learning: Correcting the bias without forgetting.
\newblock In \emph{International Conference on Machine Learning}, pp.\  7609--7618. PMLR, 2021.

\bibitem[Merlis \& Mannor(2019)Merlis and Mannor]{merlis2019batch}
Merlis, N. and Mannor, S.
\newblock Batch-size independent regret bounds for the combinatorial multi-armed bandit problem.
\newblock In \emph{Conference on Learning Theory}, pp.\  2465--2489. PMLR, 2019.

\bibitem[Neu \& Pike-Burke(2020)Neu and Pike-Burke]{neu2020unifying}
Neu, G. and Pike-Burke, C.
\newblock A unifying view of optimism in episodic reinforcement learning.
\newblock \emph{Advances in Neural Information Processing Systems}, 33:\penalty0 1392--1403, 2020.

\bibitem[Nie et~al.(2023)Nie, Nadew, Zhu, Aggarwal, and Quinn]{nie2023framework}
Nie, G., Nadew, Y.~Y., Zhu, Y., Aggarwal, V., and Quinn, C.~J.
\newblock A framework for adapting offline algorithms to solve combinatorial multi-armed bandit problems with bandit feedback.
\newblock In \emph{International Conference on Machine Learning}, pp.\  26166--26198. PMLR, 2023.

\bibitem[Nika et~al.(2020)Nika, Elahi, and Tekin]{nika2020contextual}
Nika, A., Elahi, S., and Tekin, C.
\newblock Contextual combinatorial volatile multi-armed bandit with adaptive discretization.
\newblock In \emph{International Conference on Artificial Intelligence and Statistics}, pp.\  1486--1496. PMLR, 2020.

\bibitem[Perrault(2022)]{perrault2022combinatorial}
Perrault, P.
\newblock When combinatorial thompson sampling meets approximation regret.
\newblock \emph{Advances in Neural Information Processing Systems}, 35:\penalty0 17639--17651, 2022.

\bibitem[Qin et~al.(2014)Qin, Chen, and Zhu]{qin2014contextual}
Qin, L., Chen, S., and Zhu, X.
\newblock Contextual combinatorial bandit and its application on diversified online recommendation.
\newblock In \emph{Proceedings of the 2014 SIAM International Conference on Data Mining}, pp.\  461--469. SIAM, 2014.

\bibitem[Robbins(1952)]{robbins1952some}
Robbins, H.
\newblock Some aspects of the sequential design of experiments.
\newblock \emph{Bulletin of the American Mathematical Society}, 58\penalty0 (5):\penalty0 527--535, 1952.

\bibitem[Saha \& Gopalan(2019)Saha and Gopalan]{saha2019combinatorial}
Saha, A. and Gopalan, A.
\newblock Combinatorial bandits with relative feedback.
\newblock \emph{Advances in Neural Information Processing Systems}, 32, 2019.

\bibitem[Simchowitz \& Jamieson(2019)Simchowitz and Jamieson]{simchowitz2019non}
Simchowitz, M. and Jamieson, K.~G.
\newblock Non-asymptotic gap-dependent regret bounds for tabular mdps.
\newblock \emph{Advances in Neural Information Processing Systems}, 32, 2019.

\bibitem[Slivkins et~al.(2019)]{slivkins2019introduction}
Slivkins, A. et~al.
\newblock Introduction to multi-armed bandits.
\newblock \emph{Foundations and Trends{\textregistered} in Machine Learning}, 12\penalty0 (1-2):\penalty0 1--286, 2019.

\bibitem[Sun et~al.(2019)Sun, Jiang, Krishnamurthy, Agarwal, and Langford]{sun2019model}
Sun, W., Jiang, N., Krishnamurthy, A., Agarwal, A., and Langford, J.
\newblock Model-based rl in contextual decision processes: Pac bounds and exponential improvements over model-free approaches.
\newblock In \emph{Conference on learning theory}, pp.\  2898--2933. PMLR, 2019.

\bibitem[Tsuchiya et~al.(2023)Tsuchiya, Ito, and Honda]{tsuchiya2023further}
Tsuchiya, T., Ito, S., and Honda, J.
\newblock Further adaptive best-of-both-worlds algorithm for combinatorial semi-bandits.
\newblock In \emph{International Conference on Artificial Intelligence and Statistics}, pp.\  8117--8144. PMLR, 2023.

\bibitem[Wan et~al.(2023)Wan, Zhang, Chen, Sun, and Zhang]{wan2023bandit}
Wan, Z., Zhang, J., Chen, W., Sun, X., and Zhang, Z.
\newblock Bandit multi-linear dr-submodular maximization and its applications on adversarial submodular bandits.
\newblock In \emph{International Conference on Machine Learning}, pp.\  35491--35524. PMLR, 2023.

\bibitem[Wang \& Chen(2017)Wang and Chen]{wang2017improving}
Wang, Q. and Chen, W.
\newblock Improving regret bounds for combinatorial semi-bandits with probabilistically triggered arms and its applications.
\newblock In \emph{Advances in Neural Information Processing Systems}, pp.\  1161--1171, 2017.

\bibitem[Wang \& Chen(2018)Wang and Chen]{wang2018thompson}
Wang, S. and Chen, W.
\newblock Thompson sampling for combinatorial semi-bandits.
\newblock In \emph{International Conference on Machine Learning}, pp.\  5114--5122, 2018.

\bibitem[Weissman et~al.(2003)Weissman, Ordentlich, Seroussi, Verdu, and Weinberger]{weissman2003inequalities}
Weissman, T., Ordentlich, E., Seroussi, G., Verdu, S., and Weinberger, M.~J.
\newblock Inequalities for the l1 deviation of the empirical distribution.
\newblock \emph{Hewlett-Packard Labs, Tech. Rep}, 2003.

\bibitem[Wen et~al.(2017)Wen, Kveton, Valko, and Vaswani]{wen2017online}
Wen, Z., Kveton, B., Valko, M., and Vaswani, S.
\newblock Online influence maximization under independent cascade model with semi-bandit feedback.
\newblock \emph{Advances in neural information processing systems}, 30, 2017.

\bibitem[Wu et~al.(2022)Wu, Yang, Zhong, Wang, Du, and Jiao]{wu2022nearly}
Wu, T., Yang, Y., Zhong, H., Wang, L., Du, S., and Jiao, J.
\newblock Nearly optimal policy optimization with stable at any time guarantee.
\newblock In \emph{International Conference on Machine Learning}, pp.\  24243--24265. PMLR, 2022.

\bibitem[Zanette \& Brunskill(2019)Zanette and Brunskill]{zanette2019tighter}
Zanette, A. and Brunskill, E.
\newblock Tighter problem-dependent regret bounds in reinforcement learning without domain knowledge using value function bounds.
\newblock In \emph{International Conference on Machine Learning}, pp.\  7304--7312. PMLR, 2019.

\bibitem[Zhang et~al.(2021)Zhang, Ji, and Du]{zhang2021isreinforcement}
Zhang, Z., Ji, X., and Du, S.
\newblock Is reinforcement learning more difficult than bandits? a near-optimal algorithm escaping the curse of horizon.
\newblock In \emph{Conference on Learning Theory}, pp.\  4528--4531. PMLR, 2021.

\bibitem[Zhang et~al.(2023)Zhang, Chen, Lee, and Du]{zhang2023settling}
Zhang, Z., Chen, Y., Lee, J.~D., and Du, S.~S.
\newblock Settling the sample complexity of online reinforcement learning.
\newblock \emph{arXiv preprint arXiv:2307.13586}, 2023.

\bibitem[Zhong et~al.(2022)Zhong, Xiong, Zheng, Wang, Wang, Yang, and Zhang]{zhong2022gec}
Zhong, H., Xiong, W., Zheng, S., Wang, L., Wang, Z., Yang, Z., and Zhang, T.
\newblock Gec: A unified framework for interactive decision making in mdp, pomdp, and beyond.
\newblock \emph{arXiv preprint arXiv:2211.01962}, 2022.

\bibitem[Zimmert et~al.(2019)Zimmert, Luo, and Wei]{zimmert2019beating}
Zimmert, J., Luo, H., and Wei, C.-Y.
\newblock Beating stochastic and adversarial semi-bandits optimally and simultaneously.
\newblock In \emph{International Conference on Machine Learning}, pp.\  7683--7692. PMLR, 2019.

\bibitem[Zuo \& Joe-Wong(2021)Zuo and Joe-Wong]{zuo2021combinatorial}
Zuo, J. and Joe-Wong, C.
\newblock Combinatorial multi-armed bandits for resource allocation.
\newblock In \emph{2021 55th Annual Conference on Information Sciences and Systems (CISS)}, pp.\  1--4. IEEE, 2021.

\bibitem[Zuo et~al.(2022)Zuo, Liu, Joe-Wong, Lui, and Chen]{zuo2022online}
Zuo, J., Liu, X., Joe-Wong, C., Lui, J.~C., and Chen, W.
\newblock Online competitive influence maximization.
\newblock In \emph{International Conference on Artificial Intelligence and Statistics}, pp.\  11472--11502. PMLR, 2022.

\end{thebibliography}
\bibliographystyle{icml2024}

\appendix

\clearpage
\onecolumn

\section*{Appendix}

\section{Extended Related Works}\label{apdx_sec:related_work}
In this section, we review two lines of literature that are related to this work.

\textbf{Stochastic Combinatorial Multi-Armed Bandits.} There has been a vast literature on stochastic combinatorial multi-armed bandit (CMAB) \cite{gai2012combinatorial,kveton2015tight,combes2015combinatorial,chen2016combinatorial,wang2017improving,merlis2019batch,saha2019combinatorial,liu2022batch}. \citet{gai2012combinatorial} is the first work to consider the stochastic CMAB with semi-bandit feedback. Since then, its algorithm and regret have been improved by \citet{kveton2015tight,combes2015combinatorial,chen2016combinatorial,merlis2019batch} in different settings. To model a broader range of applications, such as online learning to rank \cite{kveton2015cascading,kveton2015combinatorial} and online influence maximization \cite{chen2013information,wen2017online}, \citet{chen2016combinatorial} first generalizes the CMAB to CMAB with probabilistically triggered armed (CMAB-T). Later on, \citet{wang2017improving} improve the regret bound of \citet{chen2016combinatorial} by introducing a new smoothness condition called the triggering probability modulated (TPM) condition, which removes a factor of $1/q^*$ compared to \citet{chen2016combinatorial}, where $q^*$ is the minimum positive probability that any arm can be triggered. Recently, \citet{liu2022batch} introduce a new variance-modulated TPM condition (TPVM) and variance-adaptive algorithms that can further remove a factor of $K$, where $K$ is the number of arms that can be triggered in each round. Beyond these works, \citet{qin2014contextual,li2016contextual,nika2020contextual,demirel2021combinatorial,liu2023contextual,hwang2023combinatorial} study contextual environments with linear/nonlinear base arm structures, \citet{zimmert2019beating,tsuchiya2023further,nie2023framework,wan2023bandit} consider adversarial environments, and \citet{wang2018thompson,huyuk2019analysis,perrault2022combinatorial} investigate Thompson sampling algorithms for both CMAB and CMAB-T settings. However, all the above works assume the outcome of each arm is a uni-variant sub-Gaussian random variable. In this work, we consider a different setting where arms' outcomes are multivariant random variables, and propose a new CMAB-MT framework that can cover new applications, e.g. episodic RL, and give matching/improved regrets by leveraging the statistical properties of the multivariant random variables.  

\textbf{Episodic Reinforcement Learning.} In recent years, there has been an emerging number of works that study provably efficient RL for regret minimization (c.f. \cite{agarwal2019reinforcement}). For episodic RL, the seminal work \cite{jaksch2010near} proposes the UCRL2 algorithm that adds optimistic bonuses on transition probabilities and achieves a regret bound of $\tilde{O}(\sqrt{H^4S^2AT})$, matching the lower bound $\Omega(\sqrt{H^3SAT})$ given by the same work up to a factor of $\tilde{O}(\sqrt{HS})$. Later on, \citet{azar2017minimax} build confidence region directly for value functions rather than transition probabilities and provide a minimax-optimal regret of $\tilde{O}(\sqrt{H^3SAT})$. Their result is then improved by \citet{zanette2019tighter} who proposes an algorithm based on both optimistic and pessimistic values for the bonus design and achieves tighter problem-dependent regret bounds. After this, various works \citep{li2021breaking,zhang2021isreinforcement,menard2021ucb,wu2022nearly,zhang2023settling} refine the lower-order terms of regret. In addition, many studies \citep{jiang@2017,sun2019model,jin2020provably,jin2021bellman,du2021bilinear,zhong2022gec,liu2024maximize,foster2021statistical} extend beyond tabular RL and explore function approximation, although their regret bounds become suboptimal when applied to the tabular setting.
The above works all focus on giving gap-independent regret bound that scales with $\tilde{O}(\sqrt{T})$. There are also other works \cite{simchowitz2019non,dann2021beyond} that focus on studying gap-dependent regret bound that scales with $O(\log T)$ via clipping techniques. 
To the best of our knowledge, we are the first to solve the episodic RL problem by modeling it as a CMAB-MT instance. From this perspective, we propose new algorithms and analysis that achieves the minimax-optimal leading regret with improved logarithmic factors for the leading regret term. 
Our approach also gives gap-dependent regret bounds ``for free" without using the involved clipping techniques, which matches the \citet{simchowitz2019non} up to a factor of $1/q^*$ in the worst case, where $q^*$ is the minimum positive occupancy measure of any state-action pair for any policy. To this end, we build an important connection between the RL and CMAB literature, which may encourage more interactions between these two important directions.

\section{Analysis for CMAB-MT Framework}\label{apdx_sec:CMAB-MT}
\subsection{Definitions}
\begin{definition}[(Approximation) Gap]\label{def:gap}
Fix a distribution $D \in \cD$ and its mean vector $\bmu\in [0,1]^{m\times d}$, for each action $\pi \in \Pi$, we define the (approximation) gap as $\Delta_{\pi}=\max\{0, \alpha r(\pi^*;\bmu)-r(\pi;\bmu)\}$. 
For each arm $i\in[m]$, we define $\Delta_i^{\min}=\inf_{\pi\in \Pi: q_i^{\bmu,\pi} > 0,\text{ } \Delta_{\pi} > 0}\Delta_{\pi}$, $\Delta_i^{\max}=\sup_{\pi\in \Pi: q_i^{\bmu,\pi} > 0, \Delta_{\pi} > 0}\Delta_{\pi}$.
As a convention, if there is no action $\pi\in \Pi$ such that $q_i^{\bmu,\pi}>0$ and $\Delta_{\pi}>0$, 
	then $\Delta_i^{\min}=+\infty, \Delta_i^{\max}=0$. We define $\Delta_{\min}=\min_{i \in [m]} \Delta_i^{\min}$ and $\Delta_{\max}=\max_{i \in [m]} \Delta_i^{\max}$.
\end{definition}

\begin{definition}[Event-Filtered Regret]\label{apdx_def:event_filter_reg} For any series of events $(\cE_t)_{t \in [T]}$ indexed by round number $t$, we define the $Reg^{\text{ALG}}_{\alpha,\bmu}(T,(\cE_t)_{t \in [T]})$ as the regret filtered by events $(\cE_t)_{t \in [T]}$, or the regret is only counted in $t$ if $\cE$ happens in $t$. Formally, 
\begin{align}
    \reg^{\text{ALG}}_{\alpha,\bmu}(T,(\cE_t)_{t \in [T]}) \overset{\df}{=} \E\left[\sum_{t\in[T]}\I(\cE_t)(\alpha\cdot r(\pi^*;\bmu)-r(\pi_t;\bmu))\right].
\end{align}
For simplicity, we will omit $\text{ALG},\alpha,\bmu,t\in[T]$ and rewrite $\reg^{\text{ALG}}_{\alpha,\bmu}(T,(\cE_t)_{t \in [T]})$ as $\reg(T, \cE_t)$ when contexts are clear.
\end{definition}

\subsection{Bounds for event-filtered regrets}
\begin{lemma}[Decomposition of the filtered regret]\label{apdx_lem:decompose_filter_reg}
Let $K\in \mathbb{N}_+.$ For all $t \ge 1$, consider the event
\begin{align}
     \cE_t=\left\{\Delta_{\pi_t}\le \sum_{k\in [K]}R_{t,k}\right\}
\end{align}
and $K$ decomposed events
\begin{align}
    \cE'_{t,k}=\left\{\Delta_{\pi_t}\le KR_{t,k}\right\}
\end{align}
for some $R_{t,k}\ge 0$.
Then, we have 
\begin{align}
    \reg(T,\cE_t)\le \sum_{k\in [K]}\reg(T,\cE'_{t,k})
\end{align}.
\end{lemma}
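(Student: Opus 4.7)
The plan is a one-step pigeonhole decomposition at the level of indicator random variables. First, I would observe that on any sample path where the composite event $\cE_t = \{\Delta_{\pi_t} \le \sum_{k \in [K]} R_{t,k}\}$ holds, at least one summand must do its share of the work: if every $R_{t,k}$ satisfied $R_{t,k} < \Delta_{\pi_t}/K$, then $\sum_k R_{t,k} < \Delta_{\pi_t}$, contradicting $\cE_t$. Hence there exists some $k^\star(t)\in[K]$ with $K R_{t,k^\star(t)} \ge \Delta_{\pi_t}$, which is exactly the event $\cE'_{t,k^\star(t)}$. This gives the sample-pathwise bound
\begin{equation*}
\I(\cE_t) \;\le\; \sum_{k \in [K]} \I(\cE'_{t,k}),
\end{equation*}
since whenever $\cE_t$ fails the left-hand side is zero while the right-hand side is automatically non-negative.

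Next, I would multiply both sides by the non-negative gap $\Delta_{\pi_t}$, which also dominates the per-round summand $\alpha \cdot r(\pi^*;\bmu) - r(\pi_t;\bmu)$ appearing inside $\reg$ by the very definition of $\Delta_{\pi_t}$ as a truncation at zero. Summing over $t \in [T]$ and taking expectations then yields
\begin{equation*}
\reg(T,\cE_t) \;\le\; \E\!\Big[\sum_{t\in[T]} \I(\cE_t)\,\Delta_{\pi_t}\Big] \;\le\; \sum_{k\in[K]} \E\!\Big[\sum_{t\in[T]} \I(\cE'_{t,k})\,\Delta_{\pi_t}\Big] \;=\; \sum_{k\in[K]} \reg(T,\cE'_{t,k}),
\end{equation*}
as claimed. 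I do not foresee any genuine obstacle, as the argument is purely combinatorial and invokes no randomness, no concentration, and no algorithmic structure beyond the two event definitions. The downstream role of this lemma in the proof of \cref{thm:reg_CMAB-MT} will be to split a composite confidence-radius upper bound of the form $\Delta_{\pi_t} \lesssim R_{t,1} + \cdots + R_{t,K}$ (which arises naturally from Condition~\ref{cond:general_smooth} combined with the two concentration events $\cE_{c,1}$ and $\cE_{c,2}$) into $K$ independent filtered-regret terms, each of which can then be handled by the standard CMAB arm-pull counting and reverse amortized analysis.
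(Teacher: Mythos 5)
Your proposal is correct and follows essentially the same argument as the paper: a pigeonhole step showing $\I(\cE_t)\le\sum_{k\in[K]}\I(\cE'_{t,k})$ pointwise, multiplied by the nonnegative gap $\Delta_{\pi_t}$ and summed. No substantive differences from the paper's proof.
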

\begin{proof}
    It suffices to prove that $\I\{\cE_t\}\Delta_{\pi_t}\le \sum_{k\in[K]}\I\{\cE'_{t,k}\}\Delta_{\pi_t}$ for each round $t$. If $\cE_t$ does not hold, we are done. If $\cE_t$ holds, there exists $k'\in[T]$ such that $\Delta_{\pi_t}\le KR_{t,k'}$, so at least one of the $K$ decomposed events holds and $1\le\sum_{k\in[K]}\I\{\Delta_{\pi_t}\le KR_{t,k}\}$, which gives $\I\{\cE_t\}\Delta_{\pi_t}\le\sum_{k\in[K]}\I\{\Delta_{\pi_t}\le KR_{t,k}\}\Delta_{\pi_t}$.
\end{proof}

\begin{lemma}[Null counters]\label{apdx_lem:null_counter}
    For all $i\in[m]$, if there exists constants $K_i\in \R^+$, consider the event 
    \begin{align}
         \cE_t=\left\{\Delta_{\pi_t}\le \sum_{i\in [m]:N_{t-1,i}=0}q_i^{\bmu,\pi_t}K_i\right\}.
    \end{align}
    Then, the event filtered regret $\reg(T, \cE_t)\le \sum_{i\in[m]}K_i$.
\end{lemma}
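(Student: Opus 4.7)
The plan has three steps. First, since $\Delta_{\pi_t}\ge \alpha r(\pi^*;\bmu)-r(\pi_t;\bmu)$ by definition, I pass to the upper bound $\reg(T,\cE_t)\le \E\!\left[\sum_{t\in[T]}\I(\cE_t)\Delta_{\pi_t}\right]$, and then use the defining inequality of $\cE_t$ to drop the indicator: on $\cE_t$ we have $\Delta_{\pi_t}\le \sum_{i:N_{t-1,i}=0} q_i^{\bmu,\pi_t}K_i$, so in either case $\I(\cE_t)\Delta_{\pi_t}\le \sum_{i:N_{t-1,i}=0} q_i^{\bmu,\pi_t}K_i$ pointwise. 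Second, swapping the order of summation and pulling the deterministic $K_i$ out, the filtered regret is bounded by $\sum_{i\in[m]} K_i\cdot\E\!\left[\sum_{t\in[T]} q_i^{\bmu,\pi_t}\I\{N_{t-1,i}=0\}\right]$. Third, the real content, I will show each per-arm expected sum is at most $1$; combined with the previous two steps this immediately yields the claimed bound $\sum_{i\in[m]} K_i$.

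For the per-arm bound, let $\mathcal{F}_{t-1}$ denote the filtration generated by the history through round $t-1$ together with the chosen policy $\pi_t$. Two key observations drive the argument: (a) both $q_i^{\bmu,\pi_t}$ and $\I\{N_{t-1,i}=0\}$ are $\mathcal{F}_{t-1}$-measurable, and (b) $\E[\I\{i\in\tau_t\}\mid \mathcal{F}_{t-1}]=q_i^{\bmu,\pi_t}$ by the very definition of the triggering probability. Applying the tower property and the pointwise identity $\I\{i\in\tau_t\}\I\{N_{t-1,i}=0\}=\I\{t=T_i\}$, where $T_i\overset{\df}{=}\inf\{t\ge 1:i\in\tau_t\}$ is the first-triggering time (with $T_i=\infty$ if $i$ is never triggered), I obtain
\begin{align*}
\E\!\left[\sum_{t\in[T]} q_i^{\bmu,\pi_t}\I\{N_{t-1,i}=0\}\right]
=\E\!\left[\sum_{t\in[T]} \I\{i\in\tau_t\}\I\{N_{t-1,i}=0\}\right]
=\E\!\left[\sum_{t\in[T]} \I\{t=T_i\}\right]\le 1,
\end{align*}
since at most one $t\in[T]$ can equal the unique first-triggering round $T_i$.

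The main (minor) obstacle is measurability bookkeeping: to invoke the tower property one must verify that $q_i^{\bmu,\pi_t}$ is $\mathcal{F}_{t-1}$-measurable, i.e., that \cref{alg:CUCB-MT} chooses $\pi_t$ using only information available before the outcome $\bX_t$ is drawn, which is indeed the case because $\pi_t$ is produced by $\tilde{\cO}(\cC_t)$ from the empirical estimate $\hat{\bmu}_{t-1}$ and the counters $N_{t-1,\cdot}$. Everything else---the pointwise upper bound via $\Delta_{\pi_t}$, the swap of sums, and the telescoping of $\I\{i\in\tau_t\}\I\{N_{t-1,i}=0\}$ into a single first-trigger indicator---is routine, and the constant $1$ obtained in the tower-property step is what gives the tight coefficient $K_i$ in the final sum.
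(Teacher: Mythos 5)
Your proof is correct and follows essentially the same route as the paper: both arguments replace $q_i^{\bmu,\pi_t}$ by $\E[\I\{i\in\tau_t\}\mid\cF_{t-1}]$ via the tower property and then observe that $\I\{i\in\tau_t,\,N_{t-1,i}=0\}$ can occur for at most one round per arm, yielding the bound $\sum_{i\in[m]}K_i$. Your write-up merely makes the measurability bookkeeping and the first-trigger-time identity explicit, which the paper leaves implicit.
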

\begin{proof}
    Let $\cF_{t-1}=((\pi_1, \tau_1, (\bX_{t,i})_{i \in \tau_1}), ..., (\pi_{t-1}, \tau_{t-1}, (\bX_{t,i})_{i \in \tau_{t-1}}), \pi_t)$ be all historical information before $t$ plus the action at $t$, where $\tau_t$ is the triggered arm set at round $t$.
    By the definition of the triggering probability $q_i^{\bmu,\pi_t}$, we have
    \begin{align}
        \reg(T,\cE_t)&=\E\left[\sum_{t\in[T]}\I\{\cE_t\}\Delta_{\pi_t}\right]\\
        &\le \E\left[\sum_{t\in[T]}\E\left[\sum_{i\in[m]}K_i\I\{N_{t-1,i}=0, i\in\tau_t\}\mid \cF_{t-1}\right]\right]\\
        &\le \sum_{i\in[m]}K_i.
    \end{align}
    The last inequality holds since the counter $N_{t-1,i}$ will be added by one if $i\in \tau_t$, indicating the event can only occur for at most one round, giving the upper bound $\sum_{i\in[m]}K_i$.
\end{proof}

\subsection{Proof of Theorem \ref{thm:reg_CMAB-MT}}\label{apdx_sec:proof_of_CMAB-MT}
By Condition \ref{cond:general_smooth}, we have the weight vector is bounded by $0\le w_{i,j}^{\tilde{\bmu}, \pi}\le \bar{w}$ for all $i\in[m],j\in[d], \tilde{\bmu}\in[0,1]^{m\times d},\pi\in\Pi$. Define the event $\cE_{s,t}$ as the event joint oracle successfully yields an $\alpha$-approximation in round $t\in[T]$, i.e., $\cE_{s,t}=\left\{r(\pi_t,\tilde{\bmu}_t)\ge \alpha \cdot \max_{\pi\in \Pi, \bmu\in \cC(\pi)}r(\pi;\bmu)\right\}$.  Recall that $\cF_{t-1}=((\pi_1, \tau_1, (\bX_{t,i})_{i \in \tau_1}), ..., (\pi_{t-1}, \tau_{t-1}, (\bX_{t,i})_{i \in \tau_{t-1}}), \pi_t)$ is all historical information before $t$ plus the action at $t$. 

We bound the regret under the event $(\cE_{s,t})_{t\in [T]}$, the concentration event $\cE_{c,1}$ in \cref{eq:thm_concen_1} and the concentration event $\cE_{c,2}$ in \cref{eq:thm_concen_2} as follows.

\textbf{Step 1: Regret Decomposition.}

First, recall that $[\bu-\bv]_+\overset{\df}{=}\abs{\bu-\bv}$ for any problem that satisfies the 1-norm MPTM smoothness condition and $[\bu-\bv]_+\overset{\df}{=}(\bu-\bv)$ for problem that satisfies the 1-norm MPTM+ smoothness condition. We can decompose the round $t$ instantaneous regret as follows.
\begin{align}
    \Delta_{\pi_t}&=\alpha \cdot r(\pi^*;\bmu)- r(\pi_t;\bmu)\\
    &\overset{(a)}{\le}r(\pi_t;\tilde{\bmu}_t)-r(\pi_t;\bmu)\\
    &\overset{(b)}{\le} \sum_{i\in[m]}q_i^{\bmu,\pi_t}\abs{\uabs{\tilde{\bmu}_{t,i}-\bmu_{i}}^{\top}\bw_i^{\tilde{\bmu}_t,\pi_t}}\\
    &\le\sum_{i\in[m]:N_{t-1,i}>0}q_i^{\bmu,\pi_t}\abs{\uabs{\tilde{\bmu}_{t,i}-\bmu_{i}}^{\top}\bw_i^{\tilde{\bmu}_t,\pi_t}}+\sum_{i\in[m]: N_{t-1,i}=0}q_i^{\bmu,\pi_t}\bar{w}d\\ &=\sum_{i\in[m]:N_{t-1,i}>0}q_i^{\bmu,\pi_t}\abs{\uabs{\tilde{\bmu}_{t,i}-\hat{\bmu}_{t-1,i}+\hat{\bmu}_{t-1,i}-\bmu_{i}}^{\top}\bw_i^{\tilde{\bmu}_t,\pi_t}}+\sum_{i\in[m]: N_{t-1,i}=0}q_i^{\bmu,\pi_t}\bar{w}d\\
    &\overset{(c1)}{\le}\sum_{i\in[m]:N_{t-1,i}>0}q_i^{\bmu,\pi_t}\abs{\uabs{\tilde{\bmu}_{t,i}-\hat{\bmu}_{t-1,i}}^{\top}\bw_i^{\tilde{\bmu}_t,\pi_t}}+q_i^{\bmu,\pi_t}\abs{\uabs{\hat{\bmu}_{t-1,i}-\bmu_{i}}^{\top}\bw_i^{\tilde{\bmu}_t,\pi_t}}+\sum_{i\in[m]: N_{t-1,i}=0}q_i^{\bmu,\pi_t}\bar{w}d\\
    &\overset{(c2)}{\le}\sum_{i\in[m]:N_{t-1,i}>0}q_i^{\bmu,\pi_t}\abs{\uabs{\tilde{\bmu}_{t,i}-\hat{\bmu}_{t-1,i}}^{\top}\bw_i^{\tilde{\bmu}_t,\pi_t}}+q_i^{\bmu,\pi_t}\abs{\uabs{\bmu_{i}-\hat{\bmu}_{t-1,i}}^{\top}\bw_i^{\bmu,\pi^*}}\notag\\
    &+q_i^{\bmu,\pi_t}\abs{\uabs{\bmu_{i}-\hat{\bmu}_{t-1,i}}^{\top}(\bw_i^{\tilde{\bmu}_t,\pi_t}-\bw_i^{\bmu,\pi^*})}+\sum_{i\in[m]: N_{t-1,i}=0}q_i^{\bmu,\pi_t}\bar{w}d\\
    &\overset{(d)}{\le} \sum_{i\in[m]:N_{t-1,i}>0}q_i^{\bmu,\pi_t}(2F_{t,i}+G_{t,i})\sqrt{\frac{1}{N_{t-1,i}}} + \sum_{i \in [m]:N_{t-1,i}>0}q_i^{\bmu,\pi_t} (2I_{t,i}+J_{t,i})\frac{1}{N_{t-1,i}}+\sum_{i\in[m]: N_{t-1,i}=0}q_i^{\bmu,\pi_t}\bar{w}d\label{apdx_eq:CMAB-MT_analysis_d}\\
    & \overset{(e)}{\le} 2\left(\sqrt{\sum_{i\in[m]:N_{t-1,i}>0}q_i^{\bmu,\pi_t}F_{t,i}^2}+\sqrt{\sum_{i\in[m]:N_{t-1,i}>0}q_i^{\bmu,\pi_t}G_{t,i}^2}\right)\sqrt{\sum_{i\in[m]:N_{t-1,i}>0}\frac{q_i^{\bmu,\pi_t}}{N_{t-1,i}}}\notag\\
    &+ 2(\bar{I}+\bar{J})\sum_{i\in[m]:N_{t-1,i}>0}  \frac{q_i^{\bmu,\pi_t}}{N_{t-1,i}}+\sum_{i\in[m]: N_{t-1,i}=0}q_i^{\bmu,\pi_t}\bar{w}d\label{apdx_eq:CMAB-MT_analysis_e}\\
    & \overset{(f)}{\le} 2\left(\sqrt{\bar{F}}+\sqrt{\bar{G}}\right)\sqrt{\sum_{i\in[m]:N_{t-1,i}>0}\frac{q_i^{\bmu,\pi_t}}{N_{t-1,i}}}+ 2(\bar{I}+\bar{J})\sum_{i\in[m]:N_{t-1,i}>0}  \frac{q_i^{\bmu,\pi_t}}{N_{t-1,i}}+\sum_{i\in[m]: N_{t-1,i}=0}q_i^{\bmu,\pi_t}\bar{w}d\label{apdx_eq:CMAB-MT_analysis_f}
\end{align}
where inequality (a) is due to  $\bmu\in\cC_t(\pi^*)$ and $r(\pi_t,\tilde{\bmu}_t)\ge \alpha \cdot \max_{\pi\in \Pi, \bmu\in \cC(\pi)}r(\pi;\bmu)$ under $\cE_s$, inequality (b) is due to the 1-norm MTPM/MTPM+ smoothness condition (Condition~\ref{cond:general_smooth}), inequality (c1) and (c2) are by triangle inequality for both $[\bu-\bv]_+{=}\abs{\bu-\bv}$ and $[\bu-\bv]_+{=}(\bu-\bv)$ cases, inequality (d) is due to the confidence region \cref{eq:conf_set}, the event $\cE_{c,1}$ in \cref{eq:thm_concen_1} and $\cE_{c,2}$ in \cref{eq:thm_concen_2}, inequality (e) is by Cauchy-Schwarz inequality and the definition of $\bar{I}, \bar{J}$, and inequality (f) is due to the definition of $\bar{F},\bar{G}$.

Let $c_1=3\times 2(\sqrt{\bar{F}}+\sqrt{\bar{G}})$, $c_2=3\times2(\bar{I}+\bar{J})$, $c_3=3\times \bar{w}d$.
Now we define the main event $\cE_t$ and its three decomposed events $\cE'_{t,1},\cE'_{t,2},\cE'_{t,3}$ as follows:
\begin{align}
    \cE_t&=\left\{\Delta_{\pi_t}\le 2(\sqrt{F}+\sqrt{\bar{G}})\sqrt{\sum_{i\in[m]:N_{t-1,i}>0}\frac{q_i^{\bmu,\pi_t}}{N_{t-1,i}}}+2(\bar{I}+\bar{J}) \sum_{i\in[m]:N_{t-1,i}>0}  \frac{q_i^{\bmu,\pi_t}}{N_{t-1,i}}+\sum_{i\in[m]: N_{t-1,i}=0}q_i^{\bmu,\pi_t}\bar{w}d\right\},\\
    \cE'_{t,1}&=\left\{\Delta_{\pi_t}\le c_1\sqrt{\sum_{i\in[m]:N_{t-1,i}>0}\frac{q_i^{\bmu,\pi_t}}{N_{t-1,i}}}\right\},
    \cE'_{t,2}=\left\{\Delta_{\pi_t}\le c_2 \sum_{i\in[m]:N_{t-1,i}>0}  \frac{q_i^{\bmu,\pi_t}}{N_{t-1,i}}\right\},\\
    \cE'_{t,3}&=\left\{\Delta_{\pi_t}\le \sum_{i\in[m]: N_{t-1,i}=0}q_i^{\bmu,\pi_t}c_3\right\}.
\end{align}

By \cref{apdx_lem:decompose_filter_reg}, we have
\begin{align}\label{apdx_eq:regret_decompose_event_filtered}
    \reg(T,\cE_t)\le \sum_{i\in[3]}\reg(T,\cE'_{t,i})
\end{align}

\textbf{Step 2: Bound the $\reg(T,\cE'_{t,1})$ term}

Let $\E_t=[\cdot \mid \cF_{t-1}]$. Suppose $\cE'_{t,1}$ holds, we use the reverse amortization trick as follows:
\begin{align}
     \Delta_{\pi_t} &\overset{(a)}{\le} \sum_{i\in[m]:N_{t-1,i}>0} \frac{ c_1^2  q_{i}^{\bmu,\pi_t}\frac{1}{N_{t-1,i}}}{\Delta_{\pi_t}} \\
    &\overset{(b)}{\le}-\Delta_{\pi_t} + 2\sum_{i\in[m]:N_{t-1,i}>0} \frac{ c_1^2 q_{i}^{\bmu,\pi_t}\frac{1}{N_{t-1,i}}}{\Delta_{\pi_t}}\\
    &\le-\frac{\sum_{i\in[m]:N_{t-1,i}>0}q_i^{\bmu,\pi_t}\Delta_{\pi_t}}{\sum_{i \in [m]}q_i^{\bmu,\pi_t}} + 2\sum_{i\in[m]:N_{t-1,i}>0} \frac{ c_1^2 q_{i}^{\bmu,\pi_t}\frac{1}{N_{t-1,i}}}{\Delta_{\pi_t}}\\
    &\overset{(c)}{\le} \sum_{i\in[m]:N_{t-1,i}>0}q_i^{\bmu,\pi_t} \left(\frac{ 2c_1^2 \frac{1}{N_{t-1,i}}}{\Delta_{\pi_t}}-\frac{\Delta_{\pi_t}}{K}\right)\label{eq:ra_def_e1_1to2}
\end{align}
where inequality (a) follows from event $\cE'_{t,1}$, inequality (b) is due to the reverse amortization trick that multiplies two to both sides of inequality (a) and rearranges the terms, and inequality (c) is due to definition of $K\overset{\df}{=}\max_{\pi \in \Pi}\sum_{i\in[m]}q_i^{\bmu,\pi}$.

Then we use the triggering probability equivalence trick (TPE) in \cite{liu2023contextual} to deal with the triggering probability $q_i^{\bmu,\pi_t}$ as follows: 

\begin{align}
    \E_t[\Delta_{\pi_t}]  &\overset{(a)}{\le}  \E_t\left[\sum_{i\in[m]:N_{t-1,i}>0}q_i^{\bmu,\pi_t}\left(\frac{ 2c_1^2 \frac{1}{N_{t-1,i}}}{\Delta_{\pi_t}}-\frac{\Delta_{\pi_t}}{K}\right)\right]\\
    &\overset{(b)}{=} \E_t\left[ \sum_{i \in \tau_t: N_{t-1,i}>0}\left(\frac{ 2c_1^2 \frac{1}{N_{t-1,i}}}{\Delta_{\pi_t}}-\frac{\Delta_{\pi_t}}{K}\right)\right]
\end{align}
where inequality (a) follows from \Cref{eq:ra_def_e1_1to2}, inequality (b) follows from TPE trick to replace $q_i^{\bmu,\pi_t}=\E_t[\I\{i \in \tau_t\}]$. 

Now we claim that
\begin{align}
    \E_t[\Delta_{\pi_t}]\overset{(a)}{\le} \E_t\left[\sum_{i \in \tau_t: N_{t-1,i}>0}{ \kappa_{i}(N_{t-1,i})}\right],\label{apdx_eq:kappa_tpvm}
\end{align}
where we define $L_{i,1}=\frac{c_1^2}{(\Delta_{i}^{\min})^2}$, $L_{i,2}=\frac{2c_1^2 K}{(\tDelta_i^{\min})^2}$, and
\begin{equation}
\kappa_{i}(\ell) = \begin{cases}
2\sqrt{\frac{c_1^2  }{\ell }}, &\text{if $1\le\ell\le L_{i,1}$,}\\
\frac{2c_1^2}{\Delta_i^{\min} }\frac{1}{\ell}, &\text{if $L_{i,1} < \ell \le L_{i,2}$,}\\
0, &\text{if $\ell > L_{i,2}$,}
\end{cases}
\end{equation}

We now show inequality (a) because of the following argument.

Case 1: If there exists $i'\in\tau_t$ with $1\le N_{t-1,i'}\le \frac{c_1^2}{\Delta_{\pi_t}^2}$. 

We have $N_{t-1,i'}\le \frac{c_1^2}{\Delta_{\pi_t}^2}\le L_{i',1}$, thus $\sum_{i\in\tau_t:N_{t-1,i}>0}\kappa_i(N_{t-1,i})\ge \kappa_{i'}(N_{t-1,i'}) \ge2\sqrt{\frac{c_1^2}{N_{t-1,i'}}}= 2\Delta_{\pi_t}$, then inequality (a) holds. 

Case 2: For any arm $i\in\tau_t$ with $N_{t-1,i}>0$, they satisfy $N_{t-1,i}\ge\frac{c_1^2}{\Delta_{\pi_t}^2}$. 

If $N_{t-1,i}\le L_{i,1}$, then $ \frac{ 2c_1^2 \frac{1}{N_{t-1,i}}}{\Delta_{\pi_t}}-\frac{\Delta_{\pi_t}}{K}\le \frac{ 2c_1^2 \frac{1}{N_{t-1,i}}}{\Delta_{\pi_t}}=2\sqrt{\frac{c_1^2}{\Delta_{\pi_t}^2\cdot N_{t-1,i}}}\sqrt{\frac{c_1^2}{N_{t-1,i}}}\le 2\sqrt{\frac{c_1^2}{N_{t-1,i}}}=\kappa_i(N_{t-1,i})$; Else if $L_{i,1}<N_{t-1,i}\le L_{i,2}$, then $\frac{ 2c_1^2 \frac{1}{N_{t-1,i}}}{\Delta_{\pi_t}}-\frac{\Delta_{\pi_t}}{K}\le \frac{ 2c_1^2 \frac{1}{N_{t-1,i}}}{\Delta_{\pi_t}}\le \frac{ 2c_1^2 \frac{1}{N_{t-1,i}}}{\Delta_{i}^{\min}}=\kappa_i(N_{t-1,i})$; Else if $N_{t-1,i}>L_{i,2}$, $ \frac{ 2c_1^2 \frac{1}{N_{t-1,i}}}{\Delta_{\pi_t}}-\frac{\Delta_{\pi_t}}{K}\le 0=\kappa_i(N_{t-1,i})$. Therefore, we have 
\begin{align}
    \E_t[\Delta_{\pi_t}]\le \E_t\left[ \sum_{i \in \tau_t: N_{t-1,i}>0}\left(\frac{ 2c_1^2 \frac{1}{N_{t-1,i}}}{\Delta_{\pi_t}}-\frac{\Delta_{\pi_t}}{K}\right)\right]\le \E\left[\sum_{i\in\tau_t:N_{t-1,i}>0}\kappa_i(N_{t-1,i})\right].
\end{align}
Combining the above two cases proves inequality (a).


Now we have
\begin{align}
    \reg(T, \cE'_{t,1})&= \E\left[\sum_{t=1}^T\Delta_{\pi_t}\right]\\
    &\overset{(a)}{\le} \E\left[\sum_{t\in [T]}\E_t\left[\sum_{i \in \tau_t: N_{t-1,i}>0}{ \kappa_{i}(N_{t-1,i})}\right]\right]\\
    &\overset{(b)}{=} \E\left[\sum_{t\in [T]}\sum_{i \in \tau_t: N_{t-1,i}>0}{ \kappa_{i}(N_{t-1,i})}\right]\\
    &\overset{(c)}{=}\E\left[\sum_{i \in [m]}\sum_{s=1}^{N_{T-1,i}}\kappa_{i}(s)\right]\\
    &\le  \sum_{i \in [m]} \sum_{s=1}^{L_{i,1}}2\sqrt{\frac{c_1^2 }{s}} + \sum_{i \in [m]}\sum_{s=L_{i,1}+1}^{L_{i,2}}\frac{2c_1^2}{\Delta_i^{\min} }\frac{1}{s}\\
    &\le  \sum_{i \in [m]} \int_{s=0}^{L_{i,1}}2\sqrt{\frac{c_1^2 }{s}}\cdot ds + \sum_{i \in [m]}\int_{s=L_{i,1}}^{L_{i,2}}\frac{2c_1^2}{\Delta_i^{\min} }\frac{1}{s}\cdot ds\\
    &\le \sum_{i \in [m]} \frac{2c_1^2}{\Delta_{i}^{\min} } (3+ \log K),\label{apdx_eq:CMAB-MT_term1}
\end{align}
where (a) follows from \Cref{apdx_eq:kappa_tpvm}, (b) follows from the tower rule, (c) follows from that $N_{t-1,i}$ is increased by $1$ if and only if $i \in \tau_t$.

\textbf{Step 3: Bound the $\reg(T,\cE'_{t,2})$ term}

Let $\E_t=[\cdot \mid \cF_{t-1}]$. Suppose $\cE'_{t,2}$ holds, we use the reverse amortization trick as follows:
\begin{align}
     \Delta_{\pi_t} &\overset{(a)}{\le} \sum_{i \in [m]: N_{t-1,i}>0} c_2 q_{i}^{\bmu,\pi_t} \frac{1}{N_{t-1,i}}\notag\\
    &\overset{(b)}{\le} -\Delta_{\pi_t} + 2\sum_{i \in [m]: N_{t-1,i}>0} c_2 q_{i}^{\bmu,\pi_t} \frac{1}{N_{t-1,i}} \notag\\
    &=-\frac{\sum_{i \in [m]: N_{t-1,i}>0}q_i^{\bmu, \pi_t}\Delta_{\pi_t}}{\sum_{i \in [m]: N_{t-1,i}>0}q_i^{\bmu, \pi_t}}+ 2\sum_{i \in [m]: N_{t-1,i}>0} c_2 q_{i}^{\bmu,\pi_t} \frac{1}{N_{t-1,i}} \\
     &\overset{(c)}{\le}   \sum_{\sum_{i \in [m]: N_{t-1,i}>0}}q_i^{\bmu, \pi_t}\left(-\frac{\Delta_{\pi_t}}{K}+ 2c_2\frac{1}{N_{t-1,i}}\right),\label{eq:ra_inf_prob_e2}
\end{align}
where inequality (a) follows from event $E_{t,2}$, inequality (b) is due to the reverse amortization trick that multiplies two to both sides of inequality (a) and rearranges the terms, inequality (c) is due to definition of $K\overset{\df}{=}\max_{\pi \in \Pi}\sum_{i\in[m]}q_i^{\bmu,\pi}$.

It follows that
\begin{align}
    \E_t[\Delta_{\pi_t}]  &\overset{(a)}{\le}  \E_t\left[\sum_{i \in [m]}q_i^{\bmu, \pi_t}\left(-\frac{\Delta_{\pi_t}}{K}+ 2c_2 \frac{1}{N_{t-1,i}}\right)\right]\notag\\
    &\overset{(b)}{=} \E_t\left[ \sum_{i \in \tau_t: N_{t-1,i}>0}\left(-\frac{\Delta_{\pi_t}}{K}+ 2c_2 \frac{1}{N_{t-1,i}}\right)\right]\notag\\
    &\overset{(c)}{\le} \E_t\left[\sum_{i \in \tau_t: N_{t-1,i}>0}{ \kappa_{i}(N_{t-1,i})}\right]\label{apdx_eq:tpvm_lambda_term_2}
\end{align}
where the following regret allocation function follows from
\begin{equation}
\kappa_{i}(\ell) = \begin{cases}
\frac{2c_2}{\ell}, &\text{if $1 \le \ell \le L_{i}$}\\
0, &\text{if $\ell > L_{i} + 1$,}
\end{cases}
\end{equation}
where $L_{i}=\frac{2c_2 K}{\Delta_i^{\min}}$.
And inequality (a) follows from \Cref{eq:ra_inf_prob_e2}, (b) is due to the TPE to replace $q_i^{\bmu,\pi_t}=\E_t[\I\{i \in \tau_t\}]$, (c) follows from the fact that if $N_{t-1,i}>\frac{2c_2 K}{\Delta_i^{\min}}$, then $-\frac{\Delta_{\pi_t}}{K}+ 2c_2 \frac{1}{N_{t-1,i}}\le -\frac{\Delta_{\pi_t}}{K}+ \frac{\Delta_i^{\min}}{K}\le 0$; Else, $-\frac{\Delta_{\pi_t}}{K}+ 2c_2 \frac{1}{N_{t-1,i}}\le 2c_2 \frac{1}{N_{t-1,i}}$.

\begin{align}
    Reg(T, E_{t,2})&= \E\left[\sum_{t=1}^T\Delta_{\pi_t}\right]\\
    &\overset{(a)}{\le} \E\left[\sum_{t\in [T]}\E_t\left[\sum_{i \in \tau_t: N_{t-1,i}>0}{ \kappa_{i}(N_{t-1,i})}\right]\right]\\
    &\overset{(b)}{=} \E\left[\sum_{t\in [T]}\sum_{i \in \tau_t: N_{t-1,i}>0}{ \kappa_{i}(N_{t-1,i})}\right]\\
    &\overset{(c)}{=}\E\left[\sum_{i \in [m]}\sum_{s=1}^{N_{T-1,i}}\kappa_{i}(s)\right]\\
     &\le\sum_{i\in [m]} \sum_{\ell=1}^{L_{i}}\frac{2c_2}{\ell}\\
    &\le \sum_{i\in[m]}2c_2 \left(1+ \int_{s=1}^{L_{i}}\frac{1}{s}\cdot ds\right)\\
    &= \sum_{i\in [m]} 2c_2  \left(1+\log\left (\frac{2c_2K}{\Delta_{i}^{\min}}\right)\right),\label{apdx_eq:CMAB-MT_term2}
\end{align}
where (a) follows from \Cref{apdx_eq:tpvm_lambda_term_2}, (b) follows from the tower rule, (c) follows from that $N_{t-1,i}$ is increased by $1$ if and only if $i \in \tau_t$.

\textbf{Step 4: Bound the $\reg(T,\cE'_{t,3})$ term}

By \cref{apdx_lem:null_counter}, we have 
\begin{align}\label{apdx_eq:CMAB-MT_term3}
    \reg(T,\cE'_{t,3})\le c_3m.
\end{align}

\textbf{Step 5: Putting everything together}

Plugging \cref{apdx_eq:CMAB-MT_term1}, \cref{apdx_eq:CMAB-MT_term2}, \cref{apdx_eq:CMAB-MT_term3} into \cref{apdx_eq:regret_decompose_event_filtered}, we have 
\begin{align}
    \reg(T,\cE_t)&\le \sum_{i \in [m]} \frac{2c_1^2}{\Delta_{i}^{\min} } (3+ \log K) + \sum_{i\in [m]} 2c_2  \left(1+\log\left (\frac{2c_2K}{\Delta_{i}^{\min}}\right)\right) + c_3m\\
    &\le\sum_{i \in [m]} \frac{144(\bar{F}+\bar{G})}{\Delta_{i}^{\min} } (3+ \log K) + \sum_{i\in [m]} 12(\bar{I}+\bar{J})  \left(1+\log\left (\frac{12(\bar{I}+\bar{J})K}{\Delta_{i}^{\min}}\right)\right) + 3m\bar{w}d\\
    &=O\left(\sum_{i \in [m]} \frac{(\bar{F}+\bar{G})}{\Delta_{i}^{\min} } + 2(\bar{I}+\bar{J})  \log\left (\frac{2(\bar{I}+\bar{J})K}{\Delta_{i}^{\min}}\right)\right)\label{apdx_eq:CMAB-MT_gap_dep_reg}
\end{align}

Let $\reg(T,\{\})\overset{\df}{=} \E\left[\sum_{t\in[T]}(\alpha\cdot r(\pi^*;\bmu)-r(\pi_t;\bmu))\right]$ be the regret event without any filter events. 
Now consider the regret caused by the failure event $(\neg \cE_{s,t})_{t\in[T]}, \neg \cE_{c,1}, \neg\cE_{c,2}$, we have \begin{align}
    \reg(T,\{\})&\le \reg(T, \cE_{s,t}\cap\cE_{c,1}\cap \cE_{c,2})+ \reg(T,\neg\cE_{s,t})+\reg(T,\neg\cE_{c,1})+\reg(T,\neg\cE_{c,2})\\
    &   \overset{(a)}{\le} \reg(T,\cE_t) + (1-\beta)T\Delta_{\max}+2\Delta_{\max}
\end{align}
where inequality (a) is due to $\cE_{s,t}\cap\cE_{c,1}\cap \cE_{c,2}$ implies $\cE_t$ for the first term and the fact that $\Delta_{\pi_t}\le \Delta_{\max}, \Pr[\neg \cE_{s,t}]\le \beta, \Pr[\neg \cE_{c,1}]\le 1/T, \Pr[\neg \cE_{c,2}]\le 1/T $ for the rest of the terms.

Therefore we can derive that the regret is upper bounded by 
\begin{align}
\reg(T;\alpha,\beta,\bmu)&=\reg(T,\{\})-(1-\beta)T\alpha r(\pi^*;\bmu)\\
&\le \reg(T,\{\})-(1-\beta)T\Delta_{\max}\\
&\le \reg(T,\cE_t)+2\Delta_{\max}\\
&=O\left(\sum_{i \in [m]} \frac{(\bar{F}+\bar{G})}{\Delta_{i}^{\min} } + 2(\bar{I}+\bar{J})  \log\left (\frac{2(\bar{I}+\bar{J})K}{\Delta_{i}^{\min}}\right)\right)\label{eq:similar_total}
\end{align}

For the gap-independent regret take $\Delta=\sqrt{144m(\bar{F}+\bar{G})/T}$. On one hand, $\reg(T, \I\{\Delta_{\pi_t}< \Delta\}\cap \cE_t)\le T\Delta$; On the other hand, $\reg(T,\I\{\Delta_{\pi_t}\ge \Delta\}\cap \cE_t)\le \sum_{i \in [m]} \frac{144(\bar{F}+\bar{G})}{\Delta} (3+ \log K) + \sum_{i\in [m]} 12(\bar{I}+\bar{J})  \left(1+\log\left (\frac{12(\bar{I}+\bar{J})K}{\Delta}\right)\right) + 3m\bar{w}d$ according to \cref{apdx_eq:CMAB-MT_gap_dep_reg}.
Therefore, $\reg(T;\alpha,\beta,\bmu)\le \reg(T,\I\{\Delta_{\pi_t} < \Delta\}\cap \cE_t)+\reg(T,\I\{\Delta_{\pi_t}\ge \Delta\}\cap \cE_t)+2\Delta_{\max}\le O\left(\sqrt{(\bar{F}+\bar{G})mT}+m(\bar{I}+\bar{J})\log (KT)\right)$ using the similar proof of \cref{eq:similar_total}.

\section{Analysis for Episodic RL with Sublinear Regret in Section~\ref{sec:RL_sec_loose}}

\subsection{Proof of Lemma~\ref{lem:RL_smooth_loose} and Lemma~\ref{lem:RL_tight_smooth}}\label{apdx_sec:proof_lem1n2}

We use $q^{\bp,\pi}_{(s',i)}(s,a,h)$ to denote the triggering probability $q_{s,a,h}^{\bp, \pi}$ when the policy is $\pi$, the transition is $\bp$, and starting from initial state $s'$ at step $i$. For notational simplicity, we use $q_{(s',i)}(s,a,h)$ to denote $q^{\bp,\pi}_{(s',i)}(s,a,h)$ and $\tilde{q}_{(s',i)}(s,a,h)$ to denote $q^{\tilde{\bp},\pi}_{(s',i)}(s,a,h)$. Similarly, we use $q(s,a,h)$ to denote $q^{\bp,\pi}_{(s_1,1)}(s,a,h)$ and $\tilde{q}(s,a,h)$ to denote $q^{\tilde{\bp},\pi}_{(s_1,1)}(s,a,h)$ when the starting state is fixed from $s_1$ at the initial step.

\begin{lemma}[Smoothness of the Triggering Probability and the Value Function]\label{apdx_lem:trigger_smooth}
    For any triggering probability $q$ and $\tilde{q}$ given by the same policy $\pi$ but different transition $\bp$ and $\tilde{\bp}$, respectively, we have
    \begin{align}\label{apdx_eq:trigger_smoothness}
        \tilde{q}(s,a,h)-q(s,a,h)= \sum_{(s',a'),s''}\sum_{i=1}^{h-1}q(s',a',i)(\tilde{p}(s''|s',a',i)-p(s''|s',a',i))\tilde{q}_{(s'',i+1)}(s,a,h).
    \end{align}
    And we have
    \begin{align}\label{apdx_eq:value_smoothness}
        \abs{V_1^{\tilde{\bp},\pi}(s_1)-V_1^{\bp,\pi}(s_1)}&\le \sum_{s,a,h}q_{s,a,h}^{\bp, \pi} \abs{[\tilde{\bp}(s,a,h)-\bp(s,a,h)]^{\top} \bV_{h+1}^{\tilde{\bp},\pi}}\\
        &\le H\sum_{s,a,h}q_{s,a,h}^{\bp, \pi} \norm{\tilde{\bp}(s,a,h)-\bp(s,a,h)}_1
    \end{align}
\end{lemma}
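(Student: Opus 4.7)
The plan is to prove both identities by standard telescoping arguments rooted in the one-step Bellman recursion, viewing \cref{apdx_eq:trigger_smoothness} as the occupancy analogue of the classical simulation (performance-difference) lemma and \cref{apdx_eq:value_smoothness} as its direct consequence.

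For the occupancy identity, I would induct on $h$ using the one-step recursion
\begin{equation*}
\tilde{q}(s,a,h) = \I\{\pi(s,h)=a\}\sum_{s',a'} \tilde{q}(s',a',h-1)\,\tilde{p}(s\mid s',a',h-1),
\end{equation*}
and its analogue for $q$ under $\bp$. The base case $h=1$ is immediate, since $q(s,a,1)=\tilde{q}(s,a,1)=\I\{s=s_1\}\I\{a=\pi(s_1,1)\}$ and the sum over $i$ on the right-hand side is empty. For the inductive step, I would perform the add-subtract split
\begin{equation*}
\tilde{q}(s',a',h-1)\tilde{p}(s\mid\cdot) - q(s',a',h-1)p(s\mid\cdot) = (\tilde{q}-q)(s',a',h-1)\,\tilde{p}(s\mid\cdot) + q(s',a',h-1)\,(\tilde{p}-p)(s\mid\cdot),
\end{equation*}
then apply the inductive hypothesis to $(\tilde{q}-q)(s',a',h-1)$ and collapse the resulting $\tilde{p}$-factor into $\tilde{q}_{(s''',i+1)}(s,a,h)$ via the one-step recursion applied to the partial occupancy starting from $s'''$ at step $i+1$. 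The second piece in the split contributes precisely the $i=h-1$ slice of the target sum, using the boundary identity $\tilde{q}_{(s,h)}(s,a,h)=\I\{\pi(s,h)=a\}$.

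For the value bound, I would invoke the simulation lemma, which can itself be obtained by unrolling the Bellman equation $V_h^{\tilde{\bp},\pi}(s)-V_h^{\bp,\pi}(s)$ for one step at a time and summing the cross terms, yielding
\begin{equation*}
V_1^{\tilde{\bp},\pi}(s_1) - V_1^{\bp,\pi}(s_1) = \sum_{s,a,h} q^{\bp,\pi}_{s,a,h}\,\bigl(\tilde{\bp}(s,a,h)-\bp(s,a,h)\bigr)^{\!\top}\bV^{\tilde{\bp},\pi}_{h+1}.
\end{equation*}
Taking absolute values and pushing them inside the nonnegative occupancy weights gives the first inequality; the second then follows from H\"older's inequality $|(\tilde{\bp}-\bp)^\top\bV|\le\norm{\tilde{\bp}-\bp}_1\,\norm{\bV}_\infty$ together with the trivial bound $\norm{\bV^{\tilde{\bp},\pi}_{h+1}}_\infty\le H$, valid because rewards lie in $[0,1]$ and at most $H$ steps remain.

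The main obstacle will be purely notational: keeping the deterministic-policy indicators $\I\{\pi(s,h)=a\}$ and the doubly indexed partial occupancies $\tilde{q}_{(s'',i+1)}(\cdot,\cdot,\cdot)$ aligned across the induction, so that the telescoped sum in \cref{apdx_eq:trigger_smoothness} indexes $(s',a',i)$ as the last pre-divergence state--action--step and $(s'',i+1)$ as the first post-divergence state--step. No deep new idea is needed beyond the standard simulation lemma; once the bookkeeping is fixed, both displays in the statement drop out directly.
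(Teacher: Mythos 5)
Your proposal is correct and takes essentially the same route as the paper: the occupancy identity is proved by the identical induction on $h$ with the add--subtract split $(\tilde{q}-q)\tilde{p}+q(\tilde{p}-p)$, the inductive hypothesis collapsed through the one-step recursion, and the second piece supplying the $i=h-1$ slice via the boundary identity; the value bound then follows from the exact identity $V_1^{\tilde{\bp},\pi}(s_1)-V_1^{\bp,\pi}(s_1)=\sum_{s,a,h}q^{\bp,\pi}_{s,a,h}\,(\tilde{\bp}(s,a,h)-\bp(s,a,h))^{\top}\bV^{\tilde{\bp},\pi}_{h+1}$ together with the triangle inequality, H\"older, and $\norm{\bV^{\tilde{\bp},\pi}_{h+1}}_\infty\le H$. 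The only cosmetic difference is that the paper derives this identity by substituting the occupancy identity into $\sum_{s,a,h}(\tilde{q}(s,a,h)-q(s,a,h))\,r(s,a,h)$ and regrouping the sums into value functions, whereas you propose to unroll the Bellman recursion directly (the standard simulation lemma); the two derivations are interchangeable.
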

\begin{proof}
    We first prove \cref{apdx_eq:trigger_smoothness} by induction on $h$. When $h=1$, then $\tilde{q}(s,a,h)=q(s,a,h)=\I\{\pi(s,h)=a\}\I\{s=s_1\}$, \cref{apdx_eq:trigger_smoothness} holds. For the induction step $h>1$:
    \begin{align}
        &\tilde{q}(s,a,h)-q(s,a,h)\overset{(a)}{=}\I\{\pi(s,h)=a\}\left[\sum_{s',a'}\tilde{q}(s',a',h-1)\tilde{p}(s|s',a',h-1)-q(s',a',h-1)p(s|s',a',h-1)\right]\\
        &=\underbrace{\I\{\pi(s,h)=a\}\sum_{s',a'}\tilde{p}(s|s',a',h-1)(\tilde{q}(s',a',h-1)-q(s',a',h-1))}_{\text{Term 1}}\notag\\
        &+\underbrace{\I\{\pi(s,h)=a\}\sum_{s',a'}q(s',a',h-1)(\tilde{p}(s|s',a',h-1)-p(s|s',a',h-1))}_{\text{Term 2}}
    \end{align}
    where inequality (a) is due to $q(s,a,h)=\sum_{s',a'}\I\{\pi(s,h)=a\}q(s',a',h-1)p(s|s',a',h-1)$.
    Then we bound Term 1 and Term 2 as follows.
    \begin{align}
        \text{Term 1}&\overset{(a)}{=}\I\{\pi(s,h)=a\}\sum_{s',a'}\tilde{p}(s|s',a',h-1)\notag\\
        &\cdot\left(\sum_{(s'',a''),s'''}\sum_{i=1}^{h-2}q(s'',a'',i)(\tilde{p}(s'''|s'',a'',i)-p(s'''|s'',a'',i))\tilde{q}_{(s''',i+1)}(s',a',h-1)\right)\\
        &=\sum_{(s'',a''),s'''}\sum_{i=1}^{h-2}q(s'',a'',i)(\tilde{p}(s'''|s'',a'',i)-p(s'''|s'',a'',i))\notag\\
        &\cdot\sum_{s',a'}\tilde{p}(s|s',a',h-1)\I\{\pi(s,h)=a\}\tilde{q}_{(s''',i+1)}(s',a',h-1)\\
        &\overset{(b)}{=}\sum_{(s'',a''),s'''}\sum_{i=1}^{h-2}q(s'',a'',i)(\tilde{p}(s'''|s'',a'',i)-p(s'''|s'',a'',i))\tilde{q}_{(s''',i+1)}(s,a,h)\\
    \end{align}
    where equality (a) is due to the induction hypothesis, equality (b) is due to $\tilde{q}_{(s''',i+1)}(s,a,h)=\sum_{s',a'}\tilde{p}(s|s',a',h-1)\I\{\pi(s,h)=a\}\tilde{q}_{(s''',i+1)}(s',a',h-1)$.

\begin{align}
    \text{Term 2}&=\sum_{(s',a'),s''}\I\{\pi(s'',h)=a\}\I\{s''=s\}q(s',a',h-1)(\tilde{p}(s''|s',a',i)-p(s''|s',a',i))\\
    &\overset{(a)}{=}\sum_{(s',a'),s''}q(s',a',h-1)(\tilde{p}(s''|s',a',i)-p(s''|s',a',i))\tilde{q}_{(s'',h)}(s,a,h)
\end{align}
where equality (a) is due to $\tilde{q}_{(s'',h)}(s,a,h)=\I\{\pi(s'',h)=a\}\I\{s''=s\}$.

Plugging in Term 1 (with changing variables $s''',s'',a''$ to $s'', s', a'$) and Term 2 proves the \cref{apdx_eq:trigger_smoothness}. 

Now we prove the smoothness for the value function for the second part of \cref{apdx_lem:trigger_smooth}.
\begin{align}
    \abs{V_1^{\tilde{\bp},\pi}(s_1)-V_1^{\bp,\pi}(s_1)}&{=} \abs{\sum_{s,a,h}(\tilde{q}(s,a,h)-q(s,a,h))r(s,a,h)}\\
    &\overset{(a)}{=}\abs{\sum_{s,a,h}\sum_{(s',a'),s''}\sum_{i=1}^{h-1}q(s',a',i)(\tilde{p}(s''|s',a',i)-p(s''|s',a',i))\tilde{q}_{(s'',i+1)}(s,a,h)r(s,a,h)}\\
    &{=}\abs{\sum_{i=1}^{H}\sum_{(s',a'),s''}q(s',a',i)(\tilde{p}(s''|s',a',i)-p(s''|s',a',i))\sum_{s,a,h>i}\tilde{q}_{(s'',i+1)}(s,a,h)r(s,a,h)}\\
    &{=}\abs{\sum_{i=1}^{H}\sum_{(s',a'),s''}q(s',a',i)(\tilde{p}(s''|s',a',i)-p(s''|s',a',i))V^{\tilde{p},\pi}_{h+1}(s'')}\\
    &\le \sum_{s,a,h}q_{s,a,h}^{\bp, \pi} \abs{[\tilde{\bp}(s,a,h)-\bp(s,a,h)]^{\top} \bV_{h+1}^{\tilde{\bp},\pi}}\\
        &\overset{(b)}{\le} H\sum_{s,a,h}q_{s,a,h}^{\bp, \pi} \norm{\tilde{\bp}(s,a,h)-\bp(s,a,h)}_1
\end{align}
where equality (a) is due to the first part of \cref{apdx_lem:trigger_smooth} we just proved, inequality (b) is due to $\bV_{h+1}^{\tilde{\bp},\pi}(s)\le H$.
\end{proof}

\subsection{Proof of Theorem \ref{thm:reg_RL_loose}}\label{apdx_sec:RL_loose_proof}
We suppose the concentration event $\cE_{\text{tran1}}$ as defined in \cref{apdx_eq:event_trans1} holds with probability $\delta'=1/(2T)$. Let $L=\log(SAHT)$. Since $\bmu\in \cC_t$ as defined in \cref{eq:RL_loose_conf_set} due to the event $\cE_{\text{tran1}}$, we follow the same regret decomposition and derivation as in Step 1 of \cref{apdx_sec:proof_of_CMAB-MT}, and identify $F_{t,s,a,h}=2H\sqrt{SL}, I_{t,s,a,h}=0$ due to the definition of $\phi_t(s,a,h)$ in \cref{eq:RL_loose_conf_set}, and $G_{t,s,a,h}=J_{t,s,a,h}=0$ since $\bw_{s,a,h}^{\bp,\pi}=H\cdot \boldsymbol{1}$ are constants. Now we can derive that $\bar{F}=\sum_{s,a,h}q_{s,a,h}^{\bp,\pi}F_{t,s,a,h}^2=4H^3SL, \bar{G}=\bar{I}=\bar{J}=0,\bar{w}=H, d=S$, we have
\begin{align}
    \Delta_{\pi_t} & {\le} 2\left(\sqrt{4H^3SL}\right)\sqrt{\sum_{i\in[m]:N_{t-1,i}>0}\frac{q_i^{\bp,\pi_t}}{N_{t-1,i}}}+\sum_{i\in[m]: N_{t-1,i}=0}q_i^{\bp,\pi_t}SH
\end{align}

Following step 2,3 in \cref{apdx_sec:proof_of_CMAB-MT}, we have gap-dependent regret
\begin{align}
Reg(T)=O\left(\sum_{s,a,h}\frac{H^3S\log (SAHT)}{\Delta_{s,a,h}^{\min}}\right)
\end{align}
and gap-independent regret
\begin{align}
Reg(T)=O\left(\sqrt{H^4S^2AT\log (SAHT)}\right).
\end{align}

\section{Analysis that Achieve Minimax Optimal Regret for Episodic RL in Section~\ref{sec:tight_RL}}\label{sec:unknown_trans}

\subsection{Concentration Inequalities}

\begin{lemma}[Concentration of the Transition]\label{apdx_lem:concen_tran}
\begin{align}
    \Pr \Big[&\norm{\hat{\bp}_{t-1}(s,a,h)-\bp(s,a,h)}_1\le \sqrt{\frac{2S\log\left(\frac{SAHT}{\delta'}\right)}{N_{t-1}(s,a,h)}},
    \text{ for any }(s,a,h)\in \cS\times \cA\times [H], t\in [T]\Big]\ge 1-2\delta'
\end{align}
and 
\begin{align}
    \Pr \Big[&\abs{\hat{p}_{t-1}(s'|s,a,h)-p(s'|s,a,h)}\le \sqrt{\frac{p(s'|s,a,h)(1-p(s'|s,a,h))\log\left(\frac{SAHT}{\delta'}\right)}{N_{t-1}(s,a,h)}}+\frac{\logL}{N_{t-1}(s,a,h)},\notag\\
    &\text{ for any }(s,a,h)\in \cS\times \cA\times [H], t\in [T]\Big]\ge 1-2\delta'
\end{align}

\end{lemma}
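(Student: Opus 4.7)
The plan is to reduce both tail bounds to standard concentration for fixed-sample-size empirical distributions, and then absorb the randomness of $N_{t-1}(s,a,h)$ via a union bound over its possible values. For each $(s,a,h)$, I would introduce an auxiliary i.i.d.\ sequence $\bY_1(s,a,h),\bY_2(s,a,h),\ldots$ drawn from $\bp(s,a,h)$, and let $\bar{\bp}_n(s,a,h) = \frac{1}{n}\sum_{k=1}^n \bY_k(s,a,h)$. By a standard coupling, conditional on $N_{t-1}(s,a,h)=n \ge 1$, the empirical transition $\hat{\bp}_{t-1}(s,a,h)$ is distributed exactly as $\bar{\bp}_n(s,a,h)$, since the $k$-th outcome observed at $(s,a,h)$ is independent of everything before that visit. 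This lets the rest of the argument treat the sample size as a deterministic $n$.

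For the first inequality, fix $(s,a,h)$ and $n$ and apply the Bretagnolle--Huber--Carol (Weissman) inequality to the multinoulli empirical measure $\bar{\bp}_n(s,a,h)$: for every $\epsilon>0$,
\begin{equation*}
\Pr\!\left[\|\bar{\bp}_n(s,a,h)-\bp(s,a,h)\|_1 \ge \epsilon\right] \le 2^S \exp\!\left(-\tfrac{n\epsilon^2}{2}\right).
\end{equation*}
Setting $\epsilon = \sqrt{2S\log(SAHT/\delta')/n}$ drives the right-hand side below $2\delta'/(SAHT^2)$ after absorbing the $2^S$ factor (using $2^S \le \exp(S\log(SAHT/\delta'))$ for the relevant regime of $\delta'$, which is accounted for by the $2S$ constant inside the square root). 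A union bound over $(s,a,h)\in \cS\times\cA\times[H]$, $n\in\{1,\ldots,T\}$, and $t\in[T]$ yields total failure probability at most $2\delta'$.

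For the second (Bernstein-type, entrywise) inequality, I would fix $(s,a,h,s')$ and $n$, and apply Bernstein's inequality to the i.i.d.\ Bernoulli random variables $Z_k = \I\{\bY_k(s,a,h)=s'\}$ whose mean is $p^* = p(s'|s,a,h)$ and whose variance is $p^*(1-p^*)$. This gives, with probability at least $1-\delta$,
\begin{equation*}
|\bar{p}_n(s'|s,a,h)-p^*| \le \sqrt{\tfrac{2p^*(1-p^*)\log(2/\delta)}{n}} + \tfrac{2\log(2/\delta)}{3n}.
\end{equation*}
Choosing $\delta = \delta'/(S^2AHT^2)$ and union-bounding over $(s,a,h,s',n,t)\in \cS\times\cA\times[H]\times\cS\times[T]\times[T]$, then rescaling constants to match the stated form of the bound, completes this half of the lemma.

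The main obstacle I anticipate is purely bookkeeping: $N_{t-1}(s,a,h)$ is not a deterministic count but a random variable determined by the learner's past behavior, so one cannot directly invoke a fixed-$n$ concentration inequality. The i.i.d.-coupling plus union-bound-over-$n$ argument above is the cleanest resolution and is standard in the RL regret literature; the only cost is the extra $\log T$ absorbed inside $\log(SAHT/\delta')$. A martingale-style ``always-valid'' tail bound (e.g., a uniform Azuma argument on the stopped process) would be an equivalent alternative but is heavier than needed, since the peeling over $n\in[T]$ already gives the claimed logarithmic dependence.
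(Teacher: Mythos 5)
Your proposal is correct and follows essentially the same route as the paper's own (one-sentence) proof: the Weissman $\ell_1$ concentration inequality for the multinoulli empirical distribution for the first claim, Bernstein's inequality for the entrywise second claim, and a union bound over $(s,a,h)$, the episode index, and the possible values of the counter $N_{t-1}(s,a,h)$. Your explicit i.i.d.-coupling step to handle the random sample size is left implicit in the paper but is the standard justification for the union bound over counter values.
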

\begin{proof}
    Using the \cite{weissman2003inequalities} for the first part, and using the Bernstein inequality for the second part, and taking the union bound over $s,a,h,t\in \cS\times \cA\times [H]\times [T]$ and the counter $N_{t-1}(s,a,h)\in[T]$, we obtain the lemma.
\end{proof}

\begin{lemma}[Concentration of the Optimal Future Value]\label{apdx_lem:concen_future}
\begin{align}
    \Pr \Big[&\abs{\left(\hat{\bp}_{t-1}(s,a,h)-\bp(s,a,h)\right)^{\top}\bV^*_{h+1}}\le 2\sqrt{\frac{\Var_{s'\sim \bp(s,a,h)}\left(V_{h+1}^*(s')\right)\log\left(\frac{SAHT}{\delta'}\right)}{N_{t-1}(s,a,h)}}+\frac{H\logL}{N_{t-1}(s,a,h)},\notag\\
    &\text{for any }(s,a,h)\in \cS\times \cA\times [H], t\in [T]\Big]\ge 1-2\delta'
\end{align}
\begin{proof}
    Using the Bernstein inequality and taking the union bound over $s,a,h,t\in \cS\times \cA\times [H]\times [T]$ and the counter $N_{t-1}(s,a,h)\in[T]$, we obtain the lemma.
\end{proof}
\end{lemma}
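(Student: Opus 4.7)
The central observation is that $\bV^*_{h+1}$ is a \emph{deterministic} function of the state (it is the optimal value function of the underlying MDP), and therefore does not depend on the samples used to form $\hat{\bp}_{t-1}(s,a,h)$. Hence, conditioning on $N_{t-1}(s,a,h) = n$, the quantity $\hat{\bp}_{t-1}(s,a,h)^{\top} \bV^*_{h+1}$ is an empirical mean of $n$ i.i.d.\ samples of the bounded random variable $Y = V^*_{h+1}(s')$ with $s' \sim \bp(s,a,h)$, and its true mean is $\bp(s,a,h)^{\top} \bV^*_{h+1}$. Since $Y \in [0, H]$, I can apply Bernstein's inequality directly to these $n$ i.i.d.\ samples.

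The plan is as follows. First, I fix $(s,a,h,t)$ and a value $n \in [T]$ for the counter. By Bernstein's inequality applied to the i.i.d.\ samples $Y_1, \ldots, Y_n$ of $V^*_{h+1}$ drawn from $\bp(s,a,h)$, with probability at least $1 - 2\delta''$,
\begin{align*}
\abs{(\hat{\bp}_{t-1}(s,a,h) - \bp(s,a,h))^{\top} \bV^*_{h+1}} \le \sqrt{\frac{2 \Var_{s' \sim \bp(s,a,h)}(V^*_{h+1}(s')) \log(1/\delta'')}{n}} + \frac{2 H \log(1/\delta'')}{3 n}.
\end{align*}
Second, I take a union bound over $(s,a,h,t,n) \in \cS \times \cA \times [H] \times [T] \times [T]$ by choosing $\delta'' = \delta'/(SAHT^2)$ (or an equivalent polynomial factor), which absorbs into the $\log(SAHT/\delta')$ factor in the statement up to constants. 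Third, since the actual counter $N_{t-1}(s,a,h)$ takes some value in $[T]$, the bound holds simultaneously for the realized $N_{t-1}(s,a,h)$, and the $\sqrt{2}$ and $2/3$ constants can be loosened to the constants $2$ and $1$ appearing in the lemma statement (with $H$ absorbing the numerical factor on the linear term).

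The only subtlety, and the ``hard part,'' is handling the random counter $N_{t-1}(s,a,h)$: a priori, Bernstein requires a fixed sample size, whereas $N_{t-1}(s,a,h)$ is stopping-time-like. This is resolved by the standard trick of union-bounding over all possible counter values $n \in [T]$, which costs only a $\log T$ factor and is already baked into the $\log(SAHT/\delta')$ term. No other issues arise because $\bV^*_{h+1}$ is independent of the samples (unlike the empirical or optimistic value functions $\bar{\bV}_{t,h+1}$ used elsewhere in the paper, which do depend on data and require the more delicate analysis in \cref{apdx_lem:concen_known_future}). Thus the overall proof is a clean application of Bernstein plus a union bound, in direct parallel to \cref{apdx_lem:concen_tran}.
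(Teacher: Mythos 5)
Your proposal is correct and follows essentially the same route as the paper: Bernstein's inequality applied to the i.i.d.\ samples of $V^*_{h+1}(s')$ drawn from $\bp(s,a,h)$ (valid because $\bV^*_{h+1}$ is deterministic), followed by a union bound over $(s,a,h,t)$ and over all possible values of the counter $N_{t-1}(s,a,h)\in[T]$. Your explicit remark on why the random counter is handled by the union bound, and why this lemma is easier than \cref{apdx_lem:concen_known_future}, is exactly the right reading of the argument.
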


\begin{lemma}[Concentration of the Variance]\label{apdx_lem:concen_var}
\begin{align}
    \Pr \Big[&\abs{\sqrt{\Var_{s'\sim \hat{\bp}_{t-1}(s,a,h)}\left(\bar{V}_{t,h+1}(s')\right)}-\sqrt{\Var_{s'\sim \bp(s,a,h)}\left(V^*_{h+1}(s')\right)}}\le \sqrt{\E_{s'\sim \hat{\bp}_{t-1}(s,a,h)}\left[\bar{V}_{t,h+1}(s')-V^*_{h+1}(s')\right]^2}\notag\\
    &+2H\sqrt{\frac{\logL}{N_{t-1}(s,a,h)}}, \text{ for any }(s,a,h)\in \cS\times \cA\times [H], t\in [T]\Big]\ge 1-2\delta'
\end{align}
\end{lemma}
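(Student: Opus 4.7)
The plan is to apply the triangle inequality and split the left-hand side into a ``value-function change'' piece and a ``distribution change'' piece:
\begin{align*}
&\left|\sqrt{\Var_{s'\sim \hat{\bp}_{t-1}(s,a,h)}\!\bigl(\bar{V}_{t,h+1}(s')\bigr)} - \sqrt{\Var_{s'\sim \bp(s,a,h)}\!\bigl(V^*_{h+1}(s')\bigr)}\right| \\
&\quad\le \underbrace{\left|\sqrt{\Var_{s'\sim \hat{\bp}_{t-1}(s,a,h)}\!\bigl(\bar{V}_{t,h+1}(s')\bigr)} - \sqrt{\Var_{s'\sim \hat{\bp}_{t-1}(s,a,h)}\!\bigl(V^*_{h+1}(s')\bigr)}\right|}_{\text{(I)}} \\
&\qquad + \underbrace{\left|\sqrt{\Var_{s'\sim \hat{\bp}_{t-1}(s,a,h)}\!\bigl(V^*_{h+1}(s')\bigr)} - \sqrt{\Var_{s'\sim \bp(s,a,h)}\!\bigl(V^*_{h+1}(s')\bigr)}\right|}_{\text{(II)}}.
\end{align*}
Term (I) is deterministic once we condition on the past (it depends only on the fixed empirical distribution and two value-function estimates), while term (II) is a statistical fluctuation of an empirical standard deviation about its population value for the \emph{data-independent} function $V^*_{h+1} \in [0,H]$.

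For term (I), the key observation is that $\sqrt{\Var_q(\cdot)}$ equals the $L^2(q)$-norm after centering, and is therefore a seminorm on functions. The reverse triangle inequality gives $\text{(I)} \le \sqrt{\Var_{s'\sim \hat{\bp}_{t-1}(s,a,h)}\!\bigl(\bar{V}_{t,h+1}(s')-V^*_{h+1}(s')\bigr)} \le \sqrt{\E_{s'\sim \hat{\bp}_{t-1}(s,a,h)}\!\bigl[\bar{V}_{t,h+1}(s')-V^*_{h+1}(s')\bigr]^2}$, where the last step uses $\Var(X) \le \E[X^2]$. This bound is deterministic and matches the first summand on the RHS of the lemma exactly, so it contributes no failure probability.

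For term (II), since $V^*_{h+1}$ does not depend on the transition samples defining $\hat{\bp}_{t-1}(s,a,h)$, I would apply a Maurer--Pontil-type concentration of the empirical standard deviation for a bounded function: for a fixed triple $(s,a,h)$ with sample size $n = N_{t-1}(s,a,h)$, combining the two one-sided bounds yields $\bigl|\sigma_{\hat{\bp}_{t-1}(s,a,h)} - \sigma_{\bp(s,a,h)}\bigr|(V^*_{h+1}) \le 2H\sqrt{\log(2/\delta)/n}$ with probability at least $1-2\delta$ (the factor $2$ in the failure probability is where the $1-2\delta'$ in the lemma ultimately comes from). A union bound over $(s,a,h,t,n) \in \cS\times\cA\times[H]\times[T]\times[T]$ with $\delta$ rescaled by $SAHT^2$ absorbs these events into the $\logL$ factor and gives $\text{(II)} \le 2H\sqrt{\logL/N_{t-1}(s,a,h)}$ uniformly.

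The main obstacle is tracking the constant $2H$ in term (II): Maurer--Pontil in its sharpest form gives $\sqrt{2}H$ rather than $2H$, while the more elementary route of concentrating $\E_{\hat{\bp}_{t-1}}[V^*_{h+1}]$ and $\E_{\hat{\bp}_{t-1}}[(V^*_{h+1})^2]$ separately by Hoeffding and then invoking $|\sqrt{a}-\sqrt{b}|\le \sqrt{|a-b|}$ loses a factor in the exponent of $n$. Reconciling the small-variance and large-variance regimes (via the identity $|\sqrt{a}-\sqrt{b}|(\sqrt{a}+\sqrt{b}) = |a-b|$) and absorbing the resulting constants into the stated $2H\sqrt{\logL/N_{t-1}(s,a,h)}$ is the real bookkeeping burden; the conceptual steps, by contrast, are just the seminorm triangle inequality for (I) together with a fixed-function concentration for (II).
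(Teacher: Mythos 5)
Your proposal is correct and follows essentially the same route as the paper: the paper also splits by the triangle inequality into the value-function-change term and the distribution-change term, bounds the first deterministically by $\sqrt{\E_{s'\sim \hat{\bp}_{t-1}(s,a,h)}[\bar{V}_{t,h+1}(s')-V^*_{h+1}(s')]^2}$ via the seminorm property of the standard deviation (citing Eq.~(48)--(52) of \citet{zanette2019tighter}), and bounds the second by $2H\sqrt{\logL/N_{t-1}(s,a,h)}$ as a concentration of the empirical standard deviation of the fixed function $V^*_{h+1}$ (citing Proposition~2 there), with the same union bound over $(s,a,h,t)$ and the counter values. The only difference is that you re-derive these two ingredients from first principles (Maurer--Pontil style) where the paper imports them by citation, so your worry about the constant in front of $H$ is resolved in the paper simply by inheriting the $2H$ from the cited proposition.
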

\begin{proof}
    According to Proposition 2 (i.e., Eq. (53)) in \cite{zanette2019tighter}, we have 
    \begin{align}\label{apdx_eq:z_variance}
        \Pr \Big[&\abs{\sqrt{\Var_{s'\sim \hat{\bp}_{t-1}(s,a,h)}\left(V_{h+1}^*(s')\right)}-\sqrt{\Var_{s'\sim \bp(s,a,h)}\left(V^*_{h+1}(s')\right)}}\le 
    2H\sqrt{\frac{\logL}{N_{t-1}(s,a,h)}},\notag\\& \text{ for any }(s,a,h)\in \cS\times \cA\times [H], t\in [T]\Big]\ge 1-2\delta'
    \end{align}
    Then we have
    \begin{align}
        &\abs{\sqrt{\Var_{s'\sim \hat{\bp}_{t-1}(s,a,h)}\left(\bar{V}_{t,h+1}(s')\right)}-\sqrt{\Var_{s'\sim \bp(s,a,h)}\left(V^*_{h+1}(s')\right)}}\\
        &\le \abs{\sqrt{\Var_{s'\sim \hat{\bp}_{t-1}(s,a,h)}\left(\bar{V}_{t,h+1}(s')\right)}-\sqrt{\Var_{s'\sim \hat{\bp}_{t-1}(s,a,h)}\left(V^*_{h+1}(s')\right)}}\notag\\
        &+\abs{\sqrt{\Var_{s'\sim \hat{\bp}_{t-1}(s,a,h)}\left(V^*_{h+1}(s')\right)}-\sqrt{\Var_{s'\sim \bp(s,a,h)}\left(V^*_{h+1}(s')\right)}}\\
        &\overset{(a)}{\le} \sqrt{\E_{s'\sim \hat{\bp}_{t-1}(s,a,h)}\left[\bar{V}_{t,h+1}(s')-V^*_{h+1}(s')\right]^2} + 2H\sqrt{\frac{\logL}{N_{t-1}(s,a,h)}}
    \end{align}
    where inequality (a) is due to Eq. (48)-(52) in \cite{zanette2019tighter} and \cref{apdx_eq:z_variance} that holds with probability at least $1-2\delta'$.
\end{proof}

Based on the concentration lemmas we use, we define the following events.

\begin{align}
\cE_{\text{tran1}}\overset{\df}{=} \Big[&\norm{\hat{\bp}_{t-1}(s,a,h)-\bp(s,a,h)}_1\le \sqrt{\frac{2S\log\left(\frac{SAHT}{\delta'}\right)}{N_{t-1}(s,a,h)}},\text{ for any }(s,a,h)\in \cS\times \cA\times [H], t\in [T]\Big]\label{apdx_eq:event_trans1}\\
\cE_{\text{tran2}}\overset{\df}{=} \Big[&\abs{\hat{p}_{t-1}(s'|s,a,h)-p(s'|s,a,h)}\le \sqrt{\frac{p(s'|s,a,h)(1-p(s'|s,a,h))\log\left(\frac{SAHT}{\delta'}\right)}{N_{t-1}(s,a,h)}}+\frac{\logL}{N_{t-1}(s,a,h)}\Big]\\
   \cE_{\text{future}}\overset{\df}{=}\Big[&\abs{\left(\hat{\bp}_{t-1}(s,a,h)-\bp(s,a,h)\right)^{\top}\bV^*_{h+1}}\le 2\sqrt{\frac{\Var_{s'\sim \bp(s,a,h)}\left(V_{h+1}^*(s')\right)\log\left(\frac{SAHT}{\delta'}\right)}{N_{t-1}(s,a,h)}}+\frac{H\logL}{N_{t-1}(s,a,h)},\notag\\
    &\text{for any }(s,a,h)\in \cS\times \cA\times [H], t\in [T]\Big]\\
    \cE_{\text{var}}\overset{\df}{=}\Big[&\abs{\sqrt{\Var_{s'\sim \hat{\bp}_{t-1}(s,a,h)}\left(\bar{V}_{t,h+1}(s')\right)}-\sqrt{\Var_{s'\sim \bp(s,a,h)}\left(V^*_{h+1}(s')\right)}}\le \sqrt{\E_{s'\sim \hat{\bp}_{t-1}(s,a,h)}\left[\bar{V}_{t,h+1}(s')-V^*_{h+1}(s')\right]^2}\notag\\
    &+2H\sqrt{\frac{\logL}{N_{t-1}(s,a,h)}}, \text{ for any }(s,a,h)\in \cS\times \cA\times [H], t\in [T]\Big]\\
\end{align}

\begin{align}
\cE=\cE_{\text{tran1}}\cap\cE_{\text{tran2}}\cap\cE_{\text{future}}\cap\cE_{\text{var}}
\end{align}\label{apdx_eq:concen_events}

\begin{lemma}[High Probability Event]
Let $\delta=\delta'/8$, then
    \begin{align}
        \Pr[\cE]\ge 1-8\delta'=1-\delta.
    \end{align}
\end{lemma}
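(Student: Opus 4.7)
The plan is to show the lemma by a direct union bound over the four named concentration events that constitute $\cE$. Each event has already been shown to fail with probability at most $2\delta'$ in one of the preceding lemmas, so the argument reduces to accounting.

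More concretely, I would first recall the source of each event. The event $\cE_{\text{tran1}}$ is exactly the first conclusion of \cref{apdx_lem:concen_tran}, and $\cE_{\text{tran2}}$ is exactly the second conclusion of \cref{apdx_lem:concen_tran}; each is stated in that lemma to hold with probability at least $1-2\delta'$. The event $\cE_{\text{future}}$ is the conclusion of \cref{apdx_lem:concen_future}, again with failure probability at most $2\delta'$. Finally, $\cE_{\text{var}}$ is the conclusion of \cref{apdx_lem:concen_var}, with the same failure bound $2\delta'$. Each of these bounds already incorporates the union bound over all $(s,a,h,t) \in \cS\times\cA\times[H]\times[T]$ and over the counter values $N_{t-1}(s,a,h)\in[T]$, so I do not need to redo that step.

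Next, I would apply a union bound across the four events:
\begin{align*}
\Pr[\neg \cE] &= \Pr[\neg \cE_{\text{tran1}} \cup \neg \cE_{\text{tran2}} \cup \neg \cE_{\text{future}} \cup \neg \cE_{\text{var}}] \\
&\le \Pr[\neg \cE_{\text{tran1}}] + \Pr[\neg \cE_{\text{tran2}}] + \Pr[\neg \cE_{\text{future}}] + \Pr[\neg \cE_{\text{var}}] \\
&\le 2\delta' + 2\delta' + 2\delta' + 2\delta' = 8\delta'.
\end{align*}
Taking complements yields $\Pr[\cE] \ge 1 - 8\delta'$, and substituting the relationship between $\delta$ and $\delta'$ specified in the statement gives $1-\delta$.

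There is essentially no obstacle here, as the four input concentration lemmas do all the probabilistic heavy lifting. The only minor care point is to note that \cref{apdx_lem:concen_tran} supplies two separate high-probability statements (one in $\ell_1$-norm via Weissman's inequality, one entrywise via Bernstein), so both $\cE_{\text{tran1}}$ and $\cE_{\text{tran2}}$ must be counted as distinct failure contributions of $2\delta'$ each, rather than being folded into a single $2\delta'$ term. After that bookkeeping, the result is immediate.
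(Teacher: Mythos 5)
Your proof is correct and follows the same route as the paper, which simply cites \cref{apdx_lem:concen_tran,apdx_lem:concen_future,apdx_lem:concen_var} and implicitly performs the union bound over the four constituent events ($2\delta'$ failure probability each, with \cref{apdx_lem:concen_tran} contributing two such events) that you spell out explicitly. Your bookkeeping of $4\times 2\delta'=8\delta'$ is exactly what the paper intends.
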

\begin{proof}
    We can obtain this lemma by \cref{apdx_lem:concen_tran,apdx_lem:concen_future,apdx_lem:concen_var}.
\end{proof}

\begin{lemma}[Concentration of the Optimal Future Value Regarding Known Statistics]\label{apdx_lem:concen_known_future}
Let $L=\log\left(\frac{8SAHT}{\delta}\right)$. Let $\phi_{t}(s,a,h)=2\sqrt{\frac{\Var_{s'\sim \hat{\bp}_{t-1}(s,a,h)}\left(\bar{V}_{t,h+1}(s')\right)L}{N_{t-1}(s,a,h)}}+2\sqrt{\frac{\E_{s'\sim \hat{\bp}_{t-1}(s,a,h)}\left[\bar{V}_{t,h+1}(s')-\ubar{V}_{t,h+1}(s')\right]^2L}{N_{t-1}(s,a,h)}}+\frac{5HL}{N_{t-1}(s,a,h)}$.
With probability at least $1-\delta$, we have 
\begin{align}
    \abs{\left(\hat{\bp}_{t-1}(s,a,h)-\bp(s,a,h)\right)^{\top}\bV^*_{h+1}}\le \phi_t(s,a,h)
\end{align}
\end{lemma}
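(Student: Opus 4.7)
The plan is to work under the high-probability event $\cE=\cE_{\text{tran1}}\cap\cE_{\text{tran2}}\cap\cE_{\text{future}}\cap\cE_{\text{var}}$, which has already been established to hold with probability at least $1-\delta$. Starting from $\cE_{\text{future}}$ gives the Bernstein-type bound
\begin{align*}
\abs{(\hat{\bp}_{t-1}(s,a,h)-\bp(s,a,h))^{\top}\bV^*_{h+1}}\le 2\sqrt{\frac{\Var_{s'\sim \bp(s,a,h)}(V^*_{h+1}(s'))\,L}{N_{t-1}(s,a,h)}}+\frac{HL}{N_{t-1}(s,a,h)},
\end{align*}
so the task is to replace the unknown quantity $\sqrt{\Var_{s'\sim \bp}(V^*_{h+1})}$ by its known counterpart $\sqrt{\Var_{s'\sim\hat{\bp}_{t-1}}(\bar V_{t,h+1})}$ at the price of lower-order terms.

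The next step is to invoke $\cE_{\text{var}}$, which yields
\begin{align*}
\sqrt{\Var_{s'\sim \bp(s,a,h)}(V^*_{h+1}(s'))} \le \sqrt{\Var_{s'\sim \hat{\bp}_{t-1}(s,a,h)}(\bar V_{t,h+1}(s'))} + \sqrt{\E_{s'\sim\hat{\bp}_{t-1}(s,a,h)}[\bar V_{t,h+1}(s')-V^*_{h+1}(s')]^2} + 2H\sqrt{\frac{L}{N_{t-1}(s,a,h)}}.
\end{align*}
The middle term still depends on $V^*_{h+1}$, which must be eliminated. To that end, I would establish, by backward induction on $h$ from $H+1$ down to $1$, the sandwich property $\ubar V_{t,h+1}(s) \le V^*_{h+1}(s) \le \bar V_{t,h+1}(s)$ for every $s$. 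This uses the optimistic and pessimistic updates in lines 8--15 of \cref{alg:v-opt} together with $\cE_{\text{future}}$ at step $h+1$, and is a standard optimism-under-uncertainty induction. The sandwich immediately implies the pointwise bound $[\bar V_{t,h+1}(s')-V^*_{h+1}(s')]^2 \le [\bar V_{t,h+1}(s')-\ubar V_{t,h+1}(s')]^2$, turning the unknown term into the second term appearing in $\phi_t(s,a,h)$.

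Finally, I would combine the pieces: substituting the $\cE_{\text{var}}$ bound back into $\cE_{\text{future}}$ produces $2\sqrt{\Var_{s'\sim\hat{\bp}_{t-1}}(\bar V_{t,h+1})L/N_{t-1}}+2\sqrt{\E_{s'\sim\hat{\bp}_{t-1}}[\bar V_{t,h+1}-V^*_{h+1}]^2\,L/N_{t-1}}$ plus a cross term $2\sqrt{L/N_{t-1}}\cdot 2H\sqrt{L/N_{t-1}} = 4HL/N_{t-1}$, and the original $HL/N_{t-1}$ from $\cE_{\text{future}}$, summing to $5HL/N_{t-1}(s,a,h)$. After applying the sandwich replacement this matches $\phi_t(s,a,h)$ exactly.

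The main obstacle will be the sandwich induction, because it is somewhat self-referential: verifying optimism at step $h$ requires a concentration bound on $(\hat{\bp}_{t-1}(s,a,h)-\bp(s,a,h))^\top \bar\bV_{t,h+1}$, yet $\bar\bV_{t,h+1}$ depends on the bonus. The resolution is that the induction only uses the sandwich at step $h+1$ together with $\cE_{\text{future}}$ applied to $\bV^*_{h+1}$ (whose statistics are intrinsic), so the argument closes cleanly if performed in the correct order: first prove the sandwich for the next step using only the already-quantified event $\cE$, then deduce the desired bound at the current step as stated.
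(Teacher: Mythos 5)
Your proposal is correct and follows essentially the same route as the paper: it chains the Bernstein bound of $\cE_{\text{future}}$ (\cref{apdx_lem:concen_future}) with the variance-comparison event $\cE_{\text{var}}$ (\cref{apdx_lem:concen_var}) and then uses the optimism/pessimism sandwich of \cref{apdx_lem:onp_pointwise} to replace $V^*_{h+1}$ by $\ubar{V}_{t,h+1}$, with the constant bookkeeping ($4HL/N$ cross term plus $HL/N$ giving $5HL/N$) working out exactly as you describe. Your remark on the apparent circularity of the induction and its resolution via concentration against the fixed $\bV^*_{h+1}$ is also the correct reading of how the paper's lemmas depend on one another.
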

\begin{proof}
    Under $\cE$, we can obtain the lemma by applying \cref{apdx_lem:concen_future}, \cref{apdx_lem:concen_var}, and \cref{apdx_lem:onp_pointwise}
\end{proof}

\subsection{Optimism and Pessimism}
Let $L=\logL$. Let $\phi_{t}(s,a,h)=2\sqrt{\frac{\Var_{s'\sim \hat{\bp}_{t-1}(s,a,h)}\left(\bar{V}_{t,h+1}(s')\right)L}{N_{t-1}(s,a,h)}}+2\sqrt{\frac{\E_{s'\sim \hat{\bp}_{t-1}(s,a,h)}\left[\bar{V}_{t,h+1}(s')-\ubar{V}_{t,h+1}(s')\right]^2L}{N_{t-1}(s,a,h)}}+\frac{5HL}{N_{t-1}(s,a,h)}$.

\begin{lemma}\label{apdx_lem:onp_pointwise}
    If concentration event $\cE$ holds, then it holds that
    \begin{align}
        \ubar{V}_{t,h}(s)\le V^*_{h}(s)\le \bar{V}_{t,h}(s) 
    \end{align}
    for all $s\in\cS, h\in[H],t\in[T]$.
\end{lemma}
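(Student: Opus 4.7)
The plan is to prove the sandwich $\ubar{V}_{t,h}(s) \le V^*_h(s) \le \bar{V}_{t,h}(s)$ by backward induction on $h$, from $h=H+1$ down to $h=1$, working on the event $\cE$ throughout. The base case $h=H+1$ is immediate since $\bar{V}_{t,H+1}=\ubar{V}_{t,H+1}=\bV^*_{H+1}=\bzero$. The driving tool at each inductive step is Lemma~\ref{apdx_lem:concen_known_future}, which on $\cE$ gives
\[
\bigl|[\hat{\bp}_{t-1}(s,a,h)-\bp(s,a,h)]^\top \bV^*_{h+1}\bigr| \le \phi_t(s,a,h).
\]
Note that $\phi_t(s,a,h)$ is defined using $\bar{V}_{t,h+1}$ and $\ubar{V}_{t,h+1}$, which are already constructed when we process stage $h$, so the bound is usable.

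\textbf{Optimism step.} Fix $(s,a)$. I would show $Q_t(s,a,h) \ge Q^*_h(s,a)$, from which $\bar{V}_{t,h}(s)=\max_a Q_t(s,a,h) \ge V^*_h(s)$ follows by maximizing over $a$. Since $Q_t(s,a,h)=r(s,a,h)+\tilde{\bp}_t(s,a,h)^\top\bar{\bV}_{t,h+1}$ and $Q^*_h(s,a)=r(s,a,h)+\bp(s,a,h)^\top\bV^*_{h+1}$, it suffices to show $\tilde{\bp}_t(s,a,h)^\top\bar{\bV}_{t,h+1} \ge \bp(s,a,h)^\top\bV^*_{h+1}$. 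I would split according to the two branches of Algorithm~\ref{alg:v-opt}. In the \emph{else} branch, $\tilde{\bp}_t(s,a,h)^\top\bar{\bV}_{t,h+1}=\hat{\bp}_{t-1}(s,a,h)^\top\bar{\bV}_{t,h+1}+\phi_t(s,a,h)$, and using the inductive hypothesis $\bar{\bV}_{t,h+1}\ge \bV^*_{h+1}$ together with the concentration bound gives the required inequality. In the \emph{if} branch, $\tilde{\bp}_t(s,a,h)=\be_{s^*}$ so $\tilde{\bp}_t(s,a,h)^\top\bar{\bV}_{t,h+1}=\max_{s'}\bar{V}_{t,h+1}(s')\ge \bp(s,a,h)^\top\bar{\bV}_{t,h+1}\ge \bp(s,a,h)^\top\bV^*_{h+1}$, where the last step again uses the inductive hypothesis.

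\textbf{Pessimism step.} For the lower bound, let $a=\pi_t(s,h)$. Because $V^*_h(s)\ge 0$, I only need to bound the non-trivial argument of the $\max$ in the definition of $\ubar{V}_{t,h}(s)$, namely show
\[
r(s,a,h)+\hat{\bp}_{t-1}(s,a,h)^\top\ubar{\bV}_{t,h+1}-\phi_t(s,a,h) \;\le\; V^*_h(s).
\]
Since $V^*_h(s)\ge Q^*_h(s,a)=r(s,a,h)+\bp(s,a,h)^\top\bV^*_{h+1}$ (as $V^*$ comes from the optimal policy, not $\pi_t$), it suffices to show $\hat{\bp}_{t-1}(s,a,h)^\top\ubar{\bV}_{t,h+1}-\phi_t(s,a,h) \le \bp(s,a,h)^\top\bV^*_{h+1}$. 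This follows by chaining the inductive hypothesis $\ubar{\bV}_{t,h+1}\le\bV^*_{h+1}$ with the other side of the concentration bound $\hat{\bp}_{t-1}(s,a,h)^\top\bV^*_{h+1}\le \bp(s,a,h)^\top\bV^*_{h+1}+\phi_t(s,a,h)$.

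\textbf{Main obstacle.} The only real subtlety is carefully verifying the optimism step in the \emph{if} branch, where the algorithm's choice $\tilde{\bp}_t=\be_{s^*}$ does not by itself enforce the bonus-based optimism; optimism there is inherited from the fact that a point mass on the argmax of $\bar{\bV}_{t,h+1}$ dominates any other distribution, combined with the inductive hypothesis. Aside from that, everything is a two-step chain: inductive hypothesis on values at step $h+1$, then concentration of $\bV^*_{h+1}$ under $\cE$.
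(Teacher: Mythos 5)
Your proof follows essentially the same route as the paper's: backward induction on $h$, with the optimistic direction handled by comparing $Q_t(s,a,h)$ to $Q^*_h(s,a)$ action by action and the pessimistic direction by comparing against $Q^*_h(s,\pi_t(s,h))\le V^*_h(s)$, each time chaining the inductive hypothesis at step $h+1$ with the concentration of $\hat{\bp}_{t-1}(s,a,h)^{\top}\bV^*_{h+1}$. Your treatment of the \emph{if} branch (point mass on $s^*$ dominates any distribution over $\bar{\bV}_{t,h+1}$, which dominates $\bV^*_{h+1}$ by the inductive hypothesis) and of the $\max\{\cdot,0\}$ truncation in the pessimistic value are both correct, and in fact the \emph{if}-branch argument is stated more cleanly than in the paper.

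There is one genuine logical wrinkle you should repair: you invoke \cref{apdx_lem:concen_known_future} as if the bound $\abs{(\hat{\bp}_{t-1}(s,a,h)-\bp(s,a,h))^{\top}\bV^*_{h+1}}\le\phi_t(s,a,h)$ followed from $\cE$ alone, but in the paper that lemma is itself proved \emph{using} the present lemma, so citing it as a black box is circular. The reason is that $\phi_t(s,a,h)$ is built from $\Var_{s'\sim\hat{\bp}_{t-1}}(\bar{V}_{t,h+1}(s'))$ and $\E_{s'\sim\hat{\bp}_{t-1}}[\bar{V}_{t,h+1}(s')-\ubar{V}_{t,h+1}(s')]^2$, and converting the Bernstein bound of $\cE_{\text{future}}$ (which involves $\Var_{s'\sim\bp}(V^*_{h+1}(s'))$) into these empirical, algorithm-computable quantities via $\cE_{\text{var}}$ requires replacing $\E_{\hat{\bp}_{t-1}}[\bar{V}_{t,h+1}-V^*_{h+1}]^2$ by $\E_{\hat{\bp}_{t-1}}[\bar{V}_{t,h+1}-\ubar{V}_{t,h+1}]^2$ — and that step needs exactly the sandwich $\ubar{V}_{t,h+1}\le V^*_{h+1}\le\bar{V}_{t,h+1}$. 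The fix is what the paper does: inline the derivation at each inductive step, using $\cE_{\text{future}}$, $\cE_{\text{var}}$, and the inductive hypothesis at step $h+1$ (which is available when you process step $h$). With that substitution your argument goes through unchanged.
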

\begin{proof}
    We prove this lemma by induction. Since it holds that $\ubar{V}_{t,H+1}(s)= V^*_{H+1}(s)= \bar{V}_{t,H+1}(s)=0$, so it suffices to prove that if $\ubar{V}_{t,h+1}(s)\le V^*_{h+1}(s)\le \bar{V}_{t,h+1}(s) $, then $\ubar{V}_{t,h}(s)\le V^*_{h}(s)\le \bar{V}_{t,h}(s)$.

    We first prove the optimistic part, i.e., $V^*_{h}(s)\le \bar{V}_{t,h}(s)$. If 
    \begin{align}
        r(s,\pi_t(s,h),h)+\hat{\bp}_{t-1}(s,\pi_t(s,h),h)^{\top} \bar{\bV}_{t,h+1} + \phi_t(s,\pi_t(s,h),h)\ge H-h, 
    \end{align}
     then we are done. If the above does not hold, we have the following,
     \begin{align}
         \bar{V}_{t,h}(s)&=r(s,\pi_t(s,h),h)+\hat{\bp}_{t-1}(s,\pi_t(s,h),h)^{\top} \bar{\bV}_{t,h+1} + \phi_t(s,\pi_t(s,h),h)\\
         & \overset{(a)}{\ge} r(s,\pi^*(s,h),h)+\hat{\bp}_{t-1}(s,\pi^*(s,h),h)^{\top} \bar{\bV}_{t,h+1} + \phi_t(s,\pi^*(s,h),h)\\
         & \overset{(b)}{\ge} r(s,\pi^*(s,h),h)+\hat{\bp}_{t-1}(s,\pi^*(s,h),h)^{\top} \bV^*_{h+1} + \phi_t(s,\pi^*(s,h),h)\\
         & \overset{(c)}{\ge} r(s,\pi^*(s,h),h)+\hat{\bp}_{t-1}(s,\pi^*(s,h),h)^{\top} \bV^*_{h+1} + 2\sqrt{\frac{\Var_{s'\sim \hat{\bp}_{t-1}(s,\pi^*(s,h),h)}\left(\bar{V}_{t,h+1}(s')\right)L}{N_{t-1}(s,\pi^*(s,h),h)}}\notag\\
         &+2\sqrt{\frac{\E_{s'\sim \hat{\bp}_{t-1}(s,\pi^*(s,h),h)}\left[\bar{V}_{t,h+1}(s')-V^*_{h+1}(s')\right]^2L}{N_{t-1}(s,\pi^*(s,h),h)}}+\frac{5HL}{N_{t-1}(s,\pi^*(s,h),h)}\\
         & \overset{(d)}{\ge} r(s,\pi^*(s,h),h)+\hat{\bp}_{t-1}(s,\pi^*(s,h),h)^{\top} \bV^*_{h+1} + 2\sqrt{\frac{\Var_{s'\sim \bp(s,\pi^*(s,h),h)}\left(V_{h+1}^*(s')\right)L}{N_{t-1}(s,\pi^*(s,h),h)}}+\frac{HL}{N_{t-1}(s,\pi^*(s,h),h)}\\
         &\overset{(e)}{\ge} r(s,\pi^*(s,h),h)+\bp(s,\pi^*(s,h),h)^{\top} \bV^*_{h+1}\\
         &=V_h^*(s),
     \end{align}
     where inequality (a) is due to taking the maximization over the actions in the optimistic MDP, inequality (b) is due to the inductive hypothesis $V^*_{h+1}(s)\le \bar{V}_{t,h+1}(s)$, inequality (c) is due to the inductive hypothesis $\ubar{V}_{t,h+1}(s)\le V^*_{h+1}(s)$, inequality (d) is due to \cref{apdx_lem:concen_var}, and inequality (e) is due to \cref{apdx_lem:concen_future}.

     Next we prove the pessimistic part. Let let $a=\pi_t(s,h)$. Similarly, if 
     \begin{align}
r(s,a,h)+\hat{\bp}_{t-1}(s,a,h)^{\top} \ubar{\bV}_{t,h+1} - \phi_t(s,a,h)\le 0, 
     \end{align}
     we are done. If the above inequality does not hold, we have
     \begin{align}
\ubar{V}_{t,h}(s)&=r(s,a,h)+\hat{\bp}_{t-1}(s,a,h)^{\top} \ubar{\bV}_{t,h+1} - \phi_t(s,a,h)\\
&\overset{(a)}\le r(s,a,h)+\hat{\bp}_{t-1}(s,a,h)^{\top} \bV_{h+1}^*-2\sqrt{\frac{\Var_{s'\sim \hat{\bp}_{t-1}(s,a,h)}\left(\bar{V}_{t,h+1}(s')\right)L}{N_{t-1}(s,a,h)}}\notag\\
&-2\sqrt{\frac{\E_{s'\sim \hat{\bp}_{t-1}(s,a,h)}\left[\bar{V}_{t,h+1}(s')-V^*_{h+1}(s')\right]^2L}{N_{t-1}(s,a,h)}}-\frac{5HL}{N_{t-1}(s,a,h)}\\
&\overset{(b)}\le r(s,a,h)+\hat{\bp}_{t-1}(s,a,h)^{\top} \bV^*_{h+1}-2\sqrt{\frac{\Var_{s'\sim \bp(s,a,h)}\left(V_{h+1}^*(s')\right)L}{N_{t-1}(s,a,h)}}-\frac{HL}{N_{t-1}(s,a,h)}\\
&\overset{(c)}{\le} r(s,a,h)+\bp(s,a,h)^{\top} \bV^*_{h+1}\\
&\le r(s,\pi^*(s,h),h)+\bp(s,\pi^*(s,h),h)^{\top} \bV^*_{h+1}\\
&=V_h^*(s),
     \end{align}
     where inequality (a) is due to the inductive hypothesis $V^*_{h+1}(s)\ge \ubar{V}_{t,h+1}(s)$, inequality (b) is due to \cref{apdx_lem:concen_var}, inequality (c) is due to \cref{apdx_lem:concen_future}.
\end{proof}

\begin{lemma}[Difference between optimism and pessimism]\label{apdx_lem:diff_onp}
If the concentration event $\cE$ holds, then we have 
\begin{align}
    \bar{V}_{t,h}(s)-\ubar{V}_{t,h}(s)\le \sum_{i=h}^H\E_{(s_i,a_i)\sim\pi_t}\left[\min\left\{\frac{20HL\sqrt{S}}{\sqrt{N_{t-1}(s_i,a_i,i)}},H\right\}\mid s_h=s,\pi_t\right]
\end{align}
    
\end{lemma}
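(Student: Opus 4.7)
The plan is to proceed by backward induction on $h$, from $h=H+1$ down to $h=1$, showing the single-step recursion
\begin{align*}
\bar{V}_{t,h}(s)-\ubar{V}_{t,h}(s)\le \min\!\left\{\frac{20HL\sqrt{S}}{\sqrt{N_{t-1}(s,\pi_t(s,h),h)}},H\right\}+\bp(s,\pi_t(s,h),h)^{\top}\!\left(\bar{\bV}_{t,h+1}-\ubar{\bV}_{t,h+1}\right),
\end{align*}
and then unrolling it $H-h+1$ times. The base case at $h=H+1$ is immediate because both $\bar{V}_{t,H+1}$ and $\ubar{V}_{t,H+1}$ are zero.

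For the inductive step, I would expand $\bar{V}_{t,h}(s)$ and $\ubar{V}_{t,h}(s)$ using the updates in \cref{alg:v-opt}. Writing $a=\pi_t(s,h)$, the construction of $\tilde{\bp}_t(s,a,h)$ guarantees $\tilde{\bp}_t(s,a,h)^{\top}\bar{\bV}_{t,h+1}\le \hat{\bp}_{t-1}(s,a,h)^{\top}\bar{\bV}_{t,h+1}+\phi_t(s,a,h)$ in both branches, while by definition $\ubar{V}_{t,h}(s)\ge r(s,a,h)+\hat{\bp}_{t-1}(s,a,h)^{\top}\ubar{\bV}_{t,h+1}-\phi_t(s,a,h)$ (the reward cancels when we subtract). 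Combining these yields
\begin{align*}
\bar{V}_{t,h}(s)-\ubar{V}_{t,h}(s)\le 2\phi_t(s,a,h)+\hat{\bp}_{t-1}(s,a,h)^{\top}\!\left(\bar{\bV}_{t,h+1}-\ubar{\bV}_{t,h+1}\right),
\end{align*}
together with the trivial bound $\bar{V}_{t,h}(s)-\ubar{V}_{t,h}(s)\le H$ coming from the clipping and the boundedness of value functions.

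The key remaining step is to convert this right-hand side into the form the lemma claims. First, I would bound $\phi_t(s,a,h)$ by a pure $\tilde{O}(HL/\sqrt{N_{t-1}(s,a,h)})$ term: since $\bar{V}_{t,h+1},\ubar{V}_{t,h+1}\in[0,H]$ on the event $\cE$ by \cref{apdx_lem:onp_pointwise}, each of the two square-root terms in $\phi_t$ is at most $2H\sqrt{L/N_{t-1}(s,a,h)}$, and the additive $5HL/N_{t-1}(s,a,h)$ is dominated by an $H/\sqrt{N_{t-1}(s,a,h)}$-scale quantity. Second, I would replace $\hat{\bp}_{t-1}$ with $\bp$ at a cost of $\|\hat{\bp}_{t-1}(s,a,h)-\bp(s,a,h)\|_1\cdot\|\bar{\bV}_{t,h+1}-\ubar{\bV}_{t,h+1}\|_\infty$; this is where the $\sqrt{S}$ enters, since $\cE_{\text{tran1}}$ gives $\|\hat{\bp}_{t-1}-\bp\|_1\le\sqrt{2SL/N_{t-1}(s,a,h)}$ and the $\ell_\infty$ factor contributes an $H$. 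Summing the resulting constants and checking that everything fits below $20HL\sqrt{S}/\sqrt{N_{t-1}(s,a,h)}$ is the one slightly fiddly step, but routine; taking the $\min$ with $H$ handles the regime when $N_{t-1}$ is tiny.

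Finally, I would iterate the recursion: applying it at step $h$ produces an expectation of $\bar{V}_{t,h+1}-\ubar{V}_{t,h+1}$ over the successor state drawn from $\bp(s,\pi_t(s,h),h)$, which by the inductive hypothesis is bounded by a sum of $\min\{\cdot,H\}$ terms from step $h+1$ to $H$, each already wrapped in an expectation over the true policy-induced distribution. Composing these expectations via the tower property produces the claimed $\E_{(s_i,a_i)\sim\pi_t}[\,\cdot\,\mid s_h=s,\pi_t]$ at every level $i\ge h$, completing the induction. The main obstacle I anticipate is purely bookkeeping: carefully handling the two clipping cases (at $H-h$ for $\bar V$ and at $0$ for $\ubar V$), which I would resolve by noting that in those corner cases the $\min\{\cdot,H\}$ bound already holds trivially, so the recursion goes through unchanged.
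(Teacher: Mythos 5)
Your proposal is correct and follows essentially the same route as the paper's proof: bound $\bar{V}_{t,h}(s)-\ubar{V}_{t,h}(s)$ by $2\phi_t(s,a,h)+\hat{\bp}_{t-1}(s,a,h)^{\top}(\bar{\bV}_{t,h+1}-\ubar{\bV}_{t,h+1})$, swap $\hat{\bp}_{t-1}$ for $\bp$ at a cost of $\norm{\hat{\bp}_{t-1}-\bp}_1\norm{\bar{\bV}_{t,h+1}-\ubar{\bV}_{t,h+1}}_\infty\le H\sqrt{2SL/N_{t-1}}$, consolidate everything under $20HL\sqrt{S}/\sqrt{N_{t-1}}$, and unroll the recursion to step $H$ using $\bar{V}-\ubar{V}\le H$ to justify the $\min\{\cdot,H\}$. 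The paper phrases the unrolling as "recursively apply the same operation" while you formalize it as backward induction, but the content is identical.
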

\begin{proof}
    Let $a=\pi_t(s,h)$ be the action chosen by our algorithm at $t$-th episode,
    \begin{align}
        &\bar{V}_{t,h}(s)\le r(s,a,h)+\hat{\bp}_{t-1}(s,a,h)^{\top} \bar{\bV}_{t,h+1} + \phi_t(s,a,h)\\
        &\ubar{V}_{t,h}(s)\ge r(s,a,h)+\hat{\bp}_{t-1}(s,a,h)^{\top} \ubar{\bV}_{t,h+1} - \phi_t(s,a,h)
    \end{align}
    Then,
    \begin{align}
        &\bar{V}_{t,h}(s)-\ubar{V}_{t,h}(s)\\
        &\le \hat{\bp}_{t-1}(s,a,h)^{\top}\left(\bar{\bV}_{t,h+1}-\ubar{\bV}_{t,h+1}\right) + 2\phi_t(s,a,h)\\
        &=\bp(s,a,h)^{\top}\left(\bar{\bV}_{t,h+1}-\ubar{\bV}_{t,h+1}\right) +(\hat{\bp}_{t-1}(s,a,h)-\bp(s,a,h))^{\top}\left(\bar{\bV}_{t,h+1}-\ubar{\bV}_{t,h+1}\right)\notag\\
        &+4\sqrt{\frac{\Var_{s'\sim \hat{\bp}_{t-1}(s,a,h)}\left(\bar{V}_{t,h+1}(s')\right)L}{N_{t-1}(s,a,h)}}+4\sqrt{\frac{\E_{s'\sim \hat{\bp}_{t-1}(s,a,h)}\left[\bar{V}_{t,h+1}(s')-\ubar{V}_{t,h+1}(s')\right]^2L}{N_{t-1}(s,a,h)}}+\frac{10HL}{N_{t-1}(s,a,h)}\\
        &\le \bp(s,a,h)^{\top}\left(\bar{\bV}_{t,h+1}-\ubar{\bV}_{t,h+1}\right) +\norm{\hat{\bp}_{t-1}(s,a,h)-\bp(s,a,h)}_1\norm{\bar{\bV}_{t,h+1}-\ubar{\bV}_{t,h+1}}_{\infty}\notag\\
        &+4\sqrt{\frac{H^2L}{N_{t-1}(s,a,h)}}+4\sqrt{\frac{H^2L}{N_{t-1}(s,a,h)}}+\frac{10HL}{N_{t-1}(s,a,h)}\\
        &\overset{(a)}{\le} \bp(s,a,h)^{\top}\left(\bar{\bV}_{t,h+1}-\ubar{\bV}_{t,h+1}\right) + H\sqrt{\frac{2SL}{N_{t-1}(s,a,h)}} +18HL\sqrt{\frac{1}{N_{t-1}(s,a,h)}}\\
        &\le \bp(s,a,h)^{\top}\left(\bar{\bV}_{t,h+1}-\ubar{\bV}_{t,h+1}\right) + \frac{20HL\sqrt{S}}{\sqrt{N_{t-1}(s,a,h)}}\\
         &= \E_{s'\sim \bp(s,a,h)}\left[\bar{V}_{t,h+1}(s')-\ubar{V}_{t,h+1}(s')\right] + \frac{20HL\sqrt{S}}{\sqrt{N_{t-1}(s,a,h)}}\\
        &\overset{(b)}\le \sum_{i=h}^H\E_{(s_i,a_i)\sim\pi_t}\left[\min\left\{\frac{20HL\sqrt{S}}{\sqrt{N_{t-1}(s_i,a_i,i)}},H\right\}\mid s_h=s,\pi_t\right]
    \end{align}
    where inequality (a) is due to \cref{apdx_lem:concen_tran}, and inequality (b) is due to $a=\pi_t(s,h)$ and recursively apply the same operation on $\bar{V}_{t,h+1}(s')-\ubar{V}_{t,h+1}(s')$ till step $H$ when coupled with the fact that $\bar{V}_{t,h}(s)-\ubar{V}_{t,h}(s)\le H$.
\end{proof}

\begin{lemma}[Cumulative difference between optimism and pessimism]\label{apdx_lem:cum_diff_onp}
If the concentration event $\cE$ holds, then 
\begin{align}
    \sum_{h,s,a}q_{s,a,h}^{\bp, \pi_t}\bp(s,a,h)^{\top}(\bar{\bV}_{t,h+1}-\ubar{\bV}_{t,h+1})^2\le\sum_{h,s,a}q_{s,a,h}^{\bp, \pi_t}\frac{400H^4L^2S}{N_{t-1}(s,a,h)}
\end{align}
    
\end{lemma}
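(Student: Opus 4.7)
The plan is to combine the per-state bound from \cref{apdx_lem:diff_onp} with a Jensen/Cauchy--Schwarz step and a sum-swap over the horizon. Set $f_h(s):=\bar V_{t,h}(s)-\ubar V_{t,h}(s)$ and $b(s,a,i):=\min\{20HL\sqrt{S}/\sqrt{N_{t-1}(s,a,i)},\,H\}$, so that \cref{apdx_lem:diff_onp} reads
\[
f_h(s)\le\sum_{i=h}^{H}\E_{(s_i,a_i)\sim\pi_t}\bigl[b(s_i,a_i,i)\mid s_h=s,\pi_t\bigr].
\]

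First I would rewrite the left-hand side via the occupancy-measure identity $\sum_{s,a}q^{\bp,\pi_t}_{s,a,h}\,p(s'|s,a,h)=q^{\bp,\pi_t}_{s',h+1}$ (total probability of visiting $s'$ at step $h+1$). Since $\bp(s,a,h)^\top(\bar{\bV}_{t,h+1}-\ubar{\bV}_{t,h+1})^2$ is the conditional expectation of $f_{h+1}(s')^2$ under $s'\sim\bp(s,a,h)$, after a reindexing $h\mapsto h+1$ and using $f_{H+1}\equiv 0$ the LHS equals $\sum_{h=2}^{H}\sum_s q^{\bp,\pi_t}_{s,h}f_h(s)^2$.

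Next I would square the bound on $f_h(s)$. Cauchy--Schwarz (or power-mean) gives $(\sum_{i=h}^{H}a_i)^2\le H\sum_{i=h}^{H}a_i^2$, and Jensen gives $(\E X)^2\le\E X^2$, so
\[
f_h(s)^2\le H\sum_{i=h}^{H}\E\bigl[b(s_i,a_i,i)^2\mid s_h=s,\pi_t\bigr].
\]
The key pointwise inequality is $b(s,a,i)^2\le 400H^2L^2S/N_{t-1}(s,a,i)$: when the $\min$ is attained by the first argument this is exactly its square, and when the $\min$ equals $H$ we have $N_{t-1}(s,a,i)<400L^2S$, whence $H^2\le 400H^2L^2S/N_{t-1}(s,a,i)$ as well.

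Finally, marginalising the conditional expectation against the occupancy $q^{\bp,\pi_t}_{s,h}$ reconstructs the unconditional occupancy at later steps, i.e.\ $\sum_s q^{\bp,\pi_t}_{s,h}\,\E[\,\cdot\mid s_h=s,\pi_t]=\sum_{s_i,a_i}q^{\bp,\pi_t}_{s_i,a_i,i}(\cdot)$ for $i\ge h$, yielding
\[
\sum_s q^{\bp,\pi_t}_{s,h}f_h(s)^2\le H\sum_{i=h}^{H}\sum_{s,a}q^{\bp,\pi_t}_{s,a,i}\,\frac{400H^2L^2S}{N_{t-1}(s,a,i)}.
\]
Summing over $h=2,\dots,H$ and swapping $\sum_{h=2}^{H}\sum_{i=h}^{H}=\sum_{i=2}^{H}(i-1)\le H\sum_{i=1}^{H}$ contributes the final factor of $H$ and produces the claimed $400H^4L^2S/N_{t-1}(s,a,h)$. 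The only real subtleties are the two-case bound for $b^2$ and the careful index shift $h\mapsto h+1$; everything else is a mechanical application of the tower property.
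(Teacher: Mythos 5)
Your proposal is correct and follows essentially the same route as the paper's proof: rewrite the left-hand side via the occupancy identity $\sum_{s,a}q^{\bp,\pi_t}_{s,a,h}p(s'|s,a,h)=q^{\bp,\pi_t}_{s',h+1}$, plug in \cref{apdx_lem:diff_onp}, apply Cauchy--Schwarz and Jensen to move the square inside, and use the tower property plus a sum swap over the horizon to collect the two extra factors of $H$. The only cosmetic difference is that you retain the $\min\{\cdot,H\}$ and handle it with a two-case argument (the paper simply drops the $\min$, which suffices since $\min\{x,H\}^2\le x^2$), and you track the index shift to start the sum at $h=2$ where the paper bounds $\sum_{h=1}^{H}\E[f_{h+1}^2]\le\sum_{h=1}^{H}\E[f_h^2]$ directly; both are valid.
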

\begin{proof}
    For any $t\in[T], h\in [H], s\in \cS$, let $w_{t,h}(s)$ denote the probability that state $s$ is visited at step $h$ in episode $t$.
    We can bound 
    \begin{align}
        &\sum_{h,s,a}q_{s,a,h}^{\bp, \pi_t}\bp(s,a,h)^{\top}(\bar{\bV}_{t,h+1}-\ubar{\bV}_{t,h+1})^2\\
        &=\sum_{h,s,a}q_{s,a,h}^{\bp, \pi_t}\sum_{s'}p(s'|s,a,h)(\bar{V}_{t,h+1}(s')-\ubar{V}_{t,h+1}(s'))^2\\
        &=\sum_{h,s',s,a}q_{s,a,h}^{\bp, \pi_t}p(s'|s,a,h)(\bar{V}_{t,h+1}(s')-\ubar{V}_{t,h+1}(s'))^2\\
        &=\sum_{h,s'}w_{t,h+1}(s')(\bar{V}_{t,h+1}(s')-\ubar{V}_{t,h+1}(s'))^2\\
        &=\sum_{h=1}^H\E_{s_{h+1}\sim \pi_t}(\bar{V}_{t,h+1}(s_{h+1})-\ubar{V}_{t,h+1}(s_{h+1}))^2\\
        &\le \sum_{h=1}^H\E_{s_{h}\sim \pi_t}(\bar{V}_{t,h}(s_h)-\ubar{V}_{t,h}(s_h))^2\\
        &\overset{(a)}{\le} \sum_{h=1}^H\E_{s_{h}\sim \pi_t}\bigg(\sum_{i=h}^H\E_{(s_i,a_i)\sim\pi_t}\left[\frac{20HL\sqrt{S}}{\sqrt{N_{t-1}(s_i,a_i,h)}}\mid s_h,\pi_t\right]\bigg)^2\\
        &\overset{(b)}{\le} H\sum_{h=1}^H\E_{s_{h}\sim \pi_t}\sum_{i=h}^H\bigg(\E_{(s_i,a_i)\sim\pi_t}\left[\frac{20HL\sqrt{S}}{\sqrt{N_{t-1}(s_i,a_i,h)}}\mid s_h,\pi_t\right]\bigg)^2\\
        &\overset{(c)}{\le} H\sum_{h=1}^H\E_{s_{h}\sim \pi_t}\sum_{i=h}^H\E_{(s_i,a_i)\sim\pi_t}\left[\frac{400H^2L^2S}{N_{t-1}(s_i,a_i,h)}\mid s_h,\pi_t\right]\\
        &\le H\sum_{h=1}^H\sum_{i=h}^H\E_{(s_i,a_i)\sim\pi_t}\left[\frac{400H^2L^2S}{N_{t-1}(s_i,a_i,h)}\right]\\
        &\le H^2\sum_{h=1}^H\E_{(s_h,a_h)\sim\pi_t}\left[\frac{400H^2L^2S}{N_{t-1}(s_h,a_h,h)}\right]\\
        &=\sum_{h,s,a}q_{s,a,h}^{\bp, \pi_t}\frac{400H^4L^2S}{N_{t-1}(s,a,h)}
    \end{align}
    where inequality (a) is due to \cref{apdx_lem:diff_onp}, inequality (b) is due to Cauchy-Schwarz inequality, inequality (c) is due to Jensen's inequality.
\end{proof}

\subsection{Variance Inequalities}

\begin{lemma}[Cumulative difference of the variance]\label{apdx_lem:cum_diff_var}
    If the concentration event $\cE$ holds, then it holds for all $t\in[T]$ that
    \begin{align}
        \sum_{h,s,a}q_{s,a,h}^{\bp, \pi_t}\sqrt{\frac{\Var_{s'\sim \bp(s,a,h)}\left(V_{h+1}^*(s')\right)L}{N_{t-1}(s,a,h)}}-\sum_{h,s,a}q_{s,a,h}^{\bp, \pi_t}\sqrt{\frac{\Var_{s'\sim \bp(s,a,h)}\left(V_{h+1}^{\pi_t}(s')\right)L}{N_{t-1}(s,a,h)}}\le\sqrt{H^2L}\sqrt{\sum_{h,s,a}\frac{q_{s,a,h}^{\bp, \pi_t}\cdot\Delta_{\pi_t}}{N_{t-1}(s,a,h)}}
    \end{align}
\end{lemma}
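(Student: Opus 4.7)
The plan is to handle the inequality in three stages: first, pass from the difference of square roots to the square root of the variance difference; second, control the variance difference pointwise by the pointwise value-function gap $V^*_{h+1}-V^{\pi_t}_{h+1}$ (which is nonnegative); third, apply Cauchy--Schwarz to split the sum into a standard $\sum q/N$ factor and a telescoping ``performance-difference'' factor that collapses to $H\Delta_{\pi_t}$.

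\textbf{Step 1 (scalar inequality).} For each $(s,a,h)$, I will use the elementary fact that if $a\ge b\ge 0$ then $\sqrt{a}-\sqrt{b}\le\sqrt{a-b}$, and if $a<b$ then $\sqrt{a}-\sqrt{b}\le 0\le\sqrt{|a-b|}$. Applying this termwise to $a=\Var_{s'\sim\bp(s,a,h)}(V^*_{h+1}(s'))L/N_{t-1}(s,a,h)$ and $b$ the analogous quantity with $V^{\pi_t}_{h+1}$ yields, for each term on the left-hand side, an upper bound of $\sqrt{L\,|\Var(V^*_{h+1})-\Var(V^{\pi_t}_{h+1})|/N_{t-1}(s,a,h)}$ (all variances with respect to $\bp(s,a,h)$).

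\textbf{Step 2 (variance difference to value gap).} Writing $f=V^*_{h+1}$, $g=V^{\pi_t}_{h+1}$, both in $[0,H]$ with $f\ge g$ pointwise (by optimality), I will expand
\[
\Var_P(f)-\Var_P(g)=\E_P[(f-g)(f+g)]-\E_P[f-g]\cdot\E_P[f+g].
\]
Since $f+g\le 2H$ and $\E_P[f-g]\cdot\E_P[f+g]\ge 0$, this is at most $2H\,\E_P[f-g]=2H\,\E_{s'\sim\bp(s,a,h)}[V^*_{h+1}(s')-V^{\pi_t}_{h+1}(s')]$. Combining with Step~1 bounds each term by $\sqrt{2HL\,\E_{s'\sim\bp(s,a,h)}[V^*_{h+1}(s')-V^{\pi_t}_{h+1}(s')]/N_{t-1}(s,a,h)}$.

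\textbf{Step 3 (Cauchy--Schwarz and telescoping).} I will then apply Cauchy--Schwarz in the form $\sum_i q_i\sqrt{a_ib_i}\le\sqrt{\sum_i q_ia_i}\sqrt{\sum_i q_ib_i}$ with $a_i=2HL\,\E_{s'\sim\bp(s,a,h)}[V^*_{h+1}(s')-V^{\pi_t}_{h+1}(s')]$ and $b_i=1/N_{t-1}(s,a,h)$. The second factor is $\sqrt{\sum_{h,s,a}q_{s,a,h}^{\bp,\pi_t}/N_{t-1}(s,a,h)}$, which (pulled into the RHS expression) accounts for the $\sum q/N$ part of the target bound. For the first factor, using the occupancy identity $\sum_{s,a}q_{s,a,h}^{\bp,\pi_t}p(s'|s,a,h)=w_{t,h+1}(s')$, the sum becomes $2HL\sum_{h=1}^H\E_{s_{h+1}\sim\pi_t}[V^*_{h+1}(s_{h+1})-V^{\pi_t}_{h+1}(s_{h+1})]$. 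The key observation is that $h\mapsto\E_{s_h\sim\pi_t}[V^*_h(s_h)-V^{\pi_t}_h(s_h)]$ is nonincreasing: using $V^*_h(s)\ge r(s,\pi_t(s,h),h)+\bp(s,\pi_t(s,h),h)^\top\bV^*_{h+1}$ and Bellman for $V^{\pi_t}$, the gap $\Delta_h(s)=V^*_h(s)-V^{\pi_t}_h(s)$ satisfies $\Delta_h(s)\ge\E_{s'\sim\bp(s,\pi_t(s,h),h)}[\Delta_{h+1}(s')]$, so taking expectation over $s_h\sim\pi_t$ gives the monotonicity, with initial value $\Delta_1(s_1)=\Delta_{\pi_t}$. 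Therefore every term is at most $\Delta_{\pi_t}$, the sum is at most $H\Delta_{\pi_t}$, and the first factor is at most $\sqrt{2H^2L\Delta_{\pi_t}}$. Pulling $\Delta_{\pi_t}$ inside the second square root yields the stated bound (up to an absolute constant, which I expect to be absorbed into the $\sqrt{H^2L}$ factor of the statement).

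\textbf{Main obstacle.} The only non-mechanical step is the ``performance-difference monotonicity'' argument in Step~3, which is what converts an otherwise $h$-indexed sum of value gaps into a single $H\Delta_{\pi_t}$; everything else (the scalar $\sqrt{}$ inequality, the variance-to-value-gap comparison, and Cauchy--Schwarz) is routine once Step~2's sign-argument (that $-\E_P[f-g]\E_P[f+g]\le 0$) is carried out correctly to avoid an extra factor of $2$.
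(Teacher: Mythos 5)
Your proof follows essentially the same route as the paper's: reduce the difference of standard deviations to the expected value gap $\E_{s'\sim\bp(s,a,h)}[V^*_{h+1}(s')-V^{\pi_t}_{h+1}(s')]$, apply Cauchy--Schwarz to split off the $\sum_{h,s,a} q_{s,a,h}^{\bp,\pi_t}/N_{t-1}(s,a,h)$ factor, and collapse the remaining sum to $H\Delta_{\pi_t}$ via the occupancy identity and the monotonicity in $h$ of the expected gap (the paper cites Lemma 17 of \citet{zanette2019tighter} for this last step; your direct proof of it is correct). The only divergence is in the first reduction: the paper uses $|\sqrt{\Var(f)}-\sqrt{\Var(g)}|\le\sqrt{\E[(f-g)^2]}$ followed by the pointwise bound $(f-g)^2\le H(f-g)$, whereas you use $\sqrt{a}-\sqrt{b}\le\sqrt{a-b}$ followed by $\Var(f)-\Var(g)\le 2H\,\E[f-g]$, which costs a factor of $\sqrt{2}$ --- so as written you establish the lemma with $\sqrt{2H^2L}$ in place of $\sqrt{H^2L}$, a harmless constant for the downstream $O(\cdot)$ bounds but not literally the stated inequality.
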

\begin{proof}
For any $t\in[T], h\in [H], s\in \cS$, let $w_{t,h}(s)$ denote the probability that state $s$ is visited at step $h$ in episode $t$.
    \begin{align}
        &\sum_{h,s,a}q_{s,a,h}^{\bp, \pi_t}\sqrt{\frac{\Var_{s'\sim \bp(s,a,h)}\left(V_{h+1}^*(s')\right)L}{N_{t-1}(s,a,h)}}-\sum_{h,s,a}q_{s,a,h}^{\bp, \pi_t}\sqrt{\frac{\Var_{s'\sim \bp(s,a,h)}\left(V_{h+1}^{\pi_t}(s')\right)L}{N_{t-1}(s,a,h)}}\\
        &\overset{(a)}{\le}\sqrt{L}\sum_{h,s,a}q_{s,a,h}^{\bp, \pi_t}\sqrt{\frac{\E_{s'\sim \bp(s,a,h)}\left[\left(V_{h+1}^*(s')-V_{h+1}^{\pi_t}(s')\right)^2\right]}{N_{t-1}(s,a,h)}}\\
        &\le \sqrt{L}\sqrt{\sum_{h,s,a}\frac{q_{s,a,h}^{\bp, \pi_t}}{N_{t-1}(s,a,h)}}\sqrt{\sum_{h,s,a}q_{s,a,h}^{\bp, \pi_t}\E_{s'\sim \bp(s,a,h)}\left[\left(V_{h+1}^*(s')-V_{h+1}^{\pi_t}(s')\right)^2\right]}\\
        &=\sqrt{L}\sqrt{\sum_{h,s,a}\frac{q_{s,a,h}^{\bp, \pi_t}}{N_{t-1}(s,a,h)}}\sqrt{\sum_{h,s,a}q_{s,a,h}^{\bp, \pi_t}\sum_{s'}p(s'|s,a,h)\left(V_{h+1}^*(s')-V_{h+1}^{\pi_t}(s')\right)^2}\\
        &\le\sqrt{HL}\sqrt{\sum_{h,s,a}\frac{q_{s,a,h}^{\bp, \pi_t}}{N_{t-1}(s,a,h)}}\sqrt{\sum_{h,s,a}q_{s,a,h}^{\bp, \pi_t}\sum_{s'}p(s'|s,a,h)\left(V_{h+1}^*(s')-V_{h+1}^{\pi_t}(s')\right)}\\
        &=\sqrt{HL}\sqrt{\sum_{h,s,a}\frac{q_{s,a,h}^{\bp, \pi_t}}{N_{t-1}(s,a,h)}}\sqrt{\sum_{h,s'}w_{t,h+1}(s')\left(V_{h+1}^*(s')-V_{h+1}^{\pi_t}(s')\right)}\\
        &\overset{(b)}{\le} \sqrt{HL}\sqrt{\sum_{h,s,a}\frac{q_{s,a,h}^{\bp, \pi_t}}{N_{t-1}(s,a,h)}}\sqrt{\sum_{h}\left(V_{1}^*(s_1)-V_{1}^{\pi_t}(s_1)\right)}\\
        &=\sqrt{H^2L}\sqrt{\sum_{h,s,a}\frac{q_{s,a,h}^{\bp, \pi_t}}{N_{t-1}(s,a,h)}}\sqrt{\left(V_{1}^*(s_1)-V_{1}^{\pi_t}(s_1)\right)}\\
        &=\sqrt{H^2L}\sqrt{\sum_{h,s,a}\frac{q_{s,a,h}^{\bp, \pi_t}\cdot\Delta_{\pi_t}}{N_{t-1}(s,a,h)}}
    \end{align}
    where inequality (a) is due to Eq. (48)-(52) in \cite{zanette2019tighter} and inequality (b) is due to Lemma 17 in \cite{zanette2019tighter}
\end{proof}

\begin{lemma}[Law of total variance]\label{apdx_lem:ltv}
For any policy $\pi$, then it holds that
\begin{align}
\sum_{h,s,a}q^{\bp,\pi}_{s,a,h}\Var_{s'\sim\bp(s,a,h)}\left(V_{h+1}^{\pi}(s')\right)\le H^2
\end{align}
\end{lemma}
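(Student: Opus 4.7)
The plan is to interpret the left-hand side as the conditional variance of the cumulative trajectory return and then bound that variance by the squared range of the return. Let $(s_1, a_1, s_2, a_2, \ldots, s_H, a_H)$ denote the random trajectory obtained by running $\pi$ in the true MDP from the fixed initial state $s_1$, and define the step-$h$ return $R_h := \sum_{i=h}^{H} r(s_i, a_i, i)$. Since $a_h = \pi(s_h, h)$ and the (expected) rewards $r(s_h, a_h, h)$ are treated as deterministic numbers (the variance inside the lemma is taken only over transitions), one has $V_h^{\pi}(s) = \E[R_h \mid s_h = s]$, and the conditional randomness of $R_h$ given $s_h$ comes entirely from subsequent transitions.

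Next, I would apply the law of total variance recursively. Conditioning on $s_h$ removes the $r(s_h, a_h, h)$ term, giving $\Var(R_h \mid s_h) = \Var(R_{h+1} \mid s_h)$. Applying LTV with respect to $s_{h+1}$ and using the Markov property $\E[R_{h+1} \mid s_{h+1}, s_h] = V_{h+1}^{\pi}(s_{h+1})$ yields
\begin{align*}
\Var(R_{h+1} \mid s_h) \;=\; \E\!\left[\Var(R_{h+1} \mid s_{h+1}) \,\middle|\, s_h\right] \;+\; \Var_{s' \sim \bp(s_h, a_h, h)}\!\left(V_{h+1}^{\pi}(s')\right).
\end{align*}
Letting $g(s, h) := \Var(R_h \mid s_h = s)$ and unrolling this recursion from $h = 1$ down to $h = H+1$ (with $R_{H+1} \equiv 0$, hence $g(\cdot, H+1) = 0$) gives
\begin{align*}
g(s_1, 1) \;=\; \sum_{h=1}^{H} \E\!\left[\Var_{s' \sim \bp(s_h, a_h, h)}\!\left(V_{h+1}^{\pi}(s')\right) \,\middle|\, s_1, \pi\right] \;=\; \sum_{h, s, a} q_{s, a, h}^{\bp, \pi} \,\Var_{s' \sim \bp(s, a, h)}\!\left(V_{h+1}^{\pi}(s')\right),
\end{align*}
where the last equality rewrites the expectation over $(s_h, a_h) \sim \pi$ using the definition of the occupancy measure.

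Finally, because each single-step reward lies in $[0, 1]$ and there are $H$ steps, $R_1 \in [0, H]$ almost surely, so $\Var(R_1 \mid s_1) \le \E[R_1^2 \mid s_1] \le H^2$. Combining this with the identity above establishes the claim. The only step that requires any care is the LTV unrolling, which relies on $\pi$ being deterministic (so $a_h$ contributes no extra variance beyond $s_h$) and on treating the expected rewards as deterministic constants; both hold in the setup of \cref{sec:RL}. This is the standard ``law of total variance for MDPs'' argument, so I anticipate no real obstacle beyond bookkeeping.
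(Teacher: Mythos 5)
Your proof is correct and follows essentially the same route as the paper: both identify the left-hand side with the conditional variance of the total return via the law of total variance and then bound that variance by $H^2$ using $R_1\in[0,H]$. The only difference is that you derive the key identity (sum of per-step conditional variances equals the variance of the return) by unrolling the recursion yourself, whereas the paper cites Lemma 15 of \citet{zanette2019tighter} for it.
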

\begin{proof}
Let $E_{\pi}[\cdot|s_1]$ is taken over the trajectories following policy $\pi$ starting from state $s_1$.
    \begin{align}
    &\sum_{h,s,a}q^{\bp,\pi}_{s,a,h}\Var_{s'\sim\bp(s,a,h)}\left(V_{h+1}^{\pi}(s')\right)\\
    &=\sum_{h=1}^H\E_{\pi}\left[\Var_{\pi}\left(V_{h+1}^{\pi}(s_{h+1})|s_h\right)|s_1\right]\\
    &=\E_{\pi}\left[\sum_{h=1}^H\Var_{\pi}\left(V_{h+1}^{\pi}(s_{h+1})|s_h\right)|s_1\right]\\
    &\overset{(a)}{=}\E_{\pi}\left[\left(\sum_{h=1}^Hr(s_h,\pi(s_h,h),h)-V_1^{\pi}(s_1)\right)^2|s_1\right]\\
    &\le H^2,
    \end{align}
    where (a) is due to Lemma 15 in \cite{zanette2019tighter}.
\end{proof}

\subsection{Proof of Theorem \ref{thm:reg_RL_tight}}
Now we prove the regret upper bound for \cref{thm:reg_RL_tight}. The proof of \cref{thm:reg_RL_tight} is slightly different from that of \cref{thm:reg_CMAB-MT} but follows the similar analysis ideas/steps and we show the details as follows.

We suppose the concentration event $\cE$ holds. According to our joint oracle \cref{alg:v-opt}, we have $\bar{\bV}_{t,h}=\bV_{t,h}^{\tilde{\bp}_t, \pi_t}$, where $(\pi_t, \tilde{\bp}_t)=\argmax_{\pi\in \Pi, \tilde{\bp}\in \cC_t(\pi)}V_1^{\tilde{\bp},\pi}(s_1)$. Let $L=\logL$ and $\phi_{t}(s,a,h)=2\sqrt{\frac{\Var_{s'\sim \hat{\bp}_{t-1}(s,a,h)}\left(\bar{V}_{t,h+1}(s')\right)L}{N_{t-1}(s,a,h)}}+2\sqrt{\frac{\E_{s'\sim \hat{\bp}_{t-1}(s,a,h)}\left[\bar{V}_{t,h+1}(s')-\ubar{V}_{t,h+1}(s')\right]^2L}{N_{t-1}(s,a,h)}}+\frac{5HL}{N_{t-1}(s,a,h)}$.

\textbf{Step 1: Regret decomposition.}
We use a similar argument for the regret decomposition (Step 1) in \cref{apdx_sec:proof_of_CMAB-MT}.
\begin{align}
    \Delta_{\pi_t}&\overset{\df}{=} \left(V^{\bp,\pi^*}_1(s_{1})-V^{\bp,\pi_t}_1(s_{1})\right)\\
    & \overset{(a)}{\le} 
    \left(V_{t,1}^{\tilde{\bp}_{t},\pi_t}(s_{1})- V_1^{\bp,\pi_t}(s_{1})\right)\\
    & \overset{(b)}{\le} \sum_{h\in[H],s\in \cS, a\in \cA}q_{s,a,h}^{\bp, \pi_t}\abs{(\tilde{\bp}_t(s,a,h)-\bp(s,a,h))^{\top}\bV_{t,h+1}^{\tilde{\bp}_t,\pi_t}}\\
    &\overset{(c)}=\underbrace{\sum_{h,s,a}q_{s,a,h}^{\bp, \pi_t}\abs{\left(\tilde{\bp}_t(s,a,h)-\hat{\bp}_{t-1}(s,a,h)\right)^{\top}\bV_{t,h+1}^{\tilde{\bp}_t,\pi_t}}}_{\text{Term (\rom{1}): Optimistic Future Value Regret}}\\
    &+\underbrace{\sum_{h,s,a}q_{s,a,h}^{\bp, \pi_t}\abs{\left(\hat{\bp}_{t-1}(s,a,h)-\bp(s,a,h)\right)^{\top}\bV_{h+1}^{*}}}_{\text{Term (\rom{2}): Optimal Future Value Regret}}\\
    &+\underbrace{\sum_{h,s,a}q_{s,a,h}^{\bp, \pi_t}\abs{\left(\hat{\bp}_{t-1}(s,a,h)-\bp(s,a,h)\right)^{\top}\left(\bV_{t,h+1}^{\tilde{\bp}_t,\pi_t}-\bV_{h+1}^{*}\right)}}_{\text{Term (\rom{3}): Lower-Order Regret}}.
\end{align}
where inequality (a)-(c) following the same reasoning of (a)-(c) in the Step 1 of \cref{apdx_sec:proof_of_CMAB-MT}. In what follows identify the $F_{t,s,a,h},G_{t,s,a,h},I_{t,s,a,h},J_{t,s,a,h}$ and their upper bounds $\bar{F},\bar{G},\bar{I},\bar{J}$ as in \cref{thm:reg_CMAB-MT}.


\textbf{Step 2: Bound the optimistic future value regret - Term (\rom{1})}

First, we can identify $F_{t,s,a,h}=2\sqrt{{\Var_{s'\sim \hat{\bp}_{t-1}(s,a,h)}\left(\bar{V}_{t,h+1}(s')\right)L}}+2\sqrt{{\E_{s'\sim \hat{\bp}_{t-1}(s,a,h)}\left[\bar{V}_{t,h+1}(s')-\ubar{V}_{t,h+1}(s')\right]^2L}}$, $I_{t,s,a,h}={5HL}$ from \cref{apdx_eq:explain_F_I}. Slightly different from inequality (f) in Step 1 of \cref{apdx_sec:proof_of_CMAB-MT}, $\sum_{s,a,h}q_{s,a,h}^{\bp,\pi}F_{t,s,a,h}^2$ will produce some lower-order terms $F'=O(H^4SL^3\sum_{s,a,h}{\frac{q_{s,a,h}^{\bp,\pi_t}}{N_{t-1}(s,a,h)}})$ so that $\sqrt{F'}\sqrt{\sum_{s,a,h}{\frac{q_{s,a,h}^{\bp,\pi_t}}{N_{t-1}(s,a,h)}}}$ can be merged into $\bar{I}$. 
Concretely, we have
\begin{align}
    \text{Term (\rom{1})}&=\sum_{h,s,a}q_{s,a,h}^{\bp, \pi_t}\abs{\left(\tilde{\bp}_t(s,a,h)-\hat{\bp}_{t-1}(s,a,h)\right)^{\top}\bV_{t,h+1}^{\tilde{\bp}_t,\pi_t}}\\
    &\overset{(a)}{\le}\sum_{h,s,a}q_{s,a,h}^{\bp, \pi_t}\cdot\phi_t(s,a,h)\\
    &\overset{(b)}{\le}\sum_{h,s,a}q_{s,a,h}^{\bp, \pi_t}\bigg(2\sqrt{\frac{\Var_{s'\sim \bp(s,a,h)}\left(V_{h+1}^*(s')\right)L}{N_{t-1}(s,a,h)}}+4\sqrt{\frac{\E_{s'\sim \hat{\bp}_{t-1}(s,a,h)}\left[\bar{V}_{t,h+1}(s')-\ubar{V}_{t,h+1}(s')\right]^2L}{N_{t-1}(s,a,h)}}
     \notag\\
     &+\frac{9HL}{N_{t-1}(s,a,h)}\bigg)\label{apdx_eq:explain_F_I}\\
     &\overset{(c)}{\le}2\sum_{h,s,a}q_{s,a,h}^{\bp, \pi_t}\sqrt{\frac{\Var_{s'\sim \bp(s,a,h)}\left(V_{h+1}^{\pi_t}(s')\right)L}{N_{t-1}(s,a,h)}}+2\sqrt{H^2L}\sqrt{\sum_{h,s,a}\frac{q_{s,a,h}^{\bp, \pi_t}\cdot\Delta_{\pi_t}}{N_{t-1}(s,a,h)}} +9HL\sum_{h,s,a}\frac{q_{s,a,h}^{\bp, \pi_t}}{N_{t-1}(s,a,h)}\notag\\
     &+4\sum_{h,s,a}q_{s,a,h}^{\bp, \pi_t}\sqrt{\frac{\E_{s'\sim \hat{\bp}_{t-1}(s,a,h)}\left[\bar{V}_{t,h+1}(s')-\ubar{V}_{t,h+1}(s')\right]^2L}{N_{t-1}(s,a,h)}}\\
     &{\le}2\sqrt{L}\sqrt{\sum_{h,s,a}q^{\bp,\pi_t}_{s,a,h}\Var_{s'\sim\bp(s,a,h)}\left(V_{h+1}^{\pi_t}(s')\right)}\sqrt{\sum_{h,s,a}\frac{q_{s,a,h}^{\bp, \pi_t}}{N_{t-1}(s,a,h)}} +2\sqrt{H^2L}\sqrt{\sum_{h,s,a}\frac{q_{s,a,h}^{\bp, \pi_t}\cdot\Delta_{\pi_t}}{N_{t-1}(s,a,h)}}\notag\\
     &+9HL\sum_{h,s,a}\frac{q_{s,a,h}^{\bp, \pi_t}}{N_{t-1}(s,a,h)}
     +4\underbrace{\sum_{h,s,a}q_{s,a,h}^{\bp, \pi_t}\sqrt{\frac{\E_{s'\sim \hat{\bp}_{t-1}(s,a,h)}\left[\bar{V}_{t,h+1}(s')-\ubar{V}_{t,h+1}(s')\right]^2L}{N_{t-1}(s,a,h)}}}_{\text{Term (\rom{1}.a)}}\\
     &\overset{(d)}{\le}2\sqrt{H^2L}\sqrt{\sum_{h,s,a}\frac{q_{s,a,h}^{\bp, \pi_t}}{N_{t-1}(s,a,h)}} +2\sqrt{H^2L}\sqrt{\sum_{h,s,a}\frac{q_{s,a,h}^{\bp, \pi_t}\cdot\Delta_{\pi_t}}{N_{t-1}(s,a,h)}}\notag\\
     &+9HL\sum_{h,s,a}\frac{q_{s,a,h}^{\bp, \pi_t}}{N_{t-1}(s,a,h)}
     +100 \sqrt{H^4SL^3}\sum_{h,s,a}\frac{q_{s,a,h}^{\bp, \pi_t}}{N_{t-1}(s,a,h)}\label{apdx_eq:term_1}
\end{align}
where inequality (a) is due to the definition of confidence region function $\cC_t(\pi)$ in \cref{eq:RL_conf_set_tight}, inequality (b) is due to \cref{apdx_lem:concen_var}, inequality (c) is due to \cref{apdx_lem:cum_diff_var}, inequality (d) is due to \cref{apdx_lem:ltv} and the Term (\rom{1}.a) bounded as follows.
Before we prove the Term (\rom{1}.a), we can see from \cref{apdx_eq:term_1} (and compared with \cref{apdx_eq:CMAB-MT_analysis_f}) that we equivalently have $\bar{F}=4H^2L$ and the additionally produced second, third, and fourth term in \cref{apdx_eq:term_1} can be merged together as the $\bar{I}$ term.
For Term (\rom{1}.a) we have,
\begin{align}
     \text{Term (\rom{1.a})}&=\sum_{h,s,a}q_{s,a,h}^{\bp, \pi_t}\sqrt{\frac{\E_{s'\sim \hat{\bp}_{t-1}(s,a,h)}\left[\bar{V}_{t,h+1}(s')-\ubar{V}_{t,h+1}(s')\right]^2L}{N_{t-1}(s,a,h)}}\\
     &\le \sqrt{L}\sqrt{\sum_{h,s,a}\frac{q_{s,a,h}^{\bp, \pi_t}}{N_{t-1}(s,a,h)}}\sqrt{\sum_{h,s,a}q_{s,a,h}^{\bp, \pi_t}\hat{\bp}_{t-1}(s,a,h)^{\top}(\bar{\bV}_{t,h+1}-\ubar{\bV}_{t,h+1})^2}\\
     &\le\sqrt{L}\sqrt{\sum_{h,s,a}\frac{q_{s,a,h}^{\bp, \pi_t}}{N_{t-1}(s,a,h)}}\bigg(\sqrt{\sum_{h,s,a}q_{s,a,h}^{\bp, \pi_t}\bp(s,a,h)^{\top}(\bar{\bV}_{t,h+1}-\ubar{\bV}_{t,h+1})^2}\notag\\
     &+\sqrt{\sum_{h,s,a}q_{s,a,h}^{\bp, \pi_t}\abs{\bp(s,a,h)-\hat{\bp}_{t-1}(s,a,h)}^{\top}(\bar{\bV}_{t,h+1}-\ubar{\bV}_{t,h+1})^2}\bigg)\\
     &\le\sqrt{L}\sqrt{\sum_{h,s,a}\frac{q_{s,a,h}^{\bp, \pi_t}}{N_{t-1}(s,a,h)}}\bigg(\sqrt{\sum_{h,s,a}q_{s,a,h}^{\bp, \pi_t}\bp(s,a,h)^{\top}(\bar{\bV}_{t,h+1}-\ubar{\bV}_{t,h+1})^2}\notag\\
     &+\sqrt{H}\sqrt{\sum_{h,s,a}q_{s,a,h}^{\bp, \pi_t}\abs{\bp(s,a,h)-\hat{\bp}_{t-1}(s,a,h)}^{\top}(\bar{\bV}_{t,h+1}-\ubar{\bV}_{t,h+1})}\bigg)\\
     &\overset{(a)}\le \sqrt{L}\sqrt{\sum_{h,s,a}\frac{q_{s,a,h}^{\bp, \pi_t}}{N_{t-1}(s,a,h)}}\bigg(\sqrt{\sum_{h,s,a}q_{s,a,h}^{\bp, \pi_t}\frac{400H^4L^2S}{N_{t-1}(s,a,h)}}+\sqrt{H}\sqrt{21\sqrt{H^4S^2L^3}\sum_{h,s,a}\frac{q_{s,a,h}^{\bp, \pi_t}}{N_{t-1}(s,a,h)}}\bigg)\\
     &\le20\sqrt{H^4SL^3}\sum_{h,s,a}\frac{q_{s,a,h}^{\bp, \pi_t}}{N_{t-1}(s,a,h)}+5\sqrt{H^3SL^{2.5}}\sum_{h,s,a}\frac{q_{s,a,h}^{\bp, \pi_t}}{N_{t-1}(s,a,h)}\\
     &\le 25 \sqrt{H^4SL^3}\sum_{h,s,a}\frac{q_{s,a,h}^{\bp, \pi_t}}{N_{t-1}(s,a,h)},
\end{align}
where inequality (a) is due to \cref{apdx_lem:cum_diff_onp} and the Term (\rom{3}) bounded by \cref{apdx_eq:lower_order}. Intuitively, term 

\textbf{Step 3: Bound the optimal future value regret - Term (\rom{2})}

Step 3 can be proved using Step 2 above since $\left[\left(\hat{\bp}_{t-1}(s,a,h)-\bp(s,a,h)\right)^{\top}\bV_{h+1}^{*}\right]\le \phi_t(s,a,h)$ due to \cref{apdx_lem:concen_known_future}. But we can have a tighter bound as follows:

\begin{align}
    \text{Term (\rom{2})}&=\sum_{h,s,a}q_{s,a,h}^{\bp, \pi_t}\abs{\left(\hat{\bp}_{t-1}(s,a,h)-\bp(s,a,h)\right)^{\top}\bV_{h+1}^{*}}\\
    &\overset{(a)}{\le} 2\sum_{h,s,a}q_{s,a,h}^{\bp, \pi_t}\sqrt{\frac{\Var_{s'\sim \bp(s,a,h)}\left(V_{h+1}^*(s')\right)L}{N_{t-1}(s,a,h)}}+\sum_{h,s,a}q_{s,a,h}^{\bp, \pi_t}\frac{HL}{N_{t-1}(s,a,h)}\\
    &\overset{(b)}{\le}2\sum_{h,s,a}q_{s,a,h}^{\bp, \pi_t}\sqrt{\frac{\Var_{s'\sim \bp(s,a,h)}\left(V_{h+1}^{\pi_t}(s')\right)L}{N_{t-1}(s,a,h)}}+2\sqrt{H^2L}\sqrt{\sum_{h,s,a}\frac{q_{s,a,h}^{\bp, \pi_t}\cdot\Delta_{\pi_t}}{N_{t-1}(s,a,h)}} +HL\sum_{h,s,a}\frac{q_{s,a,h}^{\bp, \pi_t}}{N_{t-1}(s,a,h)}\\
    &{\le}2\sqrt{L}\sqrt{\sum_{h,s,a}q^{\bp,\pi_t}_{s,a,h}\Var_{s'\sim\bp(s,a,h)}\left(V_{h+1}^{\pi_t}(s')\right)}\sqrt{\sum_{h,s,a}\frac{q_{s,a,h}^{\bp, \pi_t}}{N_{t-1}(s,a,h)}} +2\sqrt{H^2L}\sqrt{\sum_{h,s,a}\frac{q_{s,a,h}^{\bp, \pi_t}\cdot\Delta_{\pi_t}}{N_{t-1}(s,a,h)}}\notag\\
    &+HL\sum_{h,s,a}\frac{q_{s,a,h}^{\bp, \pi_t}}{N_{t-1}(s,a,h)}\\
    &\overset{(c)}{\le}2\sqrt{H^2L}\sqrt{\sum_{h,s,a}\frac{q_{s,a,h}^{\bp, \pi_t}}{N_{t-1}(s,a,h)}}+2\sqrt{H^2L}\sqrt{\sum_{h,s,a}\frac{q_{s,a,h}^{\bp, \pi_t}\cdot\Delta_{\pi_t}}{N_{t-1}(s,a,h)}}+HL\sum_{h,s,a}\frac{q_{s,a,h}^{\bp, \pi_t}}{N_{t-1}(s,a,h)}\label{apdx_eq:term_2}
\end{align}
where inequality (a) is due to \cref{apdx_lem:concen_future}, inequality (b) is due to the \cref{apdx_lem:cum_diff_var}, and inequality (c) is due to \cref{apdx_lem:ltv}.

\textbf{Step 4: Bound the lower-order regret - Term (\rom{3})}

For Term (\rom{3}), we can identify $G_{t,s,a,h}=\sum_{s'} \bigg(\sqrt{{p(s'|s,a,h)(1-p(s'|s,a,h))L}}\bigg)\left(\bar{V}_{t,h+1}(s')-V_{h+1}^{*}(s')\right)$, $J_{t,s,a,h}=HSL$ as in \cref{apdx_eq:explain_G_J}. Then we can show that $\sum_{s,a,h}q_{s,a,h}^{\bp,\pi_t}G_{t,s,a,h}^2$ will produce lower order terms  $G'=O(H^4S^2L^3\sum_{s,a,h}{\frac{q_{s,a,h}^{\bp,\pi_t}}{N_{t-1}(s,a,h)}})$ so that $\sqrt{G'}\sqrt{\sum_{s,a,h}{\frac{q_{s,a,h}^{\bp,\pi_t}}{N_{t-1}(s,a,h)}}}$ can be merged into $\bar{J}$. Therefore it is equivalent to have $\bar{G}=0,\bar{J}=21\sqrt{H^4S^2L^3}$ as follows. Concretely, we have

\begin{align}
    \text{Term (\rom{3})}&=\sum_{h,s,a}q_{s,a,h}^{\bp, \pi_t}\abs{\left(\hat{\bp}_{t-1}(s,a,h)-\bp(s,a,h)\right)^{\top}\left(\bV_{t,h+1}^{\tilde{\bp}_t,\pi_t}-\bV_{h+1}^{*}\right)}\\
    &= \sum_{h,s,a}q_{s,a,h}^{\bp, \pi_t}\abs{\left(\hat{\bp}_{t-1}(s,a,h)-\bp(s,a,h)\right)^{\top}\left(\bar{\bV}_{t,h+1}-\bV_{h+1}^{*}\right)}\\
    &\le\sum_{h,s,a}q_{s,a,h}^{\bp, \pi_t}\sum_{s'}\abs{\hat{p}_{t-1}(s'|s,a,h)-p(s'|s,a,h)}\left(\bar{V}_{t,h+1}(s')-V_{h+1}^{*}(s')\right)\\
    &\overset{(a)}{\le}\sum_{h,s,a}q_{s,a,h}^{\bp, \pi_t}\sum_{s'} \bigg(\sqrt{\frac{p(s'|s,a,h)(1-p(s'|s,a,h))L}{N_{t-1}(s,a,h)}}+\frac{L}{N_{t-1}(s,a,h)}\bigg)\left(\bar{V}_{t,h+1}(s')-V_{h+1}^{*}(s')\right)\\
    &\overset{(b)}{\le} \sum_{h,s,a}q_{s,a,h}^{\bp, \pi_t}\sum_{s'} \sqrt{\frac{p(s'|s,a,h)(1-p(s'|s,a,h))L}{N_{t-1}(s,a,h)}}\left(\bar{V}_{t,h+1}(s')-V_{h+1}^{*}(s')\right) +\sum_{h,s,a}q_{s,a,h}^{\bp, \pi_t}\frac{HSL}{N_{t-1}(s,a,h)}\label{apdx_eq:explain_G_J}\\
    &\overset{(c)}{\le} \sqrt{SL}\sum_{h,s,a}q_{s,a,h}^{\bp, \pi_t} \sqrt{\frac{\sum_{s'}p(s'|s,a,h)\left(\bar{V}_{t,h+1}(s')-V_{h+1}^{*}(s')\right)^2}{N_{t-1}(s,a,h)}} +\sum_{h,s,a}q_{s,a,h}^{\bp, \pi_t}\frac{HSL}{N_{t-1}(s,a,h)}\\
    &\le \sqrt{SL}\sqrt{\sum_{h,s,a}\frac{q_{s,a,h}^{\bp, \pi_t}}{N_{t-1}(s,a,h)}}\sqrt{\sum_{h,s,a}q_{s,a,h}^{\bp, \pi_t} \bp(s,a,h)^{\top}\left(\bar{\bV}_{t,h+1}-\bV_{h+1}^{*}\right)^2} +\sum_{h,s,a}q_{s,a,h}^{\bp, \pi_t}\frac{HSL}{N_{t-1}(s,a,h)}\\
    &\le \sqrt{SL}\sqrt{\sum_{h,s,a}\frac{q_{s,a,h}^{\bp, \pi_t}}{N_{t-1}(s,a,h)}}\sqrt{\sum_{h,s,a}q_{s,a,h}^{\bp, \pi_t} \bp(s,a,h)^{\top}\left(\bar{\bV}_{t,h+1}-\ubar{\bV}_{t,h+1}\right)^2} +\sum_{h,s,a}q_{s,a,h}^{\bp, \pi_t}\frac{HSL}{N_{t-1}(s,a,h)}\\
    &\overset{(d)}{\le} \sqrt{SL}\sqrt{\sum_{h,s,a}\frac{q_{s,a,h}^{\bp, \pi_t}}{N_{t-1}(s,a,h)}}\sqrt{\sum_{h,s,a}q_{s,a,h}^{\bp, \pi_t}\frac{400H^4L^2S}{N_{t-1}(s,a,h)}} +\sum_{h,s,a}q_{s,a,h}^{\bp, \pi_t}\frac{HSL}{N_{t-1}(s,a,h)} \\
    &\le 21\sqrt{H^4S^2L^3}\sum_{h,s,a}\frac{q_{s,a,h}^{\bp, \pi_t}}{N_{t-1}(s,a,h)}\label{apdx_eq:lower_order}
\end{align}
where inequality (a) is due to \cref{apdx_lem:concen_tran}, inequality (b) is due to $\bar{V}_{t,h+1}(s')-V_{h+1}^{*}(s')\le H$, inequality (c) is due to Cauchy-Schwarz inequality, and inequality (d) is due to \cref{apdx_lem:cum_diff_onp}.

\textbf{Step 5: Putting all together and using CMAB-MT techniques in \cref{apdx_sec:CMAB-MT}}

Using \cref{apdx_eq:term_1}, \cref{apdx_eq:term_2} and \cref{apdx_eq:lower_order}, we have
\begin{align}
    \Delta_{\pi_t}&\le2\sqrt{H^2L}\sqrt{\sum_{h,s,a}\frac{q_{s,a,h}^{\bp, \pi_t}}{N_{t-1}(s,a,h)}} +2\sqrt{H^2L}\sqrt{\sum_{h,s,a}\frac{q_{s,a,h}^{\bp, \pi_t}\cdot\Delta_{\pi_t}}{N_{t-1}(s,a,h)}}\notag+9HL\sum_{h,s,a}\frac{q_{s,a,h}^{\bp, \pi_t}}{N_{t-1}(s,a,h)}
     \notag\\
    &+100 \sqrt{H^4SL^3}\sum_{h,s,a}\frac{q_{s,a,h}^{\bp, \pi_t}}{N_{t-1}(s,a,h)}+2\sqrt{H^2L}\sqrt{\sum_{h,s,a}\frac{q_{s,a,h}^{\bp, \pi_t}}{N_{t-1}(s,a,h)}}+2\sqrt{H^2L}\sqrt{\sum_{h,s,a}\frac{q_{s,a,h}^{\bp, \pi_t}\cdot\Delta_{\pi_t}}{N_{t-1}(s,a,h)}}\notag\\
    &+HL\sum_{h,s,a}\frac{q_{s,a,h}^{\bp, \pi_t}}{N_{t-1}(s,a,h)}+21\sqrt{H^4S^2L^3}\sum_{h,s,a}\frac{q_{s,a,h}^{\bp, \pi_t}}{N_{t-1}(s,a,h)}\\
    &\le 4\sqrt{H^2L}\sqrt{\sum_{h,s,a}\frac{q_{s,a,h}^{\bp, \pi_t}}{N_{t-1}(s,a,h)}} +4\sqrt{H^2L}\sqrt{\sum_{h,s,a}\frac{q_{s,a,h}^{\bp, \pi_t}\cdot\Delta_{\pi_t}}{N_{t-1}(s,a,h)}}
     +131 \sqrt{H^4S^2L^3}\sum_{h,s,a}\frac{q_{s,a,h}^{\bp, \pi_t}}{N_{t-1}(s,a,h)}\label{apdx_eq:RL_optimal_reg_decomp}
\end{align}

Let $c_1=4\times4\sqrt{H^2L}, c_2=4\times 4\sqrt{H^2L}, c_3=4\times 131 \sqrt{H^4S^2L^3}, c_4=4 \times 2H$. We define the four decomposed events as follows.
\begin{align}
    \cE'_{t,1}&=\left\{\Delta_{\pi_t}\le c_1\sqrt{\sum_{s,a,h:N_{t-1}(s,a,h)>0}\frac{q_{s,a,h}^{\bp,\pi_t}}{N_{t-1}(s,a,h)}}\right\},
    \cE'_{t,2}=\left\{\Delta_{\pi_t}\le c_2\sqrt{\sum_{s,a,h:N_{t-1}(s,a,h)>0}\frac{q_{s,a,h}^{\bp,\pi_t}\Delta_{\pi_t}}{N_{t-1}(s,a,h)}}\right\},\\
    \cE'_{t,3}&=\left\{\Delta_{\pi_t}\le c_3\sum_{s,a,h:N_{t-1}(s,a,h)>0}\frac{q_{s,a,h}^{\bp,\pi_t}}{N_{t-1}(s,a,h)}\right\},
     \cE'_{t,3}=\left\{\Delta_{\pi_t}\le c_4\sum_{s,a,h:N_{t-1}(s,a,h)=0}{q_{s,a,h}^{\bp,\pi_t}}\cdot2H\right\}.
\end{align}

By \cref{apdx_lem:decompose_filter_reg} and \cref{apdx_eq:RL_optimal_reg_decomp}, we have
\begin{align}
    \reg(T,\cE)\le \sum_{i=1}^4\reg(T,\cE'_{t,i})
\end{align}

Regarding $\reg(T,\cE'_{t,i})$ for $i=1,3,4$, we can apply similar analysis to that of steps 2,3,4 in \cref{apdx_sec:proof_of_CMAB-MT} respectively.

For $\reg(T,\cE'_{t,2})$, if $\cE'_{t,2}$ holds, then we have
\begin{align}
    \Delta_{\pi_t}\le c_2^2\sum_{s,a,h:N_{t-1}(s,a,h)>0}\frac{q_{s,a,h}^{\bp,\pi_t}}{N_{t-1}(s,a,h)}
\end{align}
which can be bounded exactly the same way as $\reg(T,\cE'_{t,3})$.

Using a similar analysis to \cref{apdx_sec:proof_of_CMAB-MT}, we have
\begin{align}\label{apdx_eq:gap_depnedent_reg}
    \reg(T,\cE_t)&\le \sum_{i \in [m]} \frac{2c_1^2}{\Delta_{i}^{\min} } (3+ \log K) + \sum_{i \in [m]} 2(c_2^2+c_3)  \left(1+\log\left (\frac{2(c_2^2+c_3)K}{\Delta_{i}^{\min}}\right)\right) + c_4m\\
    &=\sum_{s,a,h} \frac{512H^2L}{\Delta_{s,a,h}^{\min} } (3+ \log H) + \sum_{s,a,h} 1560H^2SL^{1.5}\left(1+\log\left (\frac{1560H^3SL^{1.5}}{\Delta_{s,a,h}^{\min}}\right)\right) + 8SAH^2\\
    &=O\left(\sum_{s,a,h} \frac{H^2L}{\Delta_{s,a,h}^{\min} } + \sum_{s,a,h}H^2SL^{1.5}\log\left (\frac{1}{\Delta_{s,a,h}^{\min}}\right)\right)
\end{align}

Similar to the analysis of \cref{apdx_sec:proof_of_CMAB-MT}, the gap-independent regret bound is $\tilde{O}(\sqrt{H^3SAT}+H^3S^{2}A)$ when considering the inhomogeneous episodic RL setting.

\subsection{Discussion about Gap-Dependent Regret Bound}\label{apdx_sec:proof_gap_dependent_bound}
In this section, we discuss the tightness of our gap-dependent bound in \cref{apdx_eq:gap_depnedent_reg}. Since we use a different definition of the gap, it is not directly comparable to the existing works such as \cite{simchowitz2019non}. Here we specify the value of $\Delta_{s,a,h}^{\min}=\min_{\pi\in \Pi: q^{\bp,\pi}_{s,a,h}>0, V_1^*(s_1)-V_1^{\pi}(s_1)>0} (V_1^*(s_1)-V_1^{\pi}(s_1))$, where we will omit the underlying transition probabilities $\bp$ for $V$ and $Q$ functions. Similar to \citet{simchowitz2019non}, we first divide $(s,a,h)$ into two parts: $\cZ_{sub}=\{(s,a,h): \pi^*(s,h)\neq a\}$ and $\cZ_{opt}=\cS\times\cA\times [H]-\cZ_{sub}$.
We use $\text{gap}(s,a,h)=V^{*}_h(s)-Q^{*}_h(s,a)$ to denote the state-dependent suboptimality gap, and  $\text{gap}_{\min}=\min_{s,a,h}\{\text{gap}(s,a,h):\text{gap}(s,a,h)>0\}$ the minimum gap. Let $q^*=\min_{\pi, (s,a,h)}\{q_{s,a,h}^{\bp,\pi}:q_{s,a,h}^{\bp,\pi}>0\}$ be the minimum occupancy measure for any policy $\pi$ and state-action-step pair $(s,a,h)$.

We use the following performance difference lemma for episodic MDP as follows, which is slightly different from Lemma 1.16 for infinite horizon discounted MDP in \citet{agarwal2019reinforcement}:
\begin{lemma}[Performance difference lemma for episodic MDP]
    For any MDP with transition kernel $\bp$ and for any two policies $\pi$ and $\pi'$, the difference of their value function starting from the initial state $s_1$ can be bounded by
\begin{align}
    V_1^{\pi}(s_1)-V_1^{\pi'}(s_1)=\sum_{s,a,h}q_{s,a,h}^{\bp,\pi'}\left[V^{\pi}_h(s)-Q^{\pi}_h(s,a)\right]
\end{align}
\end{lemma}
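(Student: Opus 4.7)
The plan is to follow the standard telescoping (Kakade--Langford) argument specialized to finite-horizon episodic MDPs, and then negate to match the sign convention of the statement. First, I would write $V_1^{\pi'}(s_1) = \E_{\tau\sim\pi'}\!\left[\sum_{h=1}^H r(s_h, a_h, h)\right]$, where the expectation is taken over trajectories $(s_h,a_h)_{h\in[H]}$ generated under transition $\bp$ and policy $\pi'$ starting from the fixed initial state $s_1$.

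Next, I would introduce a telescope along the $\pi'$-trajectory using the convention $\bV^{\pi}_{H+1}\equiv \boldsymbol{0}$. Since $s_1$ is deterministic, one has $V_1^{\pi}(s_1) = \E_{\tau\sim\pi'}\!\left[V^{\pi}_1(s_1) - V^{\pi}_{H+1}(s_{H+1})\right] = \E_{\tau\sim\pi'}\!\left[\sum_{h=1}^H \bigl(V^{\pi}_h(s_h) - V^{\pi}_{h+1}(s_{h+1})\bigr)\right]$. Subtracting this from the expression for $V_1^{\pi'}(s_1)$ gives
\begin{align*}
V_1^{\pi'}(s_1) - V_1^{\pi}(s_1) = \E_{\tau\sim\pi'}\!\left[\sum_{h=1}^H \bigl(r(s_h,a_h,h) + V^{\pi}_{h+1}(s_{h+1}) - V^{\pi}_h(s_h)\bigr)\right].
\end{align*}

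Then I would condition on $(s_h,a_h)$ inside each summand. Since $s_{h+1}\sim\bp(s_h,a_h,h)$ under the trajectory distribution, the Bellman equation $Q^{\pi}_h(s,a) = r(s,a,h) + \bp(s,a,h)^{\top}\bV^{\pi}_{h+1}$ yields $\E\!\left[r(s_h,a_h,h) + V^{\pi}_{h+1}(s_{h+1})\mid s_h,a_h\right] = Q^{\pi}_h(s_h,a_h)$. Writing the outer expectation as a sum over $(s,a,h)$ weighted by the occupancy measure $q^{\bp,\pi'}_{s,a,h} = \Pr_{\pi',\bp}[s_h=s, a_h=a]$ gives $V_1^{\pi'}(s_1) - V_1^{\pi}(s_1) = \sum_{s,a,h} q^{\bp,\pi'}_{s,a,h}\bigl(Q^{\pi}_h(s,a) - V^{\pi}_h(s)\bigr)$, and negating both sides produces the stated identity.

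The argument is entirely routine and there is no substantive obstacle. The only subtlety worth flagging is the sign convention: the statement uses $V^{\pi}_h - Q^{\pi}_h$, which is the negative of the standard advantage function, so the final negation step is essential to match the displayed form.
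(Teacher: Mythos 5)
Your proof is correct and is essentially the same argument as the paper's: both expand $V_1^{\pi'}(s_1)$ as the $\pi'$-occupancy-weighted sum of rewards, substitute the Bellman equation $r(s,a,h)=Q_h^{\pi}(s,a)-\bp(s,a,h)^{\top}\bV_{h+1}^{\pi}$, and telescope. The only difference is notational — you telescope in trajectory-expectation form while the paper performs the equivalent index shift on occupancy measures via $\sum_{s,a}q^{\bp,\pi'}_{s,a,h}p(s'\mid s,a,h)=q^{\bp,\pi'}_{s',h+1}$.
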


\begin{proof}
Let $q_{s,h}^{\bp,\pi}$ be the probability of visiting state $s$ at step $h$ following policy $\pi$.
    \begin{align}
        V_1^{\pi}(s_1)-V_1^{\pi'}(s_1)&=V_1^{\pi}(s_1)-\sum_{s,a,h}q^{\bp,\pi'}_{s,a,h} r(s,a,h)\\
        &\overset{(a)}{=}V_1^{\pi}(s_1)-\sum_{s,a,h}q^{\bp,\pi'}_{s,a,h} \left[Q_h^{\pi}(s,a)-\sum_{s'}p_h(s'\mid s,a,h)V_{h+1}^{\pi}(s')\right]\\
        &=V_1^{\pi}(s_1)+\sum_{s,a,h}\sum_{s'}q^{\bp,\pi'}_{s,a,h}p_h(s'\mid s,a,h)V_{h+1}^{\pi}(s')
        -\sum_{s,a,h}q^{\bp,\pi'}_{s,a,h}Q_h^{\pi}(s,a)\\
        &\overset{(b)}{=}V_1^{\pi}(s_1)+\sum_{s',h}q^{\bp,\pi'}_{s',h+1}V_{h+1}^{\pi}(s')-\sum_{s,a,h}q^{\bp,\pi'}_{s,a,h}Q_h^{\pi}(s,a)\\
        &\overset{(c)}{=}\sum_{s',h}q^{\bp,\pi'}_{s',h}V_{h}^{\pi}(s')-\sum_{s,a,h}q^{\bp,\pi'}_{s,a,h}Q_h^{\pi}(s,a)\\
        &=\sum_{s,a,h}q_{s,a,h}^{\bp,\pi'}\left[V^{\pi}_h(s)-Q^{\pi}_h(s,a)\right]
    \end{align}
\end{proof}
where equality (a) is due to the Bellman equation $Q_h^{\pi}(s,a)=r(s,a,h)+\sum_{s'}p(s'\mid s,a,h)V_{h+1}^{\pi}(s')$, equality (b) is due to $\sum_{s,a}q^{\bp,\pi}_{s,a,h}p(s'\mid s,a,h)=q^{\bp,\pi}_{s',h+1}$, and equality (c) is due to $q^{\bp,\pi}_{s_1,1}=1$ and $q^{\bp,\pi}_{s,1}=0$ for $s\neq s_1$.

Thus $V_1^{*}(s_1)-V_1^{\pi}(s_1)=\sum_{s,a,h}q_{s,a,h}^{\bp,\pi}\text{gap}(s,a,h)$, and for $(s,a,h)\in \cZ_{\text{sub}}$, we have $\Delta_{s,a,h}^{\min}=\min_{\pi\in \Pi: q^{\bp,\pi}_{s,a,h}>0, V_1^*(s_1)-V_1^{\pi}(s_1)>0} (V_1^*(s_1)-V_1^{\pi}(s_1))\ge q^* \cdot\text{gap}(s,a,h)$. For $(s,a,h)\in \cZ_{\text{opt}}$, we have $\Delta_{s,a,h}^{\min}\ge \text{gap}_{\min}$ since in the worst case, $\pi$ allocates all the triggering probability $q_{s,a,h}^{\bp,\pi}$ to  the $(s,a,h)$ that attains $\text{gap}_{\min}$. Now based on \cref{apdx_eq:gap_depnedent_reg} and the above reasoning, we have
\begin{align}
    \reg(T)&\le {O}\Bigg(\sum_{(s,a,h)\in \cZ_{\text{sub}}} \frac{H^2}{q^* \cdot\text{gap}(s,a,h) }\log (SAHT)+ \frac{H^2|\cZ_{\text{opt}}|}{\text{gap}_{\min}}\log (SAHT) \notag\\
    &+H^3S^2A\log^{1.5} (SAHT)\log\left (\frac{1}{\text{gap}_{\min}}\log^{1.5} (SAHT)\right)\Bigg)
\end{align}
which matches the regret bound of \cite{simchowitz2019non} up to a factor of $1/q^*$.
 
\section{Analysis for PMC-GD in Section~\ref{sec:CDP}}\label{apdx_sec:beyond_RL}
\subsection{Proof of Lemma~\ref{lem:OCD_smooth}}
\begin{align}
    \abs{r(\pi;\tilde{\bp})-r(\pi;\bp)}&=\abs{\sum_{v \in V}\left[\left(1-\prod_{u\in \pi}\left(1-\tilde{p}(u,v)\right)\right)-\left(1-\prod_{u\in \pi}\left(1-p(u,v)\right)\right) \right] }\\
    &=\abs{\sum_{v \in V}\left[\prod_{u\in \pi}\left(1-p(u,v)\right)-\prod_{u\in \pi}\left(1-\tilde{p}(u,v)\right) \right] }\\
    &\le \sum_{v\in V} \abs{\prod_{u\in \pi}\left(1-p(u,v)\right)-\prod_{u\in \pi}\left(1-\tilde{p}(u,v)\right)  }\\
    &\overset{(a)}{\le} \sum_{u\in\pi}\norm{\tilde{\bp}(u,\cdot)-\bp(u,\cdot)}_1
\end{align}
where inequality (a) is due to the fact that let $(a_1, ..., a_{|\pi|})\overset{\df}{=}(1-p(u,v))_{u\in \pi}$, $(b_1, ..., b_{|\pi|})\overset{\df}{=}(1-\tilde{p}(u,v))_{u\in \pi}$, $\abs{\prod_{i=1}^{|\pi|}a_i-\prod_{i=1}^{|\pi|}b_i}=\abs{\sum_{i=1}^{|\pi|}\prod_{j=1}^{i-1}a_j\cdot(a_i-b_i)\cdot\prod_{k=i+1}^{|\pi|}b_k}\le \sum_{i=1}^{|\pi|}|a_i-b_i|$.

For the pseudo-reward function $\bar{r}_t(\pi;\tilde{\bp})=r(\pi;\hat{\bp}_{t-1})+\sum_{u\in \pi}\norm{\tilde{\bp}(u,\cdot)-\hat{\bp}_{t-1}(u,\cdot)}_1$, we also have for all $\pi,\bp,\tilde{\bp}$, it holds
\begin{align}
\abs{\bar{r}_t(\pi;\tilde{\bp})-\bar{r}_t(\pi;\bp)}&=\abs{\sum_{u\in\pi}\norm{\tilde{\bp}(u,\cdot)-\hat{\bp}_{t-1}(u,\cdot)}_1 -\norm{\bp(u,\cdot)-\hat{\bp}_{t-1}(u,\cdot)}_1}\\
    &\le \sum_{u\in\pi}\abs{\norm{\tilde{\bp}(u,\cdot)-\hat{\bp}_{t-1}(u,\cdot)}_1 -\norm{\bp(u,\cdot)-\hat{\bp}_{t-1}(u,\cdot)}_1}\\
    &\overset{(a)}{\le} \sum_{u\in\pi}\norm{\tilde{\bp}(u,\cdot)-\bp(u,\cdot)}_1 \label{apdx_eq:pseudo_smooth}
\end{align}
where inequality (a) is due to $-\norm{\bx-\by}_1\le \norm{\bx}_1-\norm{\by}_1\le \norm{\bx-\by}_1$ by triangle inequality.

\subsection{Proof of Theorem~\ref{thm:reg_PMC_GD}}
We define the concentration event as:
\begin{align}
    \cE\overset{\df}{=} \Big[&\norm{\hat{\bp}_{t-1}(u,\cdot)-\bp(u,\cdot)}_1\le \sqrt{\frac{2|V|\log\left(\frac{|U||V|T}{\delta'}\right)}{N_{t-1,u}}},\text{ for any }u\in U, t\in [T]\Big]\label{apdx_eq:event_PMC_GD}
\end{align}

    Suppose the concentration event $\cE$ holds with probability $\delta'=1/(2T)$ as in \cref{apdx_lem:concen_tran}.
    Let $\alpha=1-1/e$ and $L=\log (|U||V|T)$. Also we can initialize each counter by $N_{t_0,u}=1$ using $t_0=|U|$ rounds which pays an extra $O(k|U|)$ regret. Now we have
    \begin{align}
    &\Delta_{\pi_t}=\alpha \cdot r(\pi^*;\bp)- r(\pi_t;\bp)\\
     &\overset{(a)}\le\alpha \cdot \bar{r}_t(\pi^*;\bp)- r(\pi_t;\bp)\\
    &\overset{(b)}{\le}\alpha\bar{r}_t(\pi^*;\tilde{\bp}_t)-r(\pi_t;\bp)\\
    &\overset{(c)}{\le}\bar{r}_t(\pi_t;\tilde{\bp}_t)-r(\pi_t;\bp)\\
    &{=} \bar{r}_t(\pi_t;\tilde{\bp}_t)-\bar{r}_t(\pi_t;\bp)+\bar{r}_t(\pi_t;\bp)-r(\pi_t;\bp)\\
     &\overset{(d)}\le \sum_{u\in\pi_t}\norm{\tilde{\bp}_t(u,\cdot)-\bp(u,\cdot)}_1+\bar{r}_t(\pi_t;\bp)-r(\pi_t;\bp)\\
    &\overset{(e)}\le 4\sum_{u\in \pi_t}\sqrt{\frac{|V|L}{N_{t-1,u}}}+\underbrace{\bar{r}_t(\pi_t;\bp)-r(\pi_t;\bp)}_{\text{Additional Term}}\\
    &\overset{(f)}{\le} 8\sum_{u\in \pi_t}\sqrt{\frac{|V|L}{N_{t-1,u}}},
\end{align}
where inequality (a) is due to $\bar{r}_t(\pi;\bp)\ge r(\pi;\bp)$ for any $\pi,\bp$ by \cref{lem:OCD_smooth},
inequality (b) is due to the definition of $\tilde{\bp}_t$ in \cref{alg:oracle_PMC_GD}, inequality (c) is due to $\pi_t$ is a $(1-1/e,1)$-approximate solution to the problem $\argmax_{|\pi|\le k}\bar{r}_t(\pi;\tilde{\bp}_t)$, 
inequality (d) is by \cref{apdx_eq:pseudo_smooth}, inequality (e) is due to \cref{apdx_eq:event_PMC_GD} and \cref{eq:conf_set_PMC_GD}, and inequality (f) is due to the following inequality to deal with additional regret term brought by pseudo-reward $\bar{r}_t(\pi;\bp)$.
\begin{align}
    \text{Additional Term}&=\bar{r}_t(\pi_t;\bp)-r(\pi_t;\bp)\\
    &=\sum_{u\in \pi_t}\norm{\bp(u,\cdot)-\hat{\bp}_{t-1}(u,\cdot)}_1+r(\pi_t;\hat{\bp}_{t-1})-r(\pi_t;\bp)\\
    &\overset{(a)}{\le} 2\sum_{u\in \pi_t}\norm{\bp(u,\cdot)-\hat{\bp}_{t-1}(u,\cdot)}_1\\
    &\overset{(b)}{\le}4\sum_{u\in \pi_t}\sqrt{\frac{|V|L}{N_{t-1,u}}}\\
\end{align}
where inequality (a) is due to \cref{lem:OCD_smooth}, and inequality (b) is due to event $\cE$.

Compared with \cref{apdx_eq:CMAB-MT_analysis_d}, it is equivalent to have $F_{t,u}=8\sqrt{|V|L}, G_{t,u}=I_{t,u}=J_{t,u}=0$ and
following step 2 in \cref{apdx_sec:proof_of_CMAB-MT} where $\bar{F}=64k|V|L, \bar{G}=\bar{I}=\bar{J}=0$, we have gap-dependent regret
\begin{align}
Reg(T)=O\left(\sum_{u\in U}\frac{k|V|\log (|U||V|T)}{\Delta_{u}^{\min}}\right)
\end{align}
and gap-independent regret
\begin{align}
Reg(T)=O\left(\sqrt{k|V||U|T\log (|U||V|T)}\right).
\end{align}

\end{document}